\documentclass{article} 
\usepackage{iclr2026_conference,times}
\usepackage{graphicx}
\usepackage{subcaption}
\usepackage{amsmath}
\usepackage{amssymb}
\usepackage{amsthm}
\usepackage{algorithm}
\usepackage[noend]{algpseudocode}
\usepackage{float}
\iclrfinalcopy

\usepackage{amsmath,amsfonts,bm}

\def\eqref#1{equation~\ref{#1}}

\def\1{\bm{1}}

\def\eps{{\varepsilon}}

\DeclareMathAlphabet{\mathsfit}{\encodingdefault}{\sfdefault}{m}{sl}
\SetMathAlphabet{\mathsfit}{bold}{\encodingdefault}{\sfdefault}{bx}{n}

\usepackage[nohints]{minitoc}

\usepackage[colorlinks=true,urlcolor=blue,linkcolor=red,citecolor=teal,backref=page]{hyperref}
\usepackage{cleveref}
\usepackage{url}
\usepackage{dsfont}

\newtheorem{lemma}{Lemma}
\newtheorem{theorem}{Theorem}
\newtheorem{remark}{Remark}

\newtheorem{corollary}{Corollary}
\makeatletter
\newtheorem*{rep@theorem}{\rep@title}
\newcommand{\newreptheorem}[2]{%
	\newenvironment{rep#1}[1]{%
		\def\rep@title{\textbf{#2} \ref{##1}}%
		\begin{rep@theorem}}%
		{\end{rep@theorem}}}
\makeatother
\newreptheorem{theorem}{Theorem}
\newreptheorem{lemma}{Lemma}
\newreptheorem{remark}{Remark}
\newtheorem{assumption}{Assumption}
\newcommand{\expectation}{\mathbb{E}}
\newcommand{\variance}{\mathbb{V}}

\newcommand{\bigO}{\mathcal{O}}
\newcommand{\bigint}{\scalebox{1.6}{$\displaystyle\int$}}
\title{Performative Privacy: When Differential Privacy Maximizes Utility}

\author{Uddalak Mukherjee \\
Dauphine PSL,\\
\texttt{uddalakmukherjee49@gmail.com}
\And
Edwige Cyffers \\
CNRS Dauphine PSL,\\
\texttt{edwige.cyffers@cnrs.fr} 
\And
Yann Chevaleyre \\
Dauphine PSL,\\
\texttt{yann.chevaleyre@lamsade.dauphine.fr } \\
}

\begin{document}

\maketitle
\doparttoc
\faketableofcontents


\begin{abstract}
Privacy-preserving learning is often motivated by the idea that protecting users' data can preserve trust and thus participation, improving utility in the long term. However, this claim has not been formalized so far. In parallel, performative learning provides a framework for studying learning systems whose deployment affects the data they later observe. In this work, we bring these two perspectives together and introduce \emph{performative privacy}, where data leakage reduces future participation. We study a simple model where agents repeatedly contribute data for mean estimation but may leave the system when their data is leaked. Privacy is implemented through differentially private mechanisms, creating a trade-off between estimation noise and future participation. We show, through a theoretical study of the dynamics and numerical experiments, that a finite privacy budget can outperform non-private estimation in the long term when the feedback loop between leakage and participation is sufficiently strong. This provides first evidence that differential privacy can be optimal not only as a protection mechanism, but also from the perspective of long-term utility.
\end{abstract}


\section{Introduction}

Protecting privacy in machine learning is motivated by moral, ethical, and practical reasons \citep{solove2008understanding}. Access to sensitive data, such as health records, is sometimes illegal without adequate protections \citep{GDPR2016a}. However, in many scenarios, privacy protection is motivated by users’ willingness to participate in training, based on the risks it entails for their business or their own privacy if their data is leaked \citep{nissenbaum2004privacy, ibm_cost_data_breach_2025}. Users who generate training data can be seen as agents who decide whether to participate depending on the risk that their data is leaked. A natural feedback loop to consider is thus that, if a user has their data leaked, they may no longer be willing to participate in the future, harming future rounds of training. This is often claimed informally: privacy-preserving machine learning should optimize long-term utility by maintaining users’ trust, and thus their data \citep{dworkAlgorithmicFoundationsDifferential2013}. Surprisingly, this claim that private learning can provide optimal long-term utility has, to the best of our knowledge, never been formalized. We propose first results in this direction.

The gold standard for protecting data is differential privacy (DP)\citep{dwork:2006:calibrating}. DP formalizes the following promise: a single data point should not influence the output of the algorithm too much, ensuring that any attacker with access to the output cannot gain significant information about the data point by observing the output. This protection, however, can only be achieved through randomization, often implemented by injecting noise into the algorithm. The level of privacy protection is then summarized by the privacy budget $\eps$: the smaller the privacy budget, the stronger the privacy guarantee, and the larger the noise required, which decreases the accuracy of the model, creating a privacy-utility trade-off. The right value for $\eps$ is often debated in the community, based on ad hoc perceptions of data sensitivity and risk \citep{Ponomareva2023, Cyffers2025SettingI, Lee2011}. Here, we take an orthogonal perspective, considering the optimal privacy budget to be the one that maximizes long-term utility.

Taking into account how models change the data over time is the motivation of \emph{performative learning} \citep{perdomo_performative}, where the data distribution is parametrized by the same parameter as the model, and the goal is to optimize the overall system rather than find the best model for a fixed distribution, as usually done in supervised learning. Performative learning is an increasingly popular framework, motivated by examples where users may game the system by modifying their features to obtain more favorable outcomes, by settings involving generative models where data can be contaminated over time, or by scenarios where sampling can reinforce existing biases, raising long-term fairness concerns \citep{hardt2023performative}. Feedback loops due to privacy leakage have not been studied so far, but they constitute a natural dynamic to study, as it is well known that data leakage can cause trust issues and reduce participation, or even lead participants to adopt malicious behaviors \citep{ibm_cost_data_breach_2025}. Relatedly, the term \emph{performative privacy} was coined by legal scholars \citep{skinner-thompson2017performative} to describe the actions agents may take to exercise their right to privacy, by making unavailable information that could otherwise be available by default.

In this work, we design and study a simple model that allows us to quantify regimes in which a finite privacy budget maximizes utility in a repeated mean estimation problem. Rather than using the privacy budget directly as a proxy for the decision to participate, we use whether the data point has been leaked in the feedback loop. This makes our setting applicable to non-differentially private mechanisms as well and avoids focusing on a meta-interpretation of the privacy budget, instead focusing on actual algorithm outputs. More precisely, our contributions are the following:

\begin{itemize}
\item We propose a new object of study, the long-term utility of repeated learning tasks involving privacy-sensitive data, which we call \emph{performative privacy}.
\item We first study a simple setting with a set of agents who each hold a private binary value at each round and may withdraw from the learning process if their correct private value is used by the system, and in which additional agents may join at each step.
\item We extend this setting to the computation of a $d$-dimensional average over time, with a more realistic definition of data leakage based on membership inference attacks.
\item We show that a finite privacy budget can be optimal in both scenarios, proving that privacy can maximize utility in the presence of a feedback loop in terms of participation.
\end{itemize}

\section{Related work}

While a direct link between data leakage and future participation has not yet been formalized, differential privacy originates from \citet{warner1965randomized}, where randomization aims to reduce bias in data collection, and thus improve data quality. Inspirations from sociology, notably through the contextual integrity framework, have also been numerous \citep{nissenbaum2004privacy, Cyffers2025SettingI, IndividualSensitivityPreprocessing}. Considering data producers as privacy-aware agents is a popular framework, sometimes formalized as a data market \citep{10.1145/1993574.1993605,Pai2013, Fallah2022OptimalAD}, in particular for deciding participation in surveys \citep{10.1145/2229012.2229076, 10.1007/978-3-642-35311-6_28}. The choice of the privacy budget itself has been analyzed through economic and mechanism-design lenses \citep{hsu2014differential, nissim2012privacy, cummings2015accuracy, ligett2017accuracy}, possibly personalizing it across users \citep{Jorgensen2015}. However, privacy awareness is often translated into a supposedly known optimal privacy budget, computed from expected utility and privacy preferences \citep{Cummings2022OptimalDA, Syomantak}, rather than modeled as a dynamical system. Participation has also been studied as a strategic decision in collaborative and federated learning \citep{donahue2021model, blum2021one, karimireddy2022mechanisms}, without the link to privacy leakage that we consider. On the empirical side, data breaches measurably affect customer behavior and firm value \citep{ibm_cost_data_breach_2025}, and the way privacy guarantees are communicated changes users' willingness to share their data \citep{xiong2020towards, cummings2021need}, supporting the feedback loop we model.

\paragraph{Performativity in Trustworthy Machine Learning.}
Performative learning \citep{perdomo_performative} is motivated by the social impact of machine learning deployments \citep{hardt2023performative, hardt2022performative}, with dedicated optimization algorithms \citep{mendler2020stochastic} and regret analyses \citep{jagadeesan2022regret, zhu2025lookahead}, and with links to robustness \citep{cyffers2024optimal} and fairness \citep{Somerstep2024}. Closest to our setting are retention dynamics, where user groups shrink after repeatedly receiving poor utility \citep{hashimoto2018fairness, zhang2019group}; we study the privacy analogue, where users leave after a leakage event. As the population size depends on the whole history of deployments, our model is an instance of stateful performativity \citep{brown2022performative} rather than of the memoryless distribution map of \citet{perdomo_performative}. Self-reinforcing feedback loops have also been documented in predictive policing \citep{ensign2018runaway}, dataset bias amplification \citep{taori2023data}, and generative model collapse \citep{shumailov2024ai}. Interestingly, participants can also contribute to steering the model toward desirable properties, as studied in collective action \citep{pmlr-v202-hardt23a}. To the best of our knowledge, there is no theoretical work, nor simulated feedback-loop model, tackling privacy concerns from this perspective so far.

\paragraph{Membership Inference Attacks.}
Membership inference attacks (MIAs) were introduced in the context of genomic studies by \citet{Homer2008}, who demonstrated that aggregate statistics can reveal an individual's participation in a dataset. MIAs have played an important role in motivating DP and establishing lower bounds for private data analysis \citep{BUV2014,Nasr2021}. In machine learning, numerous MIAs have been proposed under different threat models and access assumptions \citep{shokri2017membership,pmlr-v97-sablayrolles19a,belletcausalmia, Carlini2021MembershipIA, Yeom2017PrivacyRI, ye2022enhanced}. These attacks are also widely used to empirically audit machine-learning systems and evaluate privacy defenses \citep{jagielski,aerni2024,jayaraman2019evaluating,thudi2022bounding}. The membership inference game is naturally cast in the hypothesis-testing formulation of DP \citep{wasserman2010statistical, kairouz2015composition}, culminating in Gaussian DP \citep{dong2022gaussian}, whose privacy-loss random variable is the leakage statistic we analyze in \Cref{sec:miaGaussian}. Beyond measuring average privacy leakage, recent work has shown that privacy risk can vary substantially across individuals \citep{leemann2024is,azize2025some}, motivating per-instance guarantees and individual privacy accounting \citep{wang2019perinstance, feldman2021individual, yu2023individual, koskela2023individual}, and that worst-case privacy guarantees may be overly pessimistic in finite-sample regimes \citep{haghifam2025samplecomplexitymembershipinference}. Most closely related to our setting, recent work has studied membership inference under repeated interactions \citep{debsequentialMIA}.

\section{Warm-up: estimation of Bernoulli parameters}
\label{section:randomized_response}

In this section, we derive the behaviors arising in the estimation of a Bernoulli parameter when the ground truth changes at each time step. The Bernoulli parameter is estimated from independent and identically distributed samples that are locally privatized using randomized response (RR). An answer is considered leaked whenever the true answer is preserved by RR.

\paragraph{Learning task}
We consider a repeated mean estimation problem over $T$ time steps. At each step $t$, the learner computes an estimate $\hat{\mu}_t$ of the true parameter $\mu_t$ of an underlying Bernoulli distribution. The learner has access to possibly privatized responses $(Y_{i,t})_{i=1}^{N_t}$ of the users' samples $(X_{i,t})_{i=1}^{N_t} \in \{0, 1\}^{N_t}$, drawn i.i.d. from $\mathcal{B}(\mu_t)$.
The learner aims to minimize the mean squared error of the parameter estimation at the final horizon:
\[ \mathcal{E} = \mathbb{E}\big[(\mu_T - \hat{\mu}_T)^2\big]
    \]
Alternatively, another objective, which we develop in the next section, is to minimize the error over the whole trajectory through a regret objective:
\[ \mathcal{R}_T = \sum_{t=1}^T \mathbb{E}(\mu_t - \hat{\mu}_t)^2
    \]

\paragraph{Randomized response}
We consider local differential privacy, where anyone can access $(Y_{i,t})_{i=1}^{N_t}$, and privacy is ensured at the user level via randomized response \citep{dworkAlgorithmicFoundationsDifferential2013,ioannidis2024randomized,wang2016using}. We assume that all users have the same privacy parameter $\eps$ and that this budget remains constant over time. We parameterize by $\beta \in [0, \frac{1}{2}]$ the randomized response:
\begin{align*}
Y_{i,t} \triangleq \begin{cases} X_{i,t}, & \text{with prob. } \frac{1}{2} + \beta \\ 1 - X_{i,t}, & \text{with prob. } \frac{1}{2} - \beta \end{cases}
\end{align*}
With this definition, the learner's best unbiased estimator is given by
\begin{align}\label{eq:unbiased_estimator}
\hat \mu_t \triangleq \frac{\hat\nu_t-\frac{1}{2}+\beta}{2\beta} \quad \text{with} \quad \hat{\nu}_t \triangleq \frac{1}{N_t}\sum_{i=1}^{N_t} Y_{i,t}. 
\end{align}
\begin{lemma}\label{lemm:eps_ldp_beta}
For any given privacy budget $\eps > 0$, setting the perturbation parameter to
\begin{align}\label{eq:eps_ldp_beta}
    \beta = \frac{1}{2} \tanh\left(\frac{\eps}{2}\right)
\end{align}
guarantees that the randomized response mechanism satisfies $\eps$-local differential privacy ($\eps$-LDP).
\end{lemma}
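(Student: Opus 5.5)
We verify the definition of $\eps$-LDP directly for the one-bit mechanism: for all inputs $x, x' \in \{0,1\}$ and every output $y \in \{0,1\}$,
\[ \Pr[Y = y \mid X = x] \le e^{\eps} \Pr[Y = y \mid X = x']. \]
Since the input and output spaces are binary, it suffices to check outputs one at a time. The mechanism treats $x = 0$ and $x = 1$ symmetrically, so every likelihood ratio is one of three values. It equals $1$ when $x = x'$, equals $\frac{1/2+\beta}{1/2-\beta}$ when $y = x \neq x'$, and equals its reciprocal when $y = x' \neq x$. The reciprocal is at most $1$ because $\beta \ge 0$.

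The proof therefore reduces to showing $\frac{1/2+\beta}{1/2-\beta} \le e^{\eps}$ under the choice \eqref{eq:eps_ldp_beta}. We substitute $\beta = \frac{1}{2}\tanh(\eps/2)$ and simplify:
\[ \frac{1/2+\beta}{1/2-\beta} = \frac{1+\tanh(\eps/2)}{1-\tanh(\eps/2)} = \frac{\cosh(\eps/2)+\sinh(\eps/2)}{\cosh(\eps/2)-\sinh(\eps/2)} = \frac{e^{\eps/2}}{e^{-\eps/2}} = e^{\eps}. \]
So the worst-case ratio equals $e^{\eps}$ exactly. The bound holds with equality, which also shows the choice of $\beta$ is tight.

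Two remaining points need checking. First, $\beta$ must be admissible, i.e. $\beta \in [0,\frac{1}{2})$. This holds because $\tanh(\eps/2) \in (0,1)$ for $\eps > 0$, so every probability is well defined and the denominator $\frac{1}{2}-\beta$ is strictly positive. Second, the guarantee holds per user and per round. Each user's response $Y_{i,t}$ depends only on their own $X_{i,t}$ through independent randomization, so this is exactly the local model. No step is a real obstacle; the only care needed is the hyperbolic identity $\frac{1+\tanh u}{1-\tanh u} = e^{2u}$.
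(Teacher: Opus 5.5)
Your proof is correct and follows the same route as the paper: both reduce $\eps$-LDP of randomized response to the single worst-case ratio $\frac{1/2+\beta}{1/2-\beta}$ and rely on the identity $\frac{1+\tanh(\eps/2)}{1-\tanh(\eps/2)} = e^{\eps}$. The paper solves $\frac{1/2+\beta}{1/2-\beta}=e^{\eps}$ for $\beta$, while you substitute $\beta$ and verify the bound, which is the direction the lemma actually asserts; your version is also slightly more complete, since it checks every input/output pair and that $\beta \in [0,\tfrac12)$.
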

The proof of this lemma and the derivation of the estimator formula are deferred to Appendix~\ref{app:proofs}.

\paragraph{Population evolution}
The population changes according to two phenomena: new users can join the training, and participating users can leave the training in reaction to data leakage. We define a leakage event as a response remaining unperturbed, i.e., $X_{i,t-1} = Y_{i,t-1}$. In case of data leakage, a participant leaves the training with a fixed probability $q$, which we assume is independent of time and of the user. This model has the limitation of having no memory -- if a user stays despite a leakage event at a given step, they do not become more likely to leave after the next leakage event -- but this allows us to minimize the number of model parameters. For the inclusion of new participants, we assume that each participant whose data is not leaked at a given time can recruit an additional participant with probability $r$. Another possible update rule is to consider growth directly proportional to the total number of users, regardless of whether their data was leaked.\footnote{We call this the agnostic model and study its behavior in Appendix \ref{app:total_model}.}

\begin{lemma}\label{lemm:unleaked_growth}
    The evolution of the number of participants is given by:
    \begin{equation}
        \mathbb{E}[N_t|N_{t-1}] 
    = N_{t-1} \left(1-q\left(\frac{1}{2}+\beta\right) + r\left(\frac{1}{2}-\beta\right)\right)
    \end{equation}
\end{lemma}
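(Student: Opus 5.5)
The plan is to write $N_t$ as a sum of per-participant contributions and then apply linearity of expectation conditional on $N_{t-1}$. Each participant $i$ at step $t-1$ contributes a random amount $Z_i \in \{0,1,2\}$ to $N_t$:
\begin{itemize}
\item $Z_i = 0$ if the participant's data is leaked and they leave;
\item $Z_i = 1$ if they stay and do not recruit;
\item $Z_i = 2$ if they stay and recruit one new participant.
\end{itemize}
This gives $N_t = \sum_{i=1}^{N_{t-1}} Z_i$, so $\mathbb{E}[N_t\mid N_{t-1}] = N_{t-1}\,\mathbb{E}[Z_1]$. This step uses only that the $Z_i$ are identically distributed, not that they are independent. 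That holds because the randomized response coins are drawn with the same $\beta$ for every user, and $q, r$ are user-independent.

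Next I compute $\mathbb{E}[Z_1]$ by conditioning on the leakage event $L = \{Y_{1,t-1} = X_{1,t-1}\}$. By the definition of randomized response, $\Pr(L) = \tfrac12+\beta$, and this is independent of the value of $X_{1,t-1}$. The modelling rules determine the two conditional expectations:
\begin{itemize}
\item On $L$, the participant leaves with probability $q$ and there is no recruitment, so $\mathbb{E}[Z_1\mid L] = 1-q$.
\item On $L^c$, the participant stays and recruits with probability $r$, so $\mathbb{E}[Z_1\mid L^c] = 1+r$.
\end{itemize}
Combining these,
\[
\mathbb{E}[Z_1] = \left(\tfrac12+\beta\right)(1-q) + \left(\tfrac12-\beta\right)(1+r) = 1 - q\left(\tfrac12+\beta\right) + r\left(\tfrac12-\beta\right),
\]
which is exactly the stated formula.

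The only delicate point is interpretive rather than technical: the model must be read so that departures and recruitment act only on the cohort present at $t-1$. In particular, newly recruited participants are counted in $N_t$ but do not themselves leak, leave, or recruit within the same step. Under that reading there are no other hidden dependencies, because conditioning on $N_{t-1}$ fixes the number of summands, and linearity does not need independence across users.
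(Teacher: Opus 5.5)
Your proof is correct and is essentially the paper's argument. The paper writes $N_t = N_{t-1} - D_t + A_t$ with $D_t \sim \mathrm{Binomial}(N_{t-1}, q(\tfrac12+\beta))$ and $A_t \sim \mathrm{Binomial}(N_{t-1}, r(\tfrac12-\beta))$ and takes conditional expectations; this is your per-participant decomposition $Z_i = 1 - d_i + a_i$ (with $d_i, a_i$ the departure and recruitment indicators) summed over $i$. Your version is slightly cleaner because it makes explicit that the identity is exact by linearity alone, whereas the paper's appeal to binomial laws and law-of-large-numbers concentration is unnecessary for computing the expectation.
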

We prove this formula in Appendix \ref{app:proofs}. 

To analyze the long-term evolution of the user population, it is natural to summarize its expected one-step change via a single growth factor,
\begin{equation}\label{eq:growth_factor_rr}
    C(\beta) \triangleq \frac{\mathbb{E}[N_t|N_{t-1}]}{N_{t-1}} = 1 + \frac{r-q}{2} - \beta(q+r).
\end{equation}
This scalar represents the expected ratio of the population size between two consecutive rounds. The value of $C(\beta)$ relative to $1$ dictates the overall trajectory of the system: $C(\beta) \ge 1$ denotes a stable or growing population, whereas $C(\beta) < 1$ indicates decline. By evaluating this one-step expected growth, we can determine the maximum value of $\beta$ (i.e., the least private the mechanism can be) that the platform can support without seeing the population collapse.

\begin{theorem}[Critical Threshold for User Population Survival]\label{theo:min_beta_threshold}
To prevent demographic collapse (i.e., to have $C(\beta) \ge 1$), $\beta$ must be smaller than the critical threshold $\beta_c$, defined as:
\begin{equation}
    \beta_c \triangleq \max \left\{ 0,\frac{r-q}{2(q+r)}\right\}
\end{equation}
\end{theorem}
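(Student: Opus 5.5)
The plan is to start from the closed form of the growth factor in \eqref{eq:growth_factor_rr}, which rests on \Cref{lemm:unleaked_growth}:
\[
C(\beta) = 1 + \frac{r-q}{2} - \beta(q+r).
\]
The condition $C(\beta)\ge 1$ is then the same as the linear inequality $\beta(q+r) \le \frac{r-q}{2}$. This function is affine in $\beta$ with slope $-(q+r)$. Whenever $q+r>0$ the slope is strictly negative, so $C$ is strictly decreasing on $[0,\tfrac12]$. The admissible set $\{\beta\in[0,\tfrac12] : C(\beta)\ge 1\}$ is therefore an initial segment $[0,\beta^\ast]$, or it is empty. Dividing by $q+r>0$ gives the unconstrained threshold $\beta \le \frac{r-q}{2(q+r)}$.

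Next I would intersect this with the domain $\beta\in[0,\tfrac12]$ and split into cases on the sign of $r-q$.

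\emph{Case $r\ge q$.} The threshold $\frac{r-q}{2(q+r)}$ is nonnegative. It is also at most $\tfrac12$, because $r-q\le r+q$. So the feasible set is exactly $[0,\beta_c]$ with $\beta_c=\frac{r-q}{2(q+r)}$.

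\emph{Case $r<q$.} The threshold is negative. Then even $\beta=0$ fails, since $C(0)=1+\frac{r-q}{2}<1$. Every admissible $\beta$ leads to decline, and the threshold clips to $\beta_c=0$. In this case I would state the conclusion in the sense that no privacy level can prevent collapse, so the necessary condition $\beta\le\beta_c=0$ holds vacuously, or at best it is only reached in the limit.

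Both cases combine into $\beta_c=\max\{0,\frac{r-q}{2(q+r)}\}$. Monotonicity of $C$ then turns this into the statement that $\beta$ must not exceed $\beta_c$.

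I expect the only delicate point to be interpreting the statement at the boundaries rather than any computation.

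First, the degenerate case $q=r=0$ has to be excluded or handled separately. There $C\equiv 1$, the fraction is undefined, and I would note that any $\beta$ works.

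Second, the $r<q$ case needs care. The statement says $\beta$ must be smaller than $\beta_c$, but in this case $C(\beta)\ge 1$ is infeasible, so the theorem should be read as a necessary condition. I would phrase it that way: "if $C(\beta)\ge1$ then $\beta\le\beta_c$". I would also remark that for $r<q$ the clipped value reflects that the population collapses for every $\varepsilon$, by \Cref{lemm:eps_ldp_beta}, since $\beta=0$ corresponds to $\eps=0$.
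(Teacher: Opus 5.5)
Your proposal is correct and follows the paper's proof essentially verbatim: substitute $C(\beta)=1+\frac{r-q}{2}-\beta(q+r)$, rearrange $C(\beta)\ge 1$ to $\beta(q+r)\le\frac{r-q}{2}$, divide by $q+r$, and clip the threshold at $0$ when $r\le q$. The paper's proof passes over two boundary points that you handle, and you handle them correctly: the case $q=r=0$ must be excluded, and for $r<q$ the condition $C(\beta)\ge 1$ is infeasible on $[0,\tfrac12]$, so the theorem holds there only as a vacuous necessary condition.
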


We prove this theorem formally in Appendix \ref{app:proofs}.
From this formula, we can see two regimes: if $q > r$, the population will always collapse, regardless of the privacy protection. However, when $r > q$, the population can grow or shrink depending on the choice of $\beta$, and thus $C(\beta) \ge 1$ gives a hard operational constraint on the choice of $\beta$.

\begin{figure}[htbp]
    \centering

    \begin{subfigure}[b]{0.36\textwidth}
        \centering
        \includegraphics[width=\textwidth]{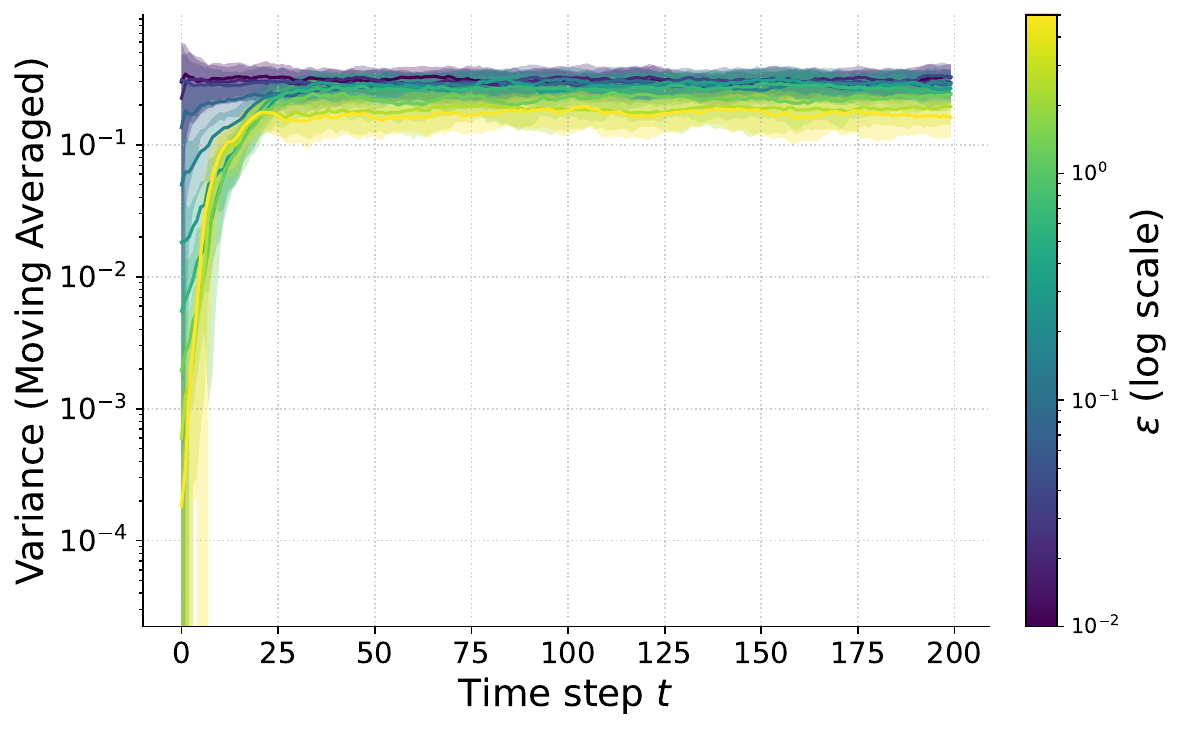}
        \caption{Moving-averaged variance ($q=0.6$, $r=0.3$)}
        \label{fig:var_theta06_lambda03}
    \end{subfigure}
    \hspace{0.5cm}
    \begin{subfigure}[b]{0.36\textwidth}
        \centering
        \includegraphics[width=\textwidth]{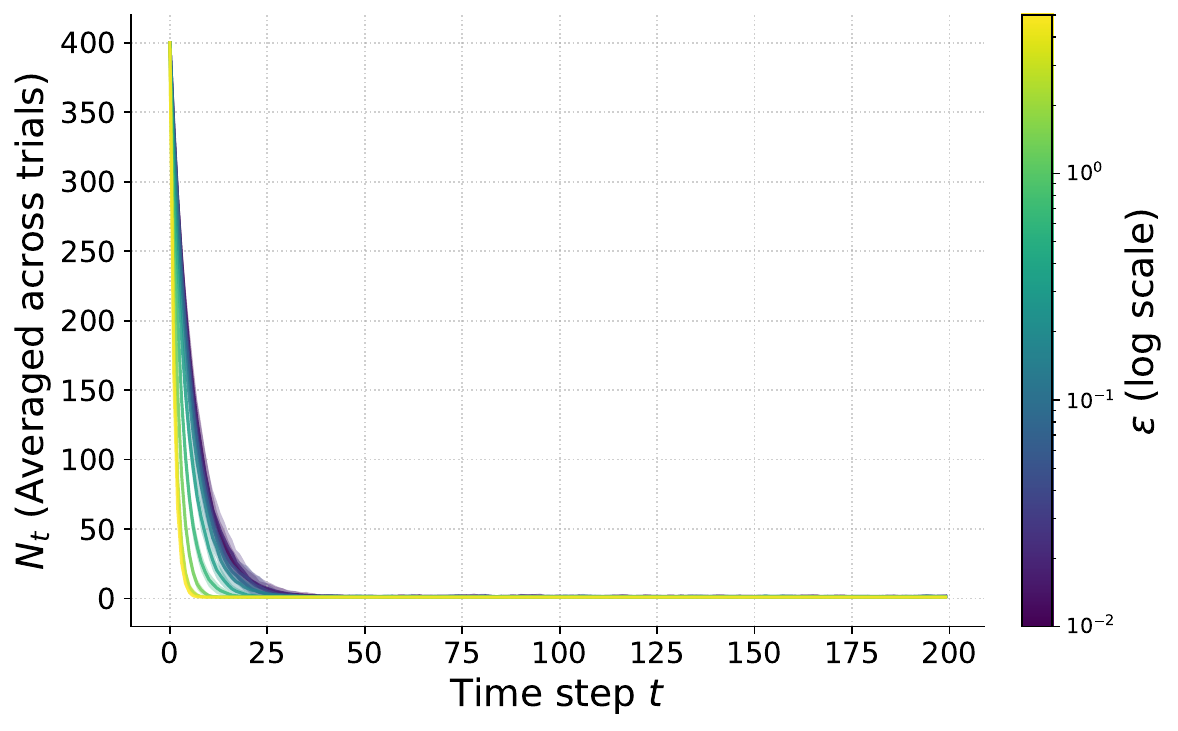}
        \caption{Population evolution ($q=0.6$, $r=0.3$)}
        \label{fig:pop_theta06_lambda03}
    \end{subfigure}

    \vspace{0.5cm}

    \begin{subfigure}[b]{0.36\textwidth}
        \centering
        \includegraphics[width=\textwidth]{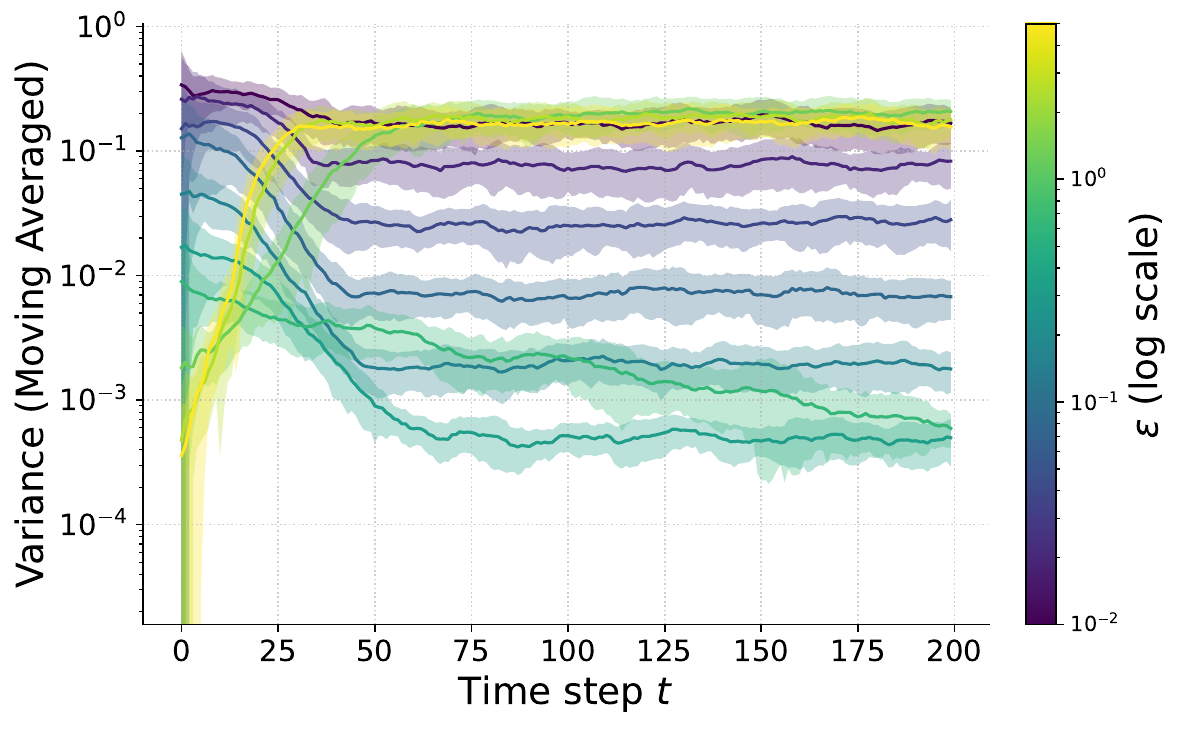}
        \caption{Moving-averaged variance ($q=0.3$, $r=0.6$)}
        \label{fig:var_theta03_lambda06}
    \end{subfigure}
    \hspace{0.5cm}
    \begin{subfigure}[b]{0.36\textwidth}
        \centering
        \includegraphics[width=\textwidth]{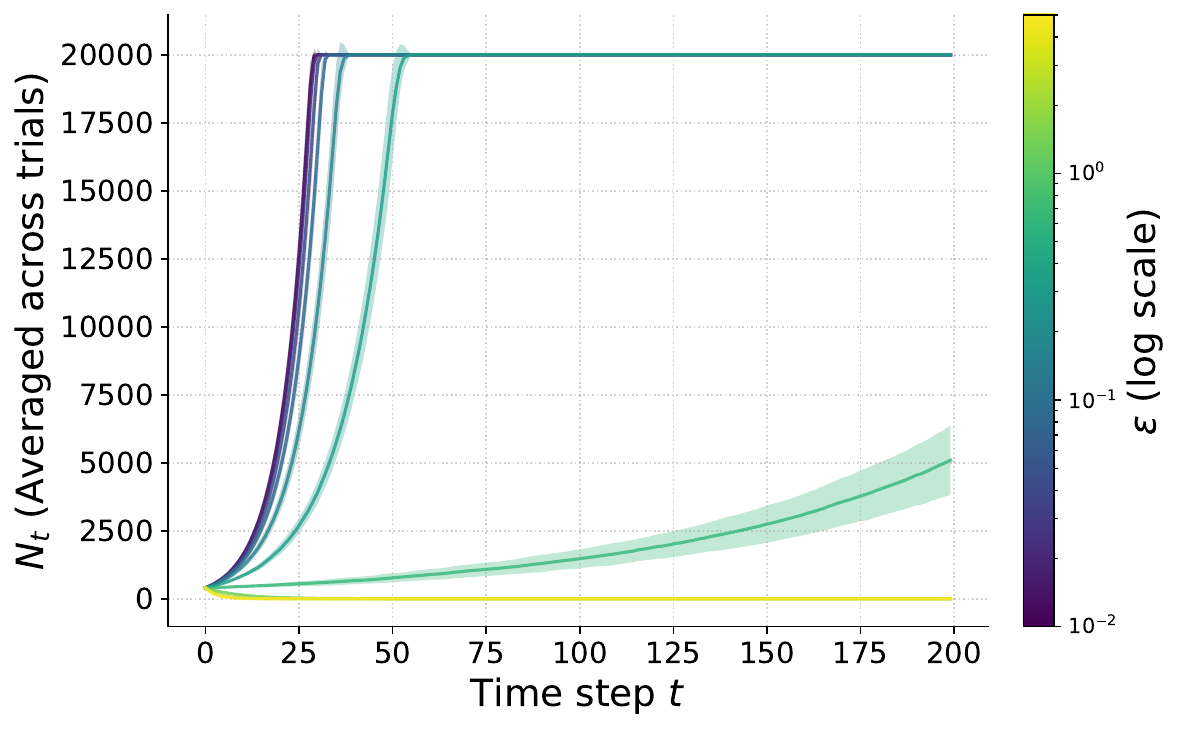}
        \caption{Population evolution ($q=0.3$, $r=0.6$)}
        \label{fig:pop_theta03_lambda06}
    \end{subfigure}

    \caption{Comparison of moving-averaged variance and population evolution under two parameter settings: the top row corresponds to $(q,r)=(0.6,0.3)$, while the bottom row corresponds to $(q,r)=(0.3,0.6)$.}
    \label{fig:population_evolution_rr}
\end{figure}

\paragraph{Empirical Simulation} We sample binary responses from a Bernoulli distribution with mean $\mu_t$ and estimate it using the unbiased estimator \eqref{eq:unbiased_estimator}. Responses are privatized by randomized response with perturbation probability $\frac12-\beta$, where $\beta$ is chosen according to \eqref{eq:eps_ldp_beta} to satisfy $\varepsilon$-local differential privacy. Utility is measured by the squared estimation error at the final horizon. Throughout, we use a constant privacy budget $\varepsilon$, track the population size $N_t$ (clipped to $[1,20000]$) and the moving-averaged variance of $\hat{\mu}_t$ (window size $15$), and average results over $30$ independent trials. Figure~\ref{fig:population_evolution_rr} illustrates two representative regimes corresponding to $(r,q)=(0.6,0.3)$ and $(0.3,0.6)$. For both the variance and population plots, 10 values of $\epsilon$ were used, with their corresponding values indicated by the intensity bar to the left of each plot. As predicted, we observe population collapse for $q>r$, whereas for $r>q$, the population reaches a stable equilibrium or grows to the maximum size, and a finite privacy budget optimally balances privacy noise against user retention.
Figure~\ref{fig:heatmap} reports the utility-maximizing privacy budget for each $(r,q)$ pair. In the region where a finite optimum exists, the optimal $\varepsilon$ decreases monotonically with the leaving rate, confirming that stronger privacy guarantees are required when users are more sensitive to data leakage.

\begin{figure}[htbp]
        \centering
        \includegraphics[width=0.36\textwidth]{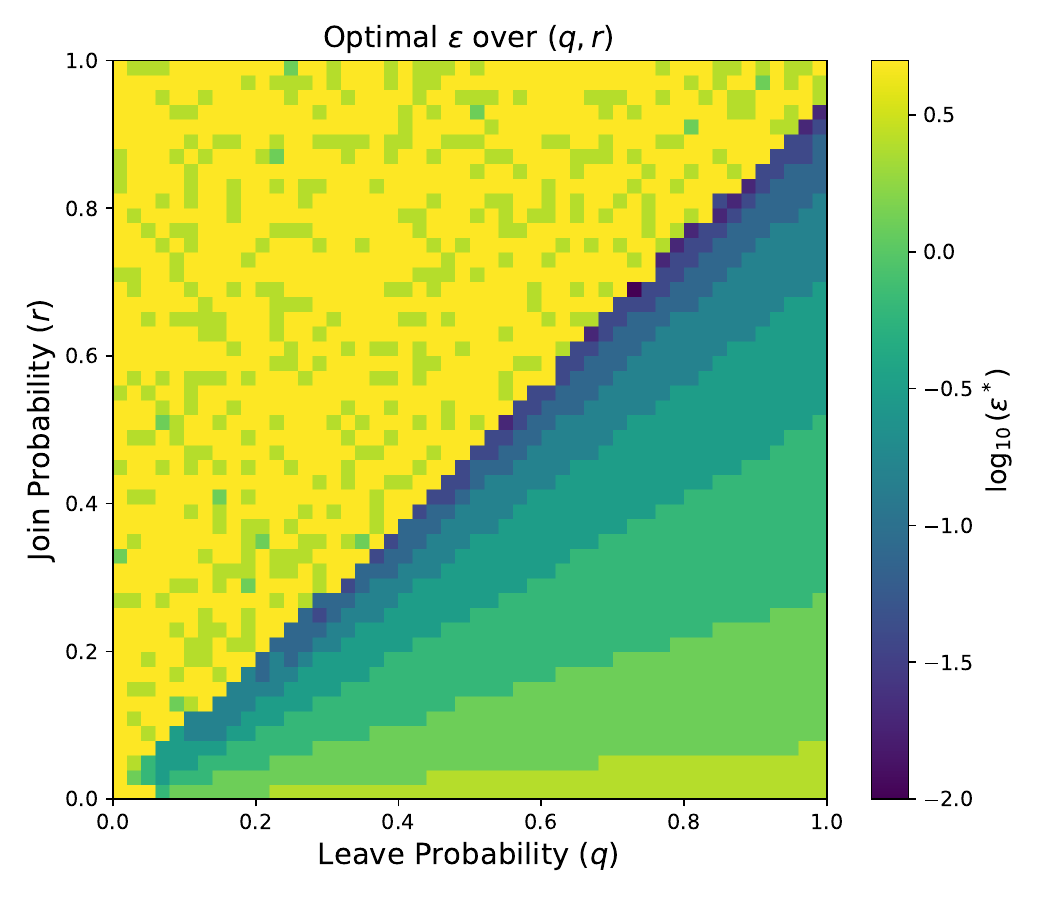}
    \caption{Global mapping of the optimal privacy budget $\eps^*$ across the $(r, q)$ parameter space. In the bottom-right region of the figure, the optimal privacy parameter evolves continuously.}
    \label{fig:heatmap}
\end{figure}

\section{Mean estimation with membership inference detection}\label{section:membership_inference}


The previous section examined performative population dynamics under randomized response, where data is binary and leakage is defined as the exact release of the true bit. In that setting, utility and privacy risk are strictly opposed, as the leakage is defined as the single bit of information that is shared. This opposition becomes less pronounced for more complex data. We now generalize our framework to continuous vectors in $\mathbb{R}^d$. This transition decouples a record's informational utility from its membership detection; for example, in private mean estimation, a record located exactly at the population mean is statistically difficult to detect, but valuable for the estimation of the mean, while an outlier is easy to detect but does not improve utility. Finally, this high-dimensional continuous setting is closer to realistic machine learning deployments, allowing us to formalize user vulnerability through standard Membership Inference Attacks (MIAs) based on an individual's geometric footprint.

\subsection{The Learning Task and Mechanism}
At any round $t$, the user population has size $N_t$. The platform seeks to estimate the true mean $\mu$ of the underlying data distribution $\mathcal{D}$ over $\mathbb{R}^d$. The dataset at round $t$ is denoted as $Z_t = \{z_1, \dots, z_{N_t}\}$, where each individual's record $z_i \sim \mathcal{D}$ is drawn independently. To ensure bounded sensitivity, each record is clipped to a maximum $\ell_2-$norm inside radius $R$. Thus, $\bar{z}_i = z_i \min\left(1, \frac{R}{\|z_i\|_2}\right)$ denotes the clipped vector $z_i$. 
The output $o_t$ is an empirical mean protected by the Gaussian Mechanism:  
\begin{equation}
    o_t = \mathcal{M}_\gamma(Z_t) = \frac{1}{N_t} \sum_{i=1}^{N_t} \bar{z}_i + \eta_t, \quad \eta_t \sim \mathcal{N}\left(0, \frac{{\gamma^2}}{N_t} I_{d \times d}\right)
    \label{eq:mecgau}
\end{equation}
This corresponds to each user adding an independent Gaussian noise $\mathcal{N}(0,\gamma^2 I_d)$ to their clipped record (whose $\ell_2$-sensitivity is $2R$). To satisfy a given zero-concentrated differential privacy (zCDP) budget $\rho$ \citep{bun2016concentrated} for each individual contribution, the platform must set the noise parameter to $\gamma^2 = \frac{2R^2}{\rho}$.


The utility is measured through the expected squared Mahalanobis distance between the private release and the true population mean $\mu$ \citep{biswas2020coinpress,azize2025some,leemann2024is}, which the platform seeks to minimize. The following lemma computes this error explicitly, conditioned on a given population size.

\begin{lemma}[Conditional Expected Mahalanobis Error]\label{lemm:Mahalanobis_error}
Let $Z_t$ be a dataset drawn from a distribution with true mean $\mu$ and covariance matrix $\Sigma$, and neglect the clipping bias (i.e., assume $\bar z_i = z_i$). Since the noise vector $\eta_t$ has covariance $C_{\gamma} = \frac{\gamma^2}{N_t} I$, the expected squared Mahalanobis error is:
\begin{equation}
    \mathbb{E} \left[ \|o_t - \mu\|_{\Sigma^{-1}}^2 \;\middle|\; N_t \right] = \frac{\gamma^2 \mathrm{Tr}(\Sigma^{-1}) + d}{N_t}
\end{equation}
\end{lemma}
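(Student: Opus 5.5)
We would decompose the error $o_t-\mu$ into a sampling part and a privacy-noise part and use the fact that they are independent and mean zero. With clipping neglected, $o_t-\mu = A_t + \eta_t$, where $A_t \triangleq \frac{1}{N_t}\sum_{i=1}^{N_t}(z_i-\mu)$. Conditionally on $N_t$, $A_t$ has mean zero and covariance $\Sigma/N_t$, because the $z_i$ are i.i.d. with covariance $\Sigma$. The noise $\eta_t$ is drawn independently of $Z_t$, with mean zero and covariance $C_\gamma=\frac{\gamma^2}{N_t}I$.

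Expanding the squared Mahalanobis norm gives
\[
\|o_t-\mu\|_{\Sigma^{-1}}^2 = A_t^\top\Sigma^{-1}A_t + 2A_t^\top\Sigma^{-1}\eta_t + \eta_t^\top\Sigma^{-1}\eta_t .
\]
The cross term has zero conditional expectation: $A_t$ and $\eta_t$ are independent given $N_t$, and $\eta_t$ is centered, so we may condition on $Z_t$ first.

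Each quadratic term is then handled with the trace identity $\mathbb{E}[X^\top M X]=\mathrm{Tr}(M\,\mathrm{Cov}(X))$, valid for centered $X$. For the sampling term this gives $\mathrm{Tr}(\Sigma^{-1}\Sigma/N_t)=d/N_t$. For the noise term it gives $\mathrm{Tr}(\Sigma^{-1}\gamma^2 I/N_t)=\gamma^2\mathrm{Tr}(\Sigma^{-1})/N_t$. Summing the two yields $\frac{\gamma^2\mathrm{Tr}(\Sigma^{-1})+d}{N_t}$, as claimed.

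There is no real obstacle; the computation is routine. The points needing care are as follows.
\begin{itemize}
\item $\Sigma$ must be assumed invertible so that the Mahalanobis norm is defined.
\item The noise must be independent of the data given $N_t$. This holds by construction in \eqref{eq:mecgau}, even if one uses the per-user noise description, since the average of $N_t$ independent $\mathcal{N}(0,\gamma^2 I)$ vectors has covariance $\gamma^2 I/N_t$.
\item The unclipped assumption is what makes $A_t$ unbiased. With clipping, an additional squared-bias term would appear.
\end{itemize}
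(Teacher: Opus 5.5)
Your proposal is correct and follows essentially the same route as the paper. The paper likewise splits the error into independent noise and sampling components, drops the cross term, and evaluates each quadratic form with the trace identity, obtaining $\gamma^2\mathrm{Tr}(\Sigma^{-1})/N_t$ and $d/N_t$.
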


In the next \Cref{sec:mia}, we define the privacy leakage in this setting, and then derive generic bound in~\Cref{sec:genboundmia} and an exact expression in the Gaussian case in \Cref{sec:miaGaussian}.

\subsection{Membership Inference and the Vulnerability Threshold}
\label{sec:mia}

Where Section \ref{section:randomized_response} defined leakage via the discrete event $X_{i,t}=Y_{i,t}$, exposure in continuous aggregates ($o_t \in \mathbb{R}^d$) is quantified by a record's statistical influence on $o_t$. By the Neyman-Pearson lemma, the optimal test statistic for the hypothesis $z \in Z_t$ is the log-likelihood ratio:
\begin{equation}
    \Lambda(o_t|z) \triangleq \ln \left( \frac{\Pr[\mathcal{M}_\gamma(Z_t) = o_t \mid z \in Z_t]}{\Pr[\mathcal{M}_\gamma(Z_t \setminus \{z\}) = o_t]} \right)
\end{equation}
where $\Pr[\cdot]$ denotes the density of the output. The magnitude of $\Lambda(o_t|z)$ indicates how much skew $z$ exerts on the noisy sample mean $o_t$ given we observe the set $Z_t$.

\paragraph{Vulnerability Threshold ($\tau$).}
Practically, an individual cannot compute $\Lambda(o_t|z)$ without access to $Z_t$. However, it is the quantity approximated by external adversaries and auditing tools in modern Membership Inference Attacks (MIAs) \cite{shokri2017membership, Carlini2021MembershipIA}. Conditioning user departure on this mathematical quantity thus provides a worst-case upper bound against any adversary.
The privacy tolerance is captured via a constant threshold $\tau > 0$, such that a record is compromised if $\Lambda(o_t|z) > \tau$. From a hypothesis testing perspective, maintaining a fixed false positive rate would require scaling the threshold with the dimension $d$ and the population size $N_t$ over time, but we keep it constant for simplicity.

\paragraph{Performative Dynamics.} At round $t$, each participant $z \in Z_t$ updates their status for the subsequent round: if $\Lambda(o_t|z) > \tau$, the individual departs with probability $q$; otherwise ($\Lambda(o_t|z) \le \tau$), the individual remains and recruits one additional participant with probability $r$.
as described in~\Cref{alg:memoryless_dp}.

\begin{algorithm}[H]
\small\caption{Private Mean Estimation under MIA leaving process}\label{alg:memoryless_dp}
\begin{algorithmic}[1]
\State \textbf{Input:} Initial size $N_0$, Dist. $\mathcal{D}$, Mechanism $\mathcal M$, Threshold $\tau$, Probs. $q, r$, Rounds $T$
\State \textbf{Output:} Sequence $(N_1, \dots, N_T)$   
\State \textbf{Set} $N_1 = N_0$
    \For{$t = 1, \dots, T$}
        \If{$N_t = 0$}
            \State \textbf{break}
        \EndIf
        
        \State Sample fresh dataset $Z_t \sim \mathcal{D}^{N_t}$ 
        \State Sample candidate output $o_t \sim \mathcal M(Z_t)$  \Comment{$\mathcal{M}$ performs $\ell_2$ clipping $\forall z \in Z_t$ internally}
        
        \State \textbf{Initialize} next population size $N_{t+1} \leftarrow N_t$
        
        \For{\textbf{each} $z \in Z_t$} \Comment{Performative Mechanism}
            \State $\Lambda(o_t|z) \leftarrow \ln \left( \frac{\Pr[\mathcal M(Z_t) = o_t \mid z \in Z_t]}{\Pr[\mathcal M(Z_t \setminus \{z\}) = o_t]} \right)$
            
            \If{$\Lambda(o_t|z) > \tau$}
                \State $N_{t+1} \leftarrow N_{t+1} - 1$ with prob. $q$
            \Else
                \State $N_{t+1} \leftarrow N_{t+1} + 1$ with prob. $r$
            \EndIf
        \EndFor
    \EndFor
    \State \textbf{Return} $(N_1, \dots, N_{T})$
\end{algorithmic}
\end{algorithm}

\subsection{General Bound on the Leakage Probability}
\label{sec:genboundmia}

Computing the exact trajectory of the population size requires the knowledge of the marginal leakage probability $p^{(t)}_\tau = \Pr(\Lambda(o_{t}|z) > \tau)$. For arbitrary data distributions, deriving the exact probability density function of the log-likelihood ratio is intractable. Instead, we provide an upper bound on the leakage probability that involves the R\'enyi divergence, and thus makes the link with R\'enyi DP \citep{Mironov2017}, via Markov's inequality.


\begin{lemma}[Tail Bound on the Leakage Probability]\label{lemm:Renyi_bound}
Let $Z_{t}$ be the dataset at the beginning of round $t$ and $z \in Z_{t}$ be a target record. Let $o_t \sim \mathcal{M}(Z_{t})$ be the candidate output. Define the factual and counterfactual output distributions as $P^{\mathcal{M}}_{\text{in}}(\cdot) = \Pr[\mathcal{M}(Z_{t}) = \cdot \mid z \in Z_{t}]$ and $P^{\mathcal{M}}_{\text{out}}(\cdot) = \Pr[\mathcal{M}(Z_{t} \setminus \{z\}) = \cdot]$. 

For the log-likelihood ratio $\Lambda(o_t|z) = \ln \left( \frac{P^{\mathcal{M}}_{\text{in}}(o_t)}{P^{\mathcal{M}}_{\text{out}}(o_t)} \right)$ evaluated at output $o_t$, and for any threshold $\tau > 0$ and order $\alpha > 1$, the tail probability of leakage is bounded by:
\begin{align*}
\Pr_{o_t \sim P^{\mathcal{M}}_{\text{in}}}(\Lambda(o_t|z) > \tau) \le \exp \Big( -(\alpha - 1) \big( \tau - D_\alpha(P^{\mathcal{M}}_{\text{in}} \parallel P^{\mathcal{M}}_{\text{out}}) \big) \Big) 
\end{align*}
where $D_\alpha(P^{\mathcal{M}}_{\text{in}} \parallel P^{\mathcal{M}}_{\text{out}})$ is the Rényi divergence of order $\alpha$.
\end{lemma}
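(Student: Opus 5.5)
The plan is a Chernoff-style argument: apply Markov's inequality to an exponential moment of the log-likelihood ratio. Fix $\alpha>1$. Because $x\mapsto e^{(\alpha-1)x}$ is strictly increasing, the event $\{\Lambda(o_t|z)>\tau\}$ coincides with $\{e^{(\alpha-1)\Lambda(o_t|z)}>e^{(\alpha-1)\tau}\}$. Markov's inequality, applied under $o_t\sim P^{\mathcal{M}}_{\text{in}}$, then gives
\begin{equation*}
\Pr_{o_t\sim P^{\mathcal{M}}_{\text{in}}}\big(\Lambda(o_t|z)>\tau\big)\le e^{-(\alpha-1)\tau}\,\mathbb{E}_{o\sim P^{\mathcal{M}}_{\text{in}}}\Big[\Big(\tfrac{P^{\mathcal{M}}_{\text{in}}(o)}{P^{\mathcal{M}}_{\text{out}}(o)}\Big)^{\alpha-1}\Big].
\end{equation*}

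The second step is to recognize this expectation as a Rényi quantity. Writing it as an integral gives
\begin{equation*}
\int P^{\mathcal{M}}_{\text{in}}\,\big(P^{\mathcal{M}}_{\text{in}}/P^{\mathcal{M}}_{\text{out}}\big)^{\alpha-1}
=\int \big(P^{\mathcal{M}}_{\text{in}}\big)^{\alpha}\big(P^{\mathcal{M}}_{\text{out}}\big)^{1-\alpha}.
\end{equation*}
The right-hand side equals $\exp\big((\alpha-1)D_\alpha(P^{\mathcal{M}}_{\text{in}}\parallel P^{\mathcal{M}}_{\text{out}})\big)$ by the definition $D_\alpha(P\parallel Q)=\frac{1}{\alpha-1}\ln\int P^\alpha Q^{1-\alpha}$. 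Substituting this into the Markov bound and combining the exponents yields exactly $\exp\big(-(\alpha-1)(\tau-D_\alpha)\big)$.

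The only delicate point is measure-theoretic: the likelihood ratio must be well defined. If $P^{\mathcal{M}}_{\text{in}}$ is not absolutely continuous with respect to $P^{\mathcal{M}}_{\text{out}}$, then $D_\alpha=+\infty$ and the bound holds trivially. Otherwise the ratio is finite $P^{\mathcal{M}}_{\text{in}}$-almost surely, so Markov's inequality applies to a nonnegative random variable whose expectation may be infinite, in which case the bound is again trivial. In the Gaussian mechanism of~\eqref{eq:mecgau}, both distributions are Gaussians with the same covariance and full support, so absolute continuity holds and every term is finite. Finally, since $\alpha>1$ is arbitrary, the bound holds for every order $\alpha>1$, as stated, and one may take the infimum over $\alpha$ afterwards.
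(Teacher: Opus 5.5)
Your proposal is correct and follows the paper's proof: Markov's inequality applied to $e^{(\alpha-1)\Lambda(o_t|z)}$ under $P^{\mathcal{M}}_{\text{in}}$, followed by identifying $\mathbb{E}_{P^{\mathcal{M}}_{\text{in}}}\big[(P^{\mathcal{M}}_{\text{in}}/P^{\mathcal{M}}_{\text{out}})^{\alpha-1}\big]$ with $\exp\big((\alpha-1)D_\alpha(P^{\mathcal{M}}_{\text{in}}\parallel P^{\mathcal{M}}_{\text{out}})\big)$. Your treatment of absolute continuity and of the case $D_\alpha=+\infty$, where the bound holds trivially, is a small addition that the paper's proof leaves implicit.
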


The bound in \ref{lemm:Renyi_bound} can easily be applied to any DP mechanism outputs, as it depends explicitly on the Renyi divergence.
It also exhibits exponential dependency in $\tau$, demonstrating the threshold carefully. Finally, to make the tightest possible bound, we should optimize over the order $\alpha$ while keeping it strictly greater than $1$ to use RDP.
To illustrate it, we apply it to the Gaussian mechanism. However, for analytical tractability we condition the leakage probability on observing $Z_t$ first.



\subsection{The Gaussian Particular Case}
\label{sec:miaGaussian}
In this subsection, we exploit the fact that the injected algorithmic noise is normally distributed to derive the exact closed-form distribution of the log-likelihood ratio conditioned on the data for any given point. We then derive an exact closed-form expression for the unconditional leakage probability under the additional assumption that the data distribution is Gaussian. This Gaussian-data assumption is required only for the results beginning with lemma \ref{lemm:gaussian_mia_poplin_evolution} and is not needed for the preceding results. For simplicity, we initially assume that $\mathcal{M}$ is clipping-free. This assumption will be relaxed in subsequent results. Therefore, let $\mu_z \triangleq \frac{1}{N_t}\sum_{i=1}^{N_t} z_i$ denote the sample mean of $z_i$'s.

\begin{lemma}[Upper Bound for the Gaussian Mechanism]\label{lemm:reyni_bound_gaussian}
Substituting into Lemma \ref{lemm:Renyi_bound} the Rényi divergence of the Gaussian mechanism \eqref{eq:mecgau} between the outputs with and without the record $z$ conditioned on the entire dataset $Z_t$, namely
\begin{equation*}
D_\alpha(P^{\mathcal{M}}_{\text{in}}(\cdot|Z_t) \parallel P^{\mathcal{M}}_{\text{out}}(\cdot|Z_t))
=
\frac{\alpha N_t}{2\gamma^2 (N_t-1)^2}\|z-\mu_z\|^2
\end{equation*}
(see Lemma~\ref{lemm:tail_bound_gaussian_mech} in Appendix~\ref{app:gaussian_mia_proofs}), the tail bound on the leakage probability for a specific record $z$ specializes to:
\begin{equation}\label{eq:reyni_bound_gaussian}
    \Pr_{o_t \sim P^{\mathcal{M}}_{\text{in}}}(\Lambda(o_t|z) > \tau \mid Z_t)
    \le
    \exp \left(
    -(\alpha - 1)
    \left(
    \tau -
    \frac{\alpha N_t\|z-\mu_z\|^2}
    {2\gamma^2 (N_t-1)^2}
    \right)
    \right).
\end{equation}
\end{lemma}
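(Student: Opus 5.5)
The plan is to apply Lemma~\ref{lemm:Renyi_bound} conditionally on the dataset $Z_t$. That lemma is a pure Markov/Chernoff statement valid for any pair of distributions $P_{\text{in}}, P_{\text{out}}$. Conditioning on $Z_t$ therefore leaves it intact, and what remains is to compute the Rényi divergence of the Gaussian mechanism \eqref{eq:mecgau} between the factual and counterfactual output laws given $Z_t$, then substitute.

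\emph{Step 1 (the two conditional laws).} Given $Z_t$ and with clipping ignored, the factual output is
\[
o_t \sim \mathcal{N}\!\left(\mu_z, \tfrac{\gamma^2}{N_t} I_d\right).
\]
For the counterfactual, removing $z$ moves the empirical mean to $\frac{N_t\mu_z - z}{N_t-1}$. The mean shift is therefore
\[
\Delta = \mu_z - \frac{N_t\mu_z - z}{N_t-1} = \frac{z-\mu_z}{N_t-1}.
\]
I will take the counterfactual noise to have the same covariance $\frac{\gamma^2}{N_t} I_d$, i.e.\ the released noise level is held fixed and only the data term changes. This is the convention that reproduces the displayed formula.

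\emph{Step 2 (Rényi divergence).} For two Gaussians with common covariance $\sigma^2 I_d$ and mean difference $\Delta$, the standard closed form (e.g.\ \citet{Mironov2017}) is $D_\alpha = \frac{\alpha\|\Delta\|^2}{2\sigma^2}$. If one wants to rederive it, it follows by completing the square in
\[
\frac{1}{\alpha-1}\ln\int p_{\text{in}}^\alpha p_{\text{out}}^{1-\alpha}.
\]
Plugging in $\sigma^2 = \gamma^2/N_t$ and $\|\Delta\|^2 = \|z-\mu_z\|^2/(N_t-1)^2$ gives
\[
D_\alpha = \frac{\alpha N_t\|z-\mu_z\|^2}{2\gamma^2 (N_t-1)^2},
\]
which is the expression in the statement (Lemma~\ref{lemm:tail_bound_gaussian_mech}).

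\emph{Step 3 (substitution).} Substituting this into the bound of Lemma~\ref{lemm:Renyi_bound}, with every probability conditional on $Z_t$, yields \eqref{eq:reyni_bound_gaussian} directly for every $\alpha>1$ and $\tau>0$.

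The main obstacle is Step 1. If the counterfactual noise were instead rescaled to $\gamma^2/(N_t-1)$, the two Gaussians would have different covariances, and the divergence would pick up extra log-determinant and trace terms. The displayed formula would then hold only to leading order in $1/N_t$. I would therefore state the equal-covariance convention explicitly, which matches an adversary who knows the deployed noise scale. Optionally, I would remark that the unequal-covariance correction is $O(d\,\alpha/N_t^2)$ and negligible for large $N_t$.
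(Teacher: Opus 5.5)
Your proposal is correct and follows the paper's route: Lemma~\ref{lemm:tail_bound_gaussian_mech} likewise models the factual and counterfactual outputs as Gaussians with the \emph{shared} covariance $\frac{\gamma^2}{N_t} I$ and mean shift $\frac{z-\mu_z}{N_t-1}$, applies the equal-covariance formula $D_\alpha = \frac{\alpha}{2}\Delta^\top C_\gamma^{-1}\Delta$, and substitutes the result into the Chernoff bound of Lemma~\ref{lemm:Renyi_bound}. The paper adopts the equal-covariance convention without comment, so stating it explicitly, and noting that rescaling the counterfactual noise to $\gamma^2/(N_t-1)$ would only change the result by lower-order terms in $1/N_t$, is a worthwhile clarification.
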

This closed-form bound makes explicit, first, the dependence on $\|z-\mu_z\|^2$: outliers face an exponentially increasing probability of leakage as their deviation from the empirical mean grows. Second, when the population size $N_t$ increases, the divergence decreases, capturing the privacy amplification afforded by aggregation and leading to a virtuous circle: if the system is in a regime private enough to gain users, it becomes more likely to keep growing as the leakage risk decreases.

This inequality can be minimized with respect to the R\'enyi order $\alpha$, yielding the optimal order
\begin{equation*}
\alpha^*
=
\frac{1}{2}
+
\frac{\tau\gamma^2 (N_t-1)^2}
{N_t\|z-\mu_z\|^2}
\qquad \text{subject to }\alpha^*>1.
\end{equation*}
Thus the optimized bound is valid and non-vacuous only when
\[
\tau >
\frac{N_t\|z-\mu_z\|^2}
{2\gamma^2 (N_t-1)^2},
\]
i.e., only when the individual's baseline privacy tolerance strictly exceeds their deviation from the empirical mean rescaled by the current parameters. This condition fails precisely for extreme spatial outliers and in low-noise regimes ($\gamma \to 0$), where $\alpha^* \le 1$ either invalidates the divergence or trivializes the bound (i.e., it collapses to $1$). Writing
\begin{equation*}
a \triangleq
\frac{N_t\|z-\mu_z\|^2}
{2\gamma^2 (N_t-1)^2},
\end{equation*}
substituting $\alpha^*$ back into \eqref{eq:reyni_bound_gaussian} yields, for $\tau > a$, the optimized bound
\begin{equation}\label{eq:optimized_renyi}
    \Pr_{o_t \sim P^{\mathcal{M}}_{\text{in}}}(\Lambda(o_t|z) > \tau\mid Z_t)
    \le
    \exp\left(
    -\frac{(\tau-a)^2}{4a}
    \right).
\end{equation}
As Lemma~\ref{lemm:log_likelihood_distribution} below shows that
\[
\Lambda(o_t|z) \mid Z_t \sim \mathcal{N}(a,2a)
\]
exactly, the standard Gaussian tail bound $1-\Phi(u) \le e^{-u^2/2}$ applied at $u = (\tau-a)/\sqrt{2a}$ gives the same exponent: the optimized R\'enyi bound is tight up to the $\Phi$-versus-$\exp$ gap. In the special case of the Gaussian mechanism, we are however able to provide the exact leakage probability through a direct computation of $\Lambda$. 

\begin{lemma}\label{lemm:log_likelihood_distribution}
Let a mechanism output $o_t$ be perturbed by additive isotropic Gaussian noise $\eta_t \sim \mathcal{N}(0, \frac{\gamma^2}{N_t} I_{d \times d})$, representing the noise. The log-likelihood ratio $\Lambda(o_t|z)$ conditioned on $Z_t$ is given by:
\begin{equation*}
    \Lambda(o_t|z)\mid Z_t
    =
    \frac{N_t\|z-\mu_z\|^2}{2(N_t-1)^2\gamma^2}
    +
    \frac{N_t\langle z-\mu_z, \eta_t\rangle}{(N_t-1)\gamma^2}
\end{equation*}
 And it is distributed as:
\begin{equation*}
    \Lambda(o_t|z) \mid Z_t
    \sim
    \mathcal{N}\left(
    \frac{N_t\|z-\mu_z\|^2}{2(N_t-1)^2\gamma^2},
    \;
    \frac{N_t\|z-\mu_z\|^2}{(N_t-1)^2\gamma^2}
    \right)
\end{equation*}
\end{lemma}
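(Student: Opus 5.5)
The plan is a direct computation of the two Gaussian densities, followed by reading off the distribution of a linear functional of $\eta_t$. Throughout we condition on $Z_t$, so the only remaining randomness is the mechanism noise $\eta_t$, and (as stated just before Lemma~\ref{lemm:reyni_bound_gaussian}) we work without clipping.

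\textbf{Step 1: the two hypotheses.} Under the factual hypothesis, $o_t=\mu_z+\eta_t$ with $\eta_t\sim\mathcal N(0,\sigma^2 I)$ and $\sigma^2=\gamma^2/N_t$. Under the counterfactual, the released mean is that of $Z_t\setminus\{z\}$:
\[
\mu'=\frac{N_t\mu_z-z}{N_t-1}, \qquad \text{so}\qquad \delta\triangleq\mu_z-\mu'=\frac{z-\mu_z}{N_t-1}.
\]
For the counterfactual noise I will adopt the convention that matches the claimed form: the same isotropic covariance $\sigma^2 I$ as in the factual world, i.e.\ the adversary compares the two candidate centres under a fixed noise level. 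This is also the convention implicit in the Rényi divergence quoted in Lemma~\ref{lemm:reyni_bound_gaussian}. Using a different covariance $\gamma^2/(N_t-1)$ for the counterfactual would add a log-determinant term and a term quadratic in $\eta_t$, which contradicts the stated linear form. So I would state this modelling choice explicitly, and I expect this to be the only delicate point of the proof.

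\textbf{Step 2: the log-likelihood ratio.} With equal covariances the normalising constants cancel, and
\[
\Lambda(o_t|z)=\frac{\|o_t-\mu'\|^2-\|o_t-\mu_z\|^2}{2\sigma^2}.
\]
Substituting $o_t-\mu_z=\eta_t$ and $o_t-\mu'=\eta_t+\delta$ and expanding the square gives
\[
\Lambda(o_t|z)=\frac{\|\delta\|^2+2\langle\delta,\eta_t\rangle}{2\sigma^2}.
\]
Plugging in $\delta$ and $\sigma^2=\gamma^2/N_t$ then yields exactly the first displayed identity:
\[
\Lambda(o_t|z)=\frac{N_t\|z-\mu_z\|^2}{2(N_t-1)^2\gamma^2}+\frac{N_t\langle z-\mu_z,\eta_t\rangle}{(N_t-1)\gamma^2}.
\]

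\textbf{Step 3: the distribution.} Given $Z_t$, the first term is a constant, and the second is a linear functional $\langle v,\eta_t\rangle$ of the Gaussian vector $\eta_t$, with $v=\frac{N_t}{(N_t-1)\gamma^2}(z-\mu_z)$. Such a functional is $\mathcal N(0,\sigma^2\|v\|^2)$, and
\[
\sigma^2\|v\|^2=\frac{\gamma^2}{N_t}\cdot\frac{N_t^2\|z-\mu_z\|^2}{(N_t-1)^2\gamma^4}=\frac{N_t\|z-\mu_z\|^2}{(N_t-1)^2\gamma^2}.
\]
This gives the stated normal law, whose variance is twice its mean, consistent with $\mathcal N(a,2a)$.
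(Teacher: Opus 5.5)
Your proposal is correct and follows the paper's route. Like the paper, you take the ratio of two Gaussian densities with the shared covariance $\frac{\gamma^2}{N_t}I$, substitute $o_t=\mu_z+\eta_t$ and $\mu_{-z}=\mu_z-\frac{z-\mu_z}{N_t-1}$, expand the square, and read off the variance of the linear term in $\eta_t$.

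Flagging the equal-covariance convention for the counterfactual is a worthwhile addition. The paper adopts it silently, as does its appendix R\'enyi-divergence lemma, even though the mechanism as defined would use variance $\gamma^2/(N_t-1)$ on $Z_t\setminus\{z\}$.
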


\begin{lemma}\label{lemm:conditional_leakage_prob}
Given the aggregated empirical mean mechanism, the exact probability that a specific individual data point $z$ is purged (i.e., its log-likelihood ratio breaches the threshold $\tau$) conditioned on $Z_t$ is given in closed form by:
\begin{equation*}
    \Pr(\Lambda(o_t|z) > \tau\mid Z_t)
    =
    1 - \Phi\left(
    \frac{\tau (N_t-1)\gamma}
    {\sqrt{N_t}\|z-\mu_z\|}
    -
    \frac{\sqrt{N_t}\|z-\mu_z\|}
    {2(N_t-1)\gamma}
    \right)
\end{equation*}
where $\Phi(\cdot)$ denotes the cumulative distribution function of the standard normal distribution.
\end{lemma}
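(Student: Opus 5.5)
The statement follows directly from Lemma~\ref{lemm:log_likelihood_distribution}: conditioned on $Z_t$, the only remaining randomness is the mechanism noise $\eta_t$, and $\Lambda(o_t|z)\mid Z_t$ is Gaussian with mean $a$ and variance $2a$, where
\[
a = \frac{N_t\|z-\mu_z\|^2}{2(N_t-1)^2\gamma^2}.
\]
The plan is therefore to standardize this Gaussian and read off its upper tail at $\tau$. First I will double-check the variance from the explicit form in Lemma~\ref{lemm:log_likelihood_distribution}. The random part is $\frac{N_t}{(N_t-1)\gamma^2}\langle z-\mu_z,\eta_t\rangle$ with $\eta_t\sim\mathcal N(0,\frac{\gamma^2}{N_t}I)$. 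Hence $\langle z-\mu_z,\eta_t\rangle\sim\mathcal N(0,\frac{\gamma^2}{N_t}\|z-\mu_z\|^2)$, and multiplying by the prefactor squared gives variance $\frac{N_t\|z-\mu_z\|^2}{(N_t-1)^2\gamma^2}=2a$, consistent with the lemma.

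Next, writing $\Lambda = a + \sqrt{2a}\,G$ with $G\sim\mathcal N(0,1)$, I get
\[
\Pr(\Lambda(o_t|z)>\tau\mid Z_t)=\Pr\!\left(G>\frac{\tau-a}{\sqrt{2a}}\right)=1-\Phi\!\left(\frac{\tau}{\sqrt{2a}}-\frac{a}{\sqrt{2a}}\right).
\]
It remains to simplify the two terms. Since $\sqrt{2a}=\frac{\sqrt{N_t}\|z-\mu_z\|}{(N_t-1)\gamma}$, the first term is $\frac{\tau(N_t-1)\gamma}{\sqrt{N_t}\|z-\mu_z\|}$. The second term is $\sqrt{a/2}=\frac{\sqrt{N_t}\|z-\mu_z\|}{2(N_t-1)\gamma}$. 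Together these give exactly the claimed expression.

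The only delicate point is the degenerate case $z=\mu_z$, where $a=0$ and $\Lambda\equiv 0$. There the probability is $0$ because $\tau>0$, which matches the formula interpreted as a limit: the argument of $\Phi$ tends to $+\infty$. I will also note that $N_t\ge 2$ is needed for the expression to be defined. No real obstacle is expected beyond this algebraic bookkeeping.
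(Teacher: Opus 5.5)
Your proposal is correct and follows essentially the same route as the paper: take the law $\Lambda(o_t|z)\mid Z_t\sim\mathcal{N}(a,2a)$ from Lemma~\ref{lemm:log_likelihood_distribution}, standardize, and simplify $(\tau-a)/\sqrt{2a}$ into the stated argument of $\Phi$. Your side remarks on the degenerate case $z=\mu_z$ and on needing $N_t\ge 2$ are sensible additions that the paper leaves implicit.
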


Unlike the generalized Rényi divergence bound in \eqref{eq:reyni_bound_gaussian}, which yields a conservative upper limit subject to rigid parametric constraints, Lemma \ref{lemm:conditional_leakage_prob} exactly characterizes the tail probability of leakage. This closed-form expression eliminates the theoretical fragility associated with spatial outliers.

\begin{remark}[Connection to Gaussian DP]\label{rem:gdp}
With $\mu_{z'} \triangleq \sqrt{N_t}\|z-\mu_z\|/(\gamma({N_t}-1)) \approx \|z-\mu_z\|/(\gamma\sqrt{N_t})$, Lemma~\ref{lemm:log_likelihood_distribution} reads $\Lambda(o_t|z) \mid Z_t \sim \mathcal{N}(\mu_{z'}^2/2, \mu_{z'}^2)$, which is the privacy loss of a $\mu_{z'}$-Gaussian-DP pair \citep{dong2022gaussian}: for the record $z$, membership inference is exactly as hard as testing $\mathcal{N}(0,1)$ against $\mathcal{N}(\mu_{z'},1)$. Lemma~\ref{lemm:conditional_leakage_prob} is then the power of the Neyman--Pearson test at likelihood-ratio threshold $e^\tau$ --- equivalently, of the Bayes-optimal attack with prior odds of membership $e^{-\tau}$. Thus, maintaining a fixed FPR $\nu$ corresponds to the record-dependent threshold $\tau = \mu_{z'} \Phi^{-1}(1-\nu) - \mu_{z'}^2/2$, as discussed in \Cref{sec:mia}, and since $\|\bar z\|_2,\|\mu_{\bar z}\|_2 \le R$, one recovers the worst-case $(2R/(\gamma\sqrt{N_t}))$-Gaussian-DP guarantee.
\end{remark}

We now focus on this setting allowing the clipping of vectors for the rest of our results, under the following assumption.

\begin{assumption} \label{assump:normal}
We assume that the records are drawn independently and identically from a $d$-dimensional multivariate Gaussian distribution, ($z \sim \mathcal{N}(\mu,\Sigma)$), with probability density function
\begin{equation*}
    f_{\mathcal{N}}(z)
    = \frac{1}{\sqrt{(2\pi)^d |\Sigma|}}
    \exp\left(
    -\frac{1}{2}(z-\mu)^T\Sigma^{-1}(z-\mu)
    \right).
\end{equation*}
\end{assumption}

\begin{lemma}[Expected Population Evolution]\label{lemm:gaussian_mia_poplin_evolution}
We define the \textit{marginal leakage probability}, $p^{(t-1)}_\tau$, as the expected fraction of the population that breaches the privacy threshold $\tau$ at round $t-1$, i.e $p^{(t-1)}_{\tau} = \mathbb{E}(\mathds{1}(\Lambda(o_{t-1}|z) > \tau)$. 
Letting $\bar{z}$ denote the record subjected to the clipping and $\mu_{\bar z}$ denote the sample mean of clipped vectors, the marginal leakage probability evaluates to:
\begin{align}\label{eq:p_tau_integral}
p^{(t-1)}_\tau
&=
\bigint_{\mathbb{R}^{dN_{t-1}}}
\left[
1-\Phi\!\left(
\frac{\tau (N_{t-1}-1)\gamma}
{s\sqrt{N_{t-1}}}
-
\frac{s\sqrt{N_{t-1}}}
{2(N_{t-1}-1)\gamma}
\right)
\right]
\prod_{j=1}^{N_{t-1}} f_{\mathcal{N}}(z_j)\,
d z_{1:N_{t-1}},
\end{align}
Where $s\triangleq\|\bar z_1-\mu_{\bar z}\|_2$ and $d z_{1:N_{t-1}} \triangleq d z_1 \cdot\dots d {z_{N_{t-1}}}$.Under assumption \ref{assump:normal}, the expected population size at round $t$ in \Cref{alg:memoryless_dp}, is given by:
\begin{equation}
    \expectation[N_t \mid N_{t-1}] = N_{t-1} \Big( 1 + r - (q + r) p^{(t-1)}_\tau \Big)
\end{equation}
\end{lemma}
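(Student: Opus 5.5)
The plan has two parts: first obtain the integral formula \eqref{eq:p_tau_integral} for $p^{(t-1)}_\tau$, then derive the population recursion by linearity of expectation over \Cref{alg:memoryless_dp}.

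\textbf{Marginal leakage probability.} Start from the definition $p^{(t-1)}_\tau=\mathbb{E}[\mathds{1}(\Lambda(o_{t-1}|z)>\tau)]$. Apply the tower property, conditioning on the dataset $Z_{t-1}=(z_1,\dots,z_{N_{t-1}})$ with $N_{t-1}$ fixed. By exchangeability of i.i.d. records, we may take the target to be $z=z_1$ without loss of generality.

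Conditionally on $Z_{t-1}$, the only randomness is the Gaussian noise $\eta_{t-1}$. Repeating the computation of \Cref{lemm:log_likelihood_distribution} with clipped records is immediate: the mechanism depends on the data only through $\bar z_i$, so $z-\mu_z$ is simply replaced by $\bar z_1-\mu_{\bar z}$. \Cref{lemm:conditional_leakage_prob} then gives the conditional probability $1-\Phi(\cdot)$ with $s=\|\bar z_1-\mu_{\bar z}\|_2$. Under Assumption~\ref{assump:normal}, the joint density of $Z_{t-1}$ is $\prod_j f_{\mathcal N}(z_j)$, and integrating the conditional probability against it yields \eqref{eq:p_tau_integral}. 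I will add a remark that when $s=0$ the integrand is read as a limit: the $\Phi$-argument tends to $+\infty$, so the integrand is $0$ on this null set.

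\textbf{Population recursion.} Write
\[
N_t=N_{t-1}-\sum_{i=1}^{N_{t-1}}L_iB_i+\sum_{i=1}^{N_{t-1}}(1-L_i)B'_i,
\]
where $L_i=\mathds{1}(\Lambda(o_{t-1}|z_i)>\tau)$, $B_i\sim\mathcal B(q)$ and $B'_i\sim\mathcal B(r)$. The Bernoullis are drawn independently of the data and the noise. Taking conditional expectations given $N_{t-1}$ and using this independence gives
\[
\mathbb{E}[N_t\mid N_{t-1}]=N_{t-1}-q\sum_i\mathbb{E}[L_i]+r\sum_i(1-\mathbb{E}[L_i]).
\]
By exchangeability, every $\mathbb{E}[L_i]$ equals $p^{(t-1)}_\tau$. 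The sum therefore collapses to $N_{t-1}(1+r-(q+r)p^{(t-1)}_\tau)$.

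\textbf{Main obstacle.} The $L_i$ are strongly dependent, since they share the same $o_{t-1}$ and the same sample mean, so the distribution of $N_t$ is intricate. Linearity of expectation sidesteps this entirely, and only the marginals matter. The one delicate step is justifying that all marginals are equal: exchangeability must hold for the joint law of $(Z_{t-1},\eta_{t-1})$ given $N_{t-1}$. This follows because records are drawn fresh and i.i.d. at every round, independently of the past, so $N_{t-1}$ carries no information about the $Z_{t-1}$ drawn for that round.
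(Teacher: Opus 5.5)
Your proposal is correct and follows the same route as the paper: condition on $Z_{t-1}$, apply the clipped version of Lemma~\ref{lemm:conditional_leakage_prob}, use that the records are identically distributed so that every $\mathbb{E}[L_i]$ equals $p^{(t-1)}_\tau$, and conclude by linearity of expectation in the update rule. Your explicit Bernoulli coins $B_i, B'_i$ and your argument that fresh sampling makes $Z_{t-1}$ independent of $N_{t-1}$ make rigorous a step the paper handles loosely, since it writes an ``exact'' update with $q$ and $r$ multiplying the indicators, which is already averaged over the coins.
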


Lemma \ref{lemm:gaussian_mia_poplin_evolution} gives an exact characterization of the user population evolution. As in Lemma~\ref{lemm:unleaked_growth}, the expected population is the previous population multiplied by a growth factor. However, unlike the previous case, this factor is determined by the marginal leakage probability $p_\tau$, whose integral form captures the dependence on several parameters: the dimension $d$, the threshold $\tau$, the clipping bound $R$ and the noise scale $\gamma$, instead of relying only on the privacy parameter $\beta$.

\subsection{Critical Noise Threshold for user population Survival}

To determine whether the user population survives or collapses, we denote similarly as before $C_t(\gamma)$ the expected ratio of the population size between consecutive rounds $t$ and $t-1$:
\begin{equation}
    C_t(\gamma) \triangleq \frac{\mathbb{E}[N_t \mid N_{t-1}]}{N_{t-1}} = 1 + r - (q + r)p^{(t-1)}_\tau(\gamma)
\end{equation}

The continuous growth factor $C_t(\gamma) = 1 + r - (q+r)p^{(t-1)}_\tau(\gamma)$ mirrors the discrete model of Lemma \ref{lemm:unleaked_growth} by substituting the leakage probability $\frac12+\beta$ of randomized response with the marginal leakage probability $p^{(t-1)}_\tau(\gamma)$. This substitution preserves the system's monotonicity: increasing the noise $\gamma$ strictly decreases $p^{(t-1)}_\tau(\gamma)$, improving population stability. However, because $p^{(t-1)}_\tau(\gamma)$ requires evaluating a non-linear integral over a multivariate data distribution, the boundary condition $C_t(\gamma) \ge 1$ cannot be inverted analytically in general, but it can easily be computed numerically. Moreover, $C$ is now time-dependent unlike in section \ref{section:randomized_response}, introducing an additional layer of performative dynamics into the system.

\begin{figure}[htbp]
    \centering
    \begin{subfigure}[b]{0.25\textwidth}
        \centering
        \includegraphics[width=\textwidth]{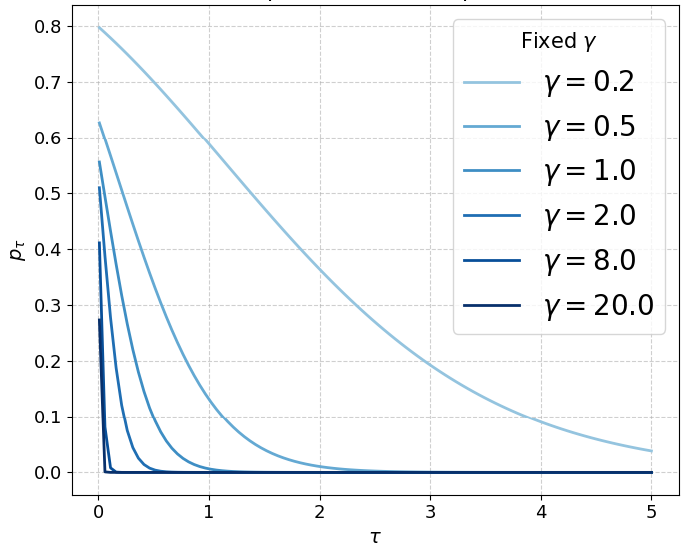}
        \caption{$p_\tau$ vs $\tau$}
        \label{fig:sub_ptau}
    \end{subfigure}
    \hfill
    \begin{subfigure}[b]{0.33\textwidth}
        \centering
        \includegraphics[width=\textwidth]{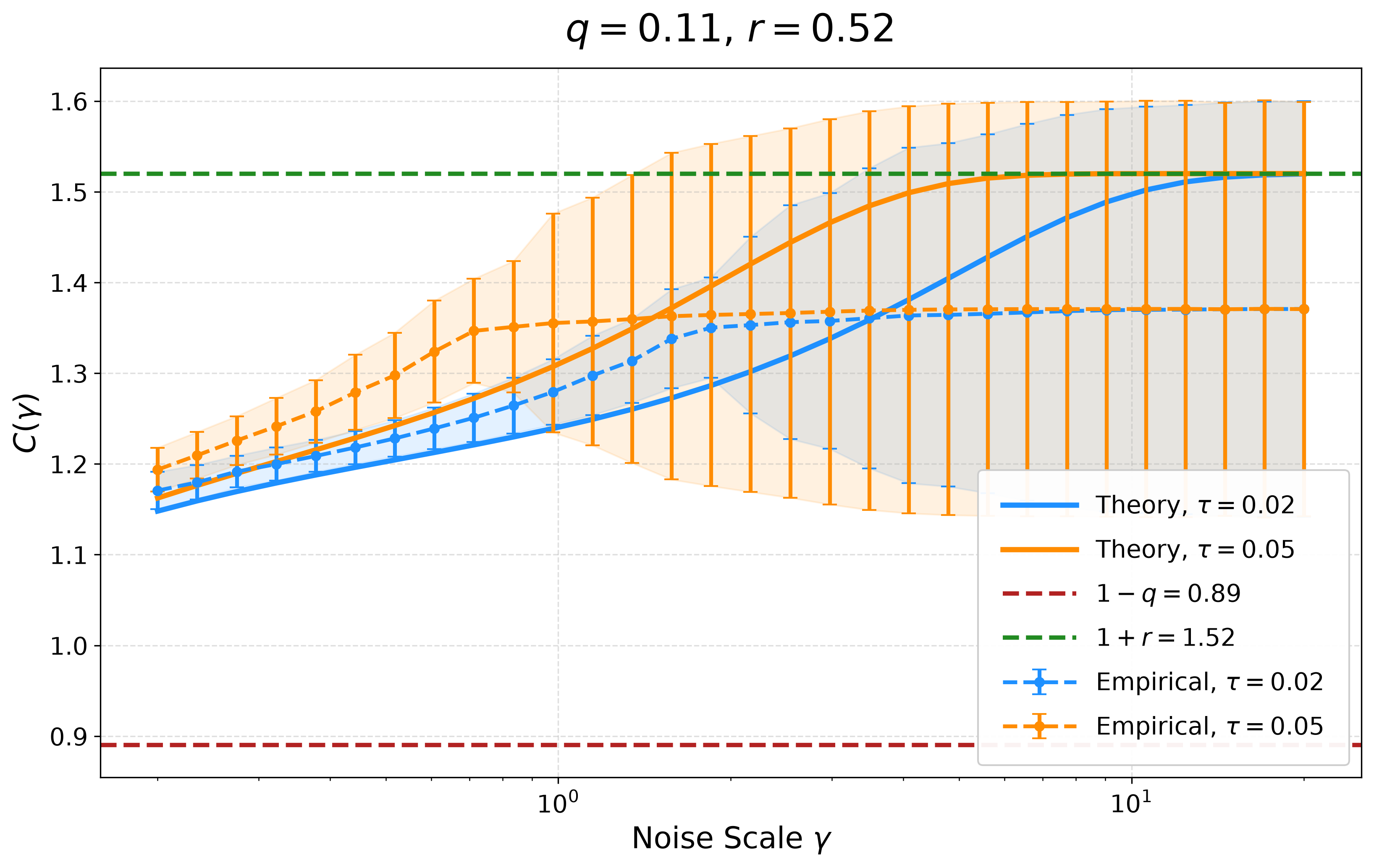} 
        \caption{$C(\gamma)$ vs $\gamma$ for small $q$}
        \label{fig:sub_cgamma_theory}
    \end{subfigure}
    \hfill
    \begin{subfigure}[b]{0.33\textwidth}
        \centering
        \includegraphics[width=\textwidth]{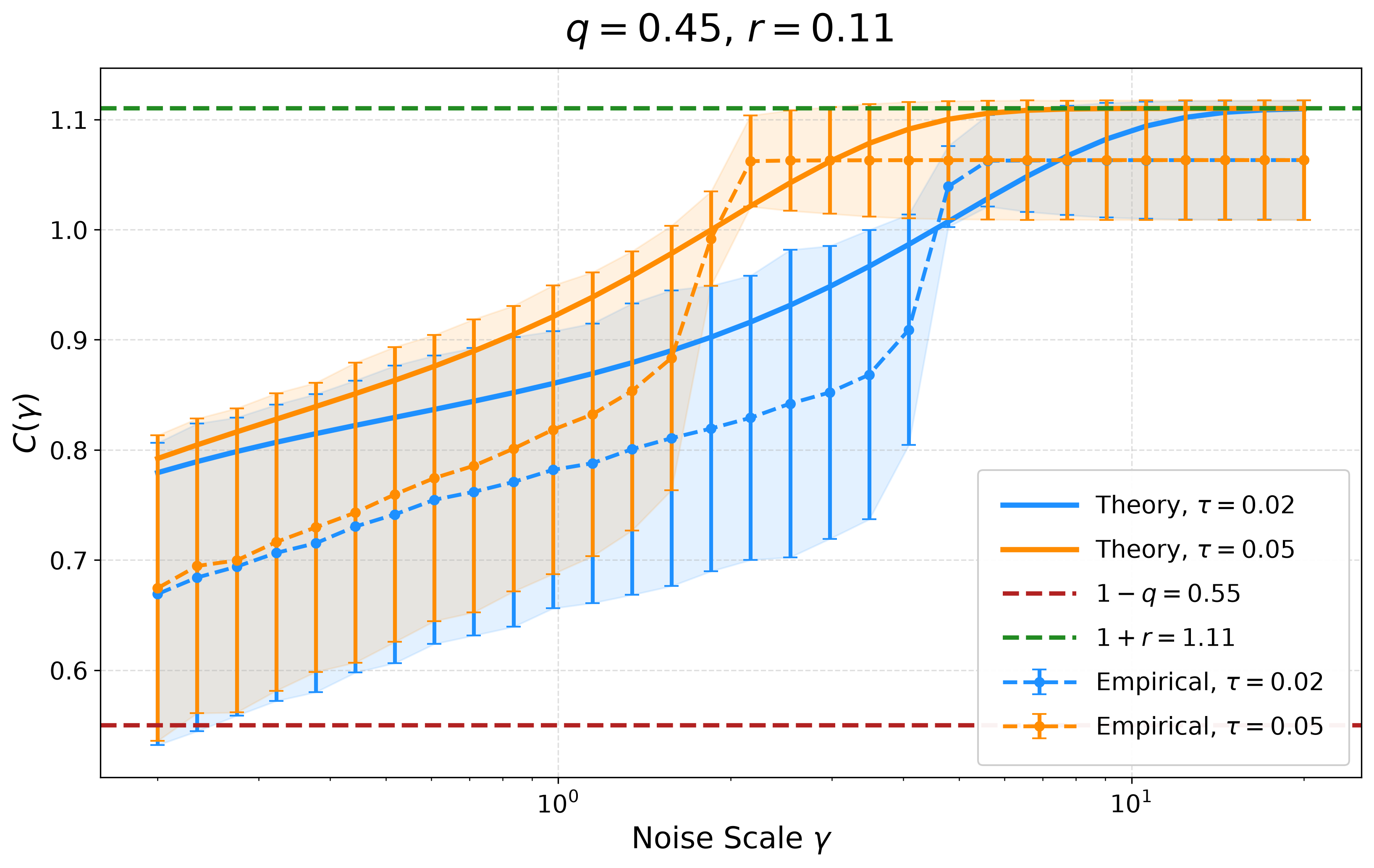} 
        \caption{$C(\gamma)$ vs $\gamma$ for large $q$}
        \label{fig:sub_cgamma_practice}
    \end{subfigure}
    
    \caption{Behavior of \(p_\tau\) with \(\tau\) and \(C(\gamma)\) with \(\gamma\), comparing theoretical predictions with empirical simulations. (a) \(N=100\); (b) \(T=50\); (c) \(T=10\), with the population clipped at \(20{,}000\).
}
    \label{fig:combined_behaviour}
\end{figure}

Since the marginal leakage probability $p_\tau(\gamma)$ is a function of the mechanism's noise scale, the platform can directly steer the demographic growth by tuning $\gamma$. The boundary condition for demographic survival is $C(\gamma) \ge 1$. Under the minor assumption that the clipping bound $R=\Omega(\sqrt{d})$, the following theorem establishes a critical noise threshold $\gamma_c$ that is sufficient to satisfy this condition and prevent a demographic collapse.

\begin{theorem}[Critical Noise Threshold for Demographic Survival]\label{theo:noise_threshold_mia}
 The condition $C_t(\gamma) \ge 1$ is guaranteed as soon as the mechanism's noise scale satisfies the explicit lower bound $\gamma \ge \gamma_c$. Given an $\ell_2$ clipping bound $R=\Omega(\sqrt{d})$, the critical threshold $\gamma_c$ is defined as:
\begin{equation}
    \gamma_c \triangleq
    {\Omega} \left(\frac{\sqrt{dN_{t-1}}}{N_{t-1}-1}
    \left(
    \frac{\Phi_{q,r}^{-1}
    +\sqrt{(\Phi_{q,r}^{-1})^2+2\tau}}
    {2\tau}
    \right)\right)
\end{equation}
where $\Phi_{q,r}^{-1} \triangleq \Phi^{-1}\left( \frac{q}{q+r} \right)$ is the probit function (the inverse of the standard normal cumulative distribution function) evaluated at the performative survival threshold.
\end{theorem}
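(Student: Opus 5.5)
The plan is to turn the survival condition into a condition on the marginal leakage probability, and then bound that probability uniformly using the clipping radius. By the definition of $C_t(\gamma)$ following Lemma~\ref{lemm:gaussian_mia_poplin_evolution}, we have $C_t(\gamma)\ge 1$ if and only if $(q+r)\,p^{(t-1)}_\tau(\gamma)\le r$, that is,
\[
p^{(t-1)}_\tau(\gamma)\le \frac{r}{q+r}.
\]
So it suffices to exhibit a $\gamma_c$ such that the integral \eqref{eq:p_tau_integral} is at most $r/(q+r)$ for every $\gamma\ge\gamma_c$.

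The first step replaces the random quantity $s=\|\bar z_1-\mu_{\bar z}\|_2$ by a deterministic worst case. Write $N=N_{t-1}$ and define
\[
h(s)=\frac{\tau(N-1)\gamma}{s\sqrt N}-\frac{s\sqrt N}{2(N-1)\gamma}.
\]
For fixed $\gamma$, $h$ is strictly decreasing in $s>0$. Hence $1-\Phi(h(s))$ is increasing in $s$, so larger deviations leak more, as expected. Since $\|\bar z_1\|_2,\|\mu_{\bar z}\|_2\le R$, we have $s\le 2R$ almost surely. Therefore the integrand is bounded pointwise by $1-\Phi(h(2R))$, and integrating against the product density gives
\[
p^{(t-1)}_\tau\le 1-\Phi\big(h(2R)\big).
\]
The event $s=0$ is harmless: the integrand then tends to $0$, because $h\to+\infty$. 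This step discards all information about $\Sigma$, which is exactly why the final result is only an $\Omega$-type sufficient threshold.

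The second step inverts the resulting scalar condition. The inequality $1-\Phi(h(2R))\le r/(q+r)$ is equivalent to $h(2R)\ge c$, where $c\triangleq\Phi_{q,r}^{-1}=\Phi^{-1}\!\big(q/(q+r)\big)$. Substitute $x=\frac{(N-1)\gamma}{2R\sqrt N}>0$, so that $h(2R)=\tau x-\frac{1}{2x}$. The condition then becomes the quadratic inequality $2\tau x^2-2cx-1\ge 0$. Its positive root is
\[
x^\star=\frac{c+\sqrt{c^2+2\tau}}{2\tau},
\]
and the inequality holds for all $x\ge x^\star$. This root is well defined and positive for every sign of $c$, since $\sqrt{c^2+2\tau}>|c|$. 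Translating back gives the sufficient condition
\[
\gamma\ge \frac{2R\sqrt{N}}{N-1}\cdot\frac{c+\sqrt{c^2+2\tau}}{2\tau}.
\]
Using $R=\Theta(\sqrt d)$, which is the regime allowed by the assumption $R=\Omega(\sqrt d)$, this is of the form $\Omega\big(\frac{\sqrt{dN}}{N-1}\cdot\frac{c+\sqrt{c^2+2\tau}}{2\tau}\big)$, matching $\gamma_c$ in the statement. Monotonicity of $h$ in $\gamma$ then gives the claim for all $\gamma\ge\gamma_c$.

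The main obstacle I expect is interpretive rather than technical. The worst-case bound $s\le 2R$ is crude, and the $\Omega$ notation hides the dependence on $R$. I would state the explicit constant $2R$ first and only then pass to the $\sqrt d$ scaling. I would also remark that a sharper threshold could instead use a high-probability bound $s\lesssim\sqrt{\mathrm{Tr}(\Sigma)}$ together with a union-type correction, but the worst-case route is the one that yields the clean closed form in the statement.
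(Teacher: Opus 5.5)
Your proposal is correct and follows the paper's proof essentially step for step: reduce $C_t(\gamma)\ge 1$ to $p^{(t-1)}_\tau\le r/(q+r)$, bound the integrand by its worst case $s=2R$, and solve the quadratic in $u=(N_{t-1}-1)\gamma/(2R\sqrt{N_{t-1}})$ for its positive root. Your explicit argument that $h(s)$ is decreasing rigorously justifies the worst-case step, which the paper only asserts. Restricting to $R=\Theta(\sqrt d)$ is unnecessary, since $R=\Omega(\sqrt d)$ alone already gives the stated $\Omega$ form.
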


As before, $C_t(\gamma) < 1$ implies a demographic collapse initiated at round $t$, destroying the platform's utility over time, which means that a minimum level of privacy is needed for long-term success. Furthermore, the explicit formulation of $\gamma_c$ decouples the geometric requirements of the data distribution ($\approx \sqrt{d/N_{t-1}}$) from the purely behavioral constraints of the population (the $\Phi_{q,r}^{-1}$ and $\tau$ terms).

Recall that in this setting the utility is measured by the Mahalanobis error; the following theorem provides tight, finite-sample bounds on its expectation.

\begin{theorem}[Finite-Sample Bounds on the Expected Mahalanobis Error] \label{theo:bounds_on_error}
Assume that the population is floored at one participant ($N_t \ge 1$). Conditioned on $N_{t-1}$, the expected squared Mahalanobis error of the private empirical mean $o_t$ satisfies:
\begin{align}\label{eq:tight_bounds_mahalanobis}
    \frac{\gamma^2 \mathrm{Tr}(\Sigma^{-1}) + d}{C_t(\gamma)N_{t-1}}
    \le \expectation \left[ \|o_t - \mu\|_{\Sigma^{-1}}^2 \;\middle|\; N_{t-1} \right]
    \le\, \frac{\gamma^2 \mathrm{Tr}(\Sigma^{-1}) + d}{C_t(\gamma)N_{t-1}} \cdot  \bigO\left(\frac{1}{1-q}\right)
\end{align}
where the asymptotic notation in the upper bound is particularly useful in the regime $q \rightarrow 1$.

\end{theorem}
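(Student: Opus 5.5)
The plan is to condition first on $N_t$ and then average over the one-step transition $N_{t-1}\to N_t$. By the tower property and Lemma~\ref{lemm:Mahalanobis_error},
\[
\expectation\left[\|o_t-\mu\|^2_{\Sigma^{-1}} \,\middle|\, N_{t-1}\right] = \left(\gamma^2\mathrm{Tr}(\Sigma^{-1})+d\right)\,\expectation\left[\tfrac{1}{N_t}\,\middle|\, N_{t-1}\right],
\]
so both bounds reduce to two-sided control of $\expectation[1/N_t\mid N_{t-1}]$ relative to $1/(C_t(\gamma)N_{t-1})$. Lemma~\ref{lemm:gaussian_mia_poplin_evolution} gives $\expectation[N_t\mid N_{t-1}]=C_t(\gamma)N_{t-1}$. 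The floor $N_t\ge 1$ keeps $1/N_t$ well defined. The floor may slightly raise the mean, which I will either absorb into constants or handle by noting that it only helps the direction of Jensen used below.

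For the lower bound, Jensen's inequality applied to the convex map $x\mapsto 1/x$ on $(0,\infty)$ gives
\[
\expectation[1/N_t\mid N_{t-1}]\ge 1/\expectation[N_t\mid N_{t-1}] = 1/\big(C_t(\gamma)N_{t-1}\big).
\]
This yields the left inequality directly, using the exact mean if flooring is ignored in Lemma~\ref{lemm:gaussian_mia_poplin_evolution}.

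For the upper bound, I write $N_t \ge S$, where $S\triangleq\sum_{i=1}^{N_{t-1}} B_i$ and $B_i$ is the indicator that participant $i$ stays. Each participant stays with probability at least $1-q$: if leaked, they stay with probability $1-q$; otherwise they stay for sure. Recruits only increase $N_t$, so $1/N_t\le 1/\max(1,S)$.

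The key estimate is $\expectation[1/\max(1,S)]\le c/(N_{t-1}(1-q))$, a reverse-Jensen or "inverse moment of a binomial" bound. If the $B_i$ were independent Bernoulli$(\ge 1-q)$, the standard identity gives $\expectation[1/(1+S)] = \frac{1-q^{n+1}}{(n+1)(1-q)} \le \frac{1}{n(1-q)}$ for the Binomial$(n,1-q)$ case. Stochastic dominance then extends this to larger success probabilities, and $1/\max(1,S)\le 2/(1+S)$.

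I then compare with $1/(C_t N_{t-1})$. Since $C_t(\gamma)\le 1+r\le 2$, we get $1/N_{t-1}\le 2/(C_tN_{t-1})$. Combining these gives
\[
\expectation[1/N_t\mid N_{t-1}]\le \frac{4}{C_t(\gamma)N_{t-1}(1-q)},
\]
which is the $\bigO(1/(1-q))$ upper bound.

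The main obstacle is dependence. The leakage events $\Lambda(o_{t-1}|z_i)>\tau$ are coupled through the common noise $\eta_{t-1}$ and the shared sample mean, so $S$ is not exactly binomial. I will address this by conditioning on $(Z_{t-1},\eta_{t-1})$. Given these, the leakage indicators are fixed, and the departure coins (probability $q$ each) are independent across users. Hence, conditionally, $S$ stochastically dominates a Binomial$(N_{t-1},1-q)$, because non-leaked users stay deterministically. Since $x\mapsto 1/(1+x)$ is decreasing, the binomial inverse-moment bound applies conditionally, and the tower property finishes the argument without needing independence of the leakage events.
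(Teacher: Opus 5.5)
Your proof is correct. The reduction to $\expectation[1/N_t\mid N_{t-1}]$ and the Jensen lower bound match the paper. Your upper bound takes a genuinely different route.

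The paper applies the Kantorovich inequality to $X=N_t/N_{t-1}$, asserting $X\in[1-q,1+r]$ almost surely. This gives $\expectation[1/X]\le \frac{(2+r-q)^2}{4(1-q)(1+r)}\cdot\frac{1}{\expectation[X]}$, i.e.\ the explicit constant $\kappa_{q,r}$ directly in front of $1/(C_t(\gamma)N_{t-1})$.

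\begin{itemize}
\item \textbf{What the paper's route buys.} Its constant is sharper than your $4/(1-q)$: it tends to $1$ as $q,r\to 0$, and it needs no detour through $C_t\le 2$.
\item \textbf{Where it is fragile.} The almost-sure range $[1-q,1+r]$ only holds for the averaged update $N_t=N_{t-1}(1+r)-(q+r)\sum_i \mathds{1}(\Lambda(o_{t-1}|z_i)>\tau)$ from the proof of Lemma~\ref{lemm:gaussian_mia_poplin_evolution}. Under Algorithm~\ref{alg:memoryless_dp}, departures are independent coin flips, so every leaked user can leave in the same round. The ratio can then drop to $1/N_{t-1}$ under the floor, and Kantorovich on that range would give an $N_{t-1}$-dependent constant.
\item \textbf{What your route buys.} You condition on $(Z_{t-1},\eta_{t-1})$, dominate the stayer count by $\mathrm{Bin}(N_{t-1},1-q)$, and use the exact inverse binomial moment. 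This handles the genuinely random dynamics and the coupling of leakage events through the shared noise and sample mean. It is also compatible with the floor. The only cost is a looser absolute constant, which the $\bigO(1/(1-q))$ statement tolerates.
\end{itemize}

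One small correction concerns the floor. Your remark that it ``only helps the direction of Jensen'' is backwards. Flooring can only raise $\expectation[N_t\mid N_{t-1}]$ above $C_t(\gamma)N_{t-1}$, where $C_t$ is the formula $1+r-(q+r)p^{(t-1)}_\tau$. That lowers $1/\expectation[N_t\mid N_{t-1}]$, so the chain $\expectation[1/N_t\mid N_{t-1}]\ge 1/\expectation[N_t\mid N_{t-1}]= 1/(C_t(\gamma)N_{t-1})$ breaks. Nor can the effect be absorbed into constants, since the stated lower bound carries none. The lower bound holds as stated only in two readings:
\begin{itemize}
\item the floor is ignored when invoking Lemma~\ref{lemm:gaussian_mia_poplin_evolution}, which is your fallback and exactly what the paper does; or
\item $C_t$ is read as the ratio for the floored mean.
\end{itemize}
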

\begin{proof}[Proof sketch]
Since the estimation error scales as $1/N_t$, we bound $\expectation[1/N_t\mid N_{t-1}]$. The lower bound follows from Jensen's inequality, while the upper bound uses the Kantrovich-type inequality for the multiplicative inverse function; see Appendix~\ref{app:gaussian_mia_proofs}.
\end{proof}

Theorem \ref{theo:bounds_on_error} establishes that the expected unconditional estimation error at round $t$, i.e., $\mathcal{E}_t(\gamma) \triangleq \expectation[|o_t-\mu|^2_{\Sigma^{-1}}]$, balances the mechanism variance in the numerator against the expected population size, $\mathbb{E}[N_t] = \expectation[\expectation[N_t\mid N_{t-1}]]$, in the denominator. Using a mean-field approximation on $N_t$, we approximate the latter as $\mathbb{E}[N_t] \approx N_0 \prod^{t}_{k=1} C_k(\gamma)$. Since we keep $\gamma$ constant throughout the entire experiment, we need to scale $\gamma_c$ with the minimum population size permissible by the system, say $N_{\min}$ to ensure $C_t(\gamma) >1$ for all $t$, i.e., $\gamma_c \propto 1/\sqrt{N_{\min}}$.  This explicit scaling law allows us to evaluate the system's asymptotic behavior under extreme noise regimes.

\begin{remark}[Finite-Horizon Optimization]
Corollaries \ref{cor:zero_noise} and \ref{cor:infinite_horizon} prove that the zero-noise regime ($\gamma \rightarrow 0^+$) implies geometric demographic collapse and diverging error. On the other hand, when noise increases ($\gamma \rightarrow \infty$) it enables unbounded exponential growth that eventually dominates the variance injected into the mechanism in our simplified model. Thus, for an infinite time horizon, infinite noise is theoretically optimal: the more private the mechanism is, the faster the growth. As practitioners operate under a finite horizon $T$ where the total population remains strictly bounded, we complement this result with a finite-horizon analysis.
\end{remark}

\begin{figure}[t]
    \centering


    \begin{subfigure}[t]{0.32\textwidth}
        \centering
        \includegraphics[width=\linewidth]{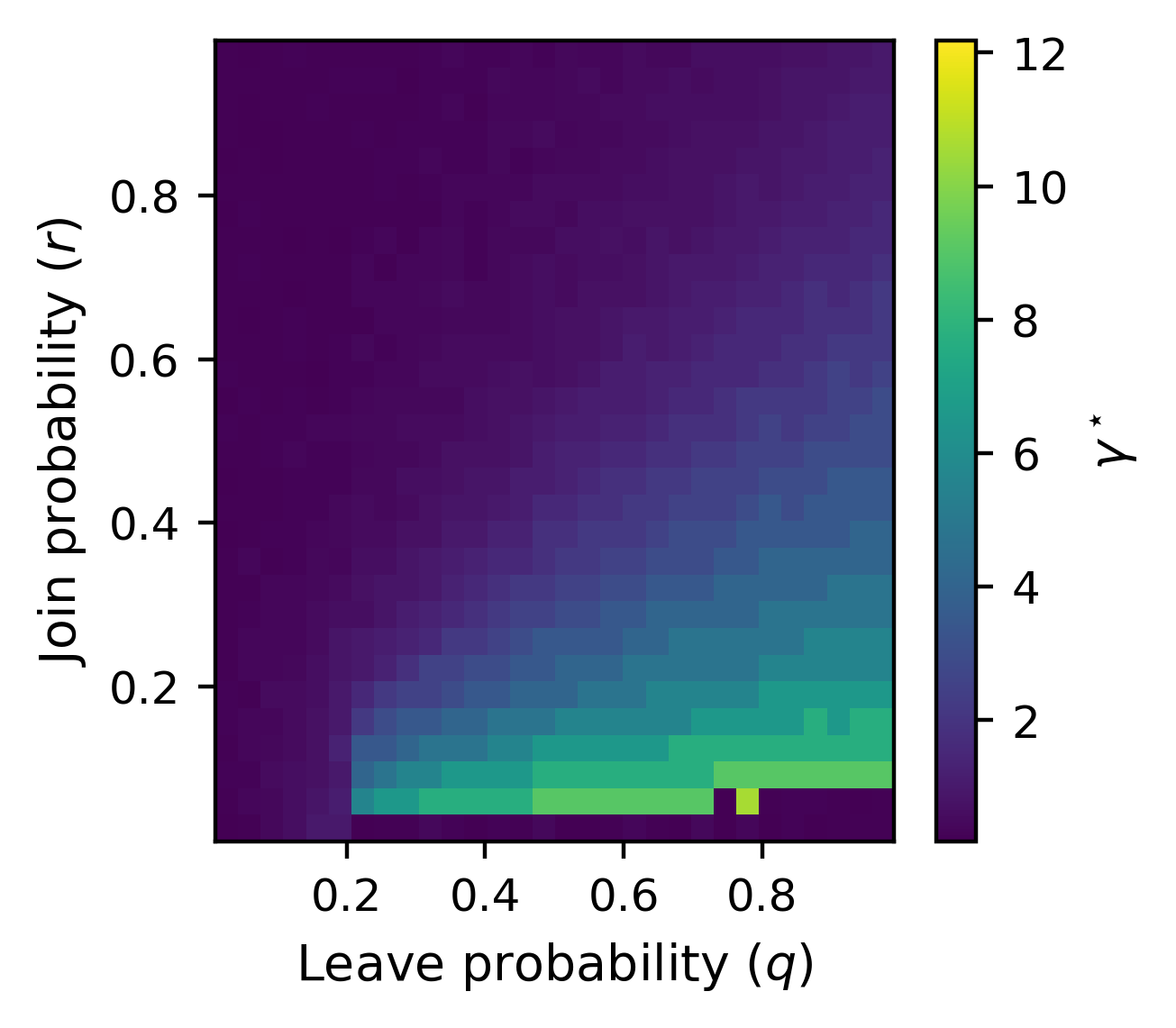}
        \caption{Empirical heatmap}
    \end{subfigure}
    \hfill
    \begin{subfigure}[t]{0.32\textwidth}
        \centering
        \includegraphics[width=\linewidth]{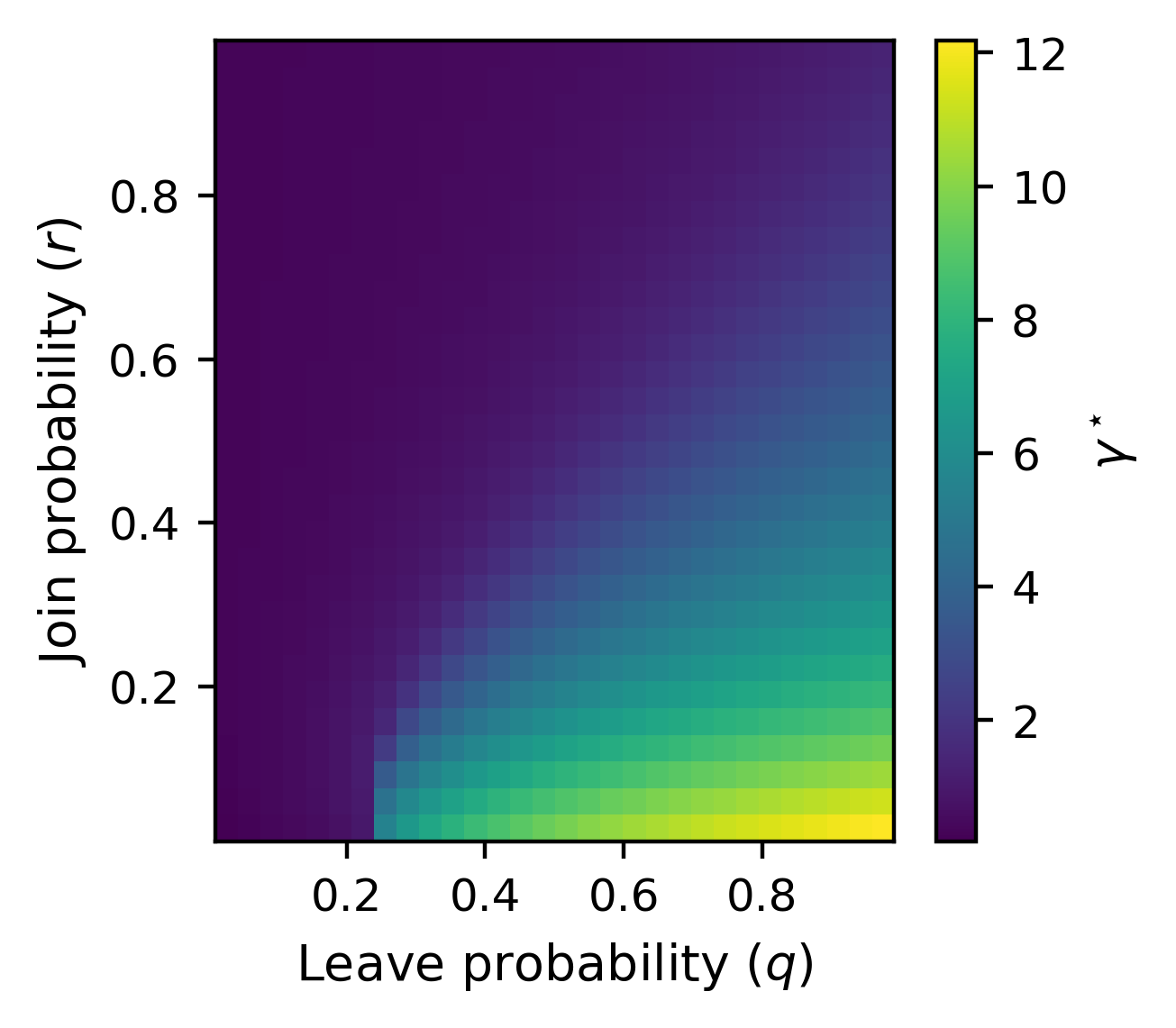}
        \caption{Theoretical heatmap}
    \end{subfigure}
    \hfill
    \begin{subfigure}[t]{0.32\textwidth}
        \centering
        \includegraphics[width=\linewidth]{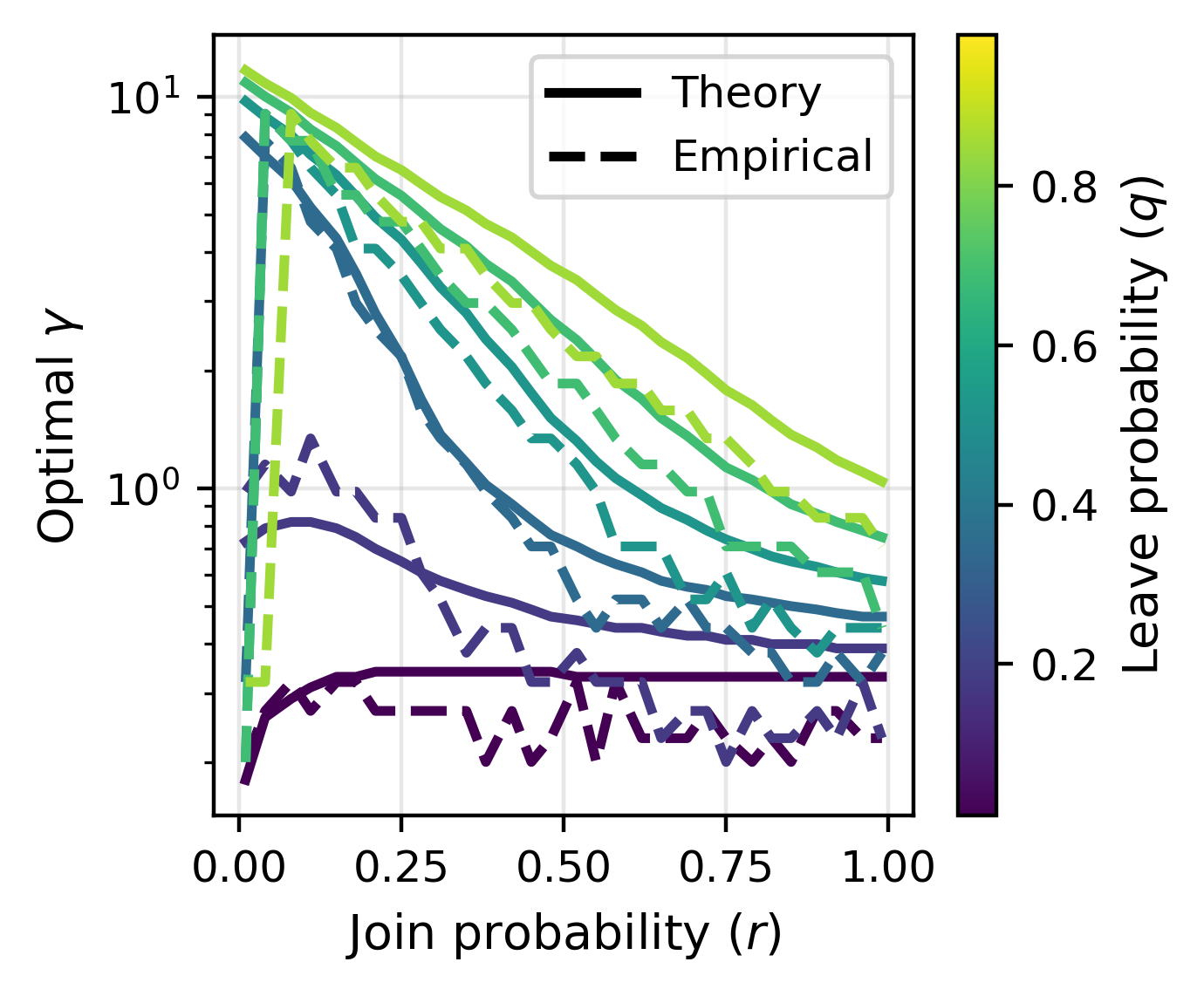}
        \caption{$\gamma^*$ line plot comparison}
    \end{subfigure}

    \vspace{0.5cm}


    \begin{subfigure}[t]{0.32\textwidth}
        \centering
        \includegraphics[width=\linewidth]{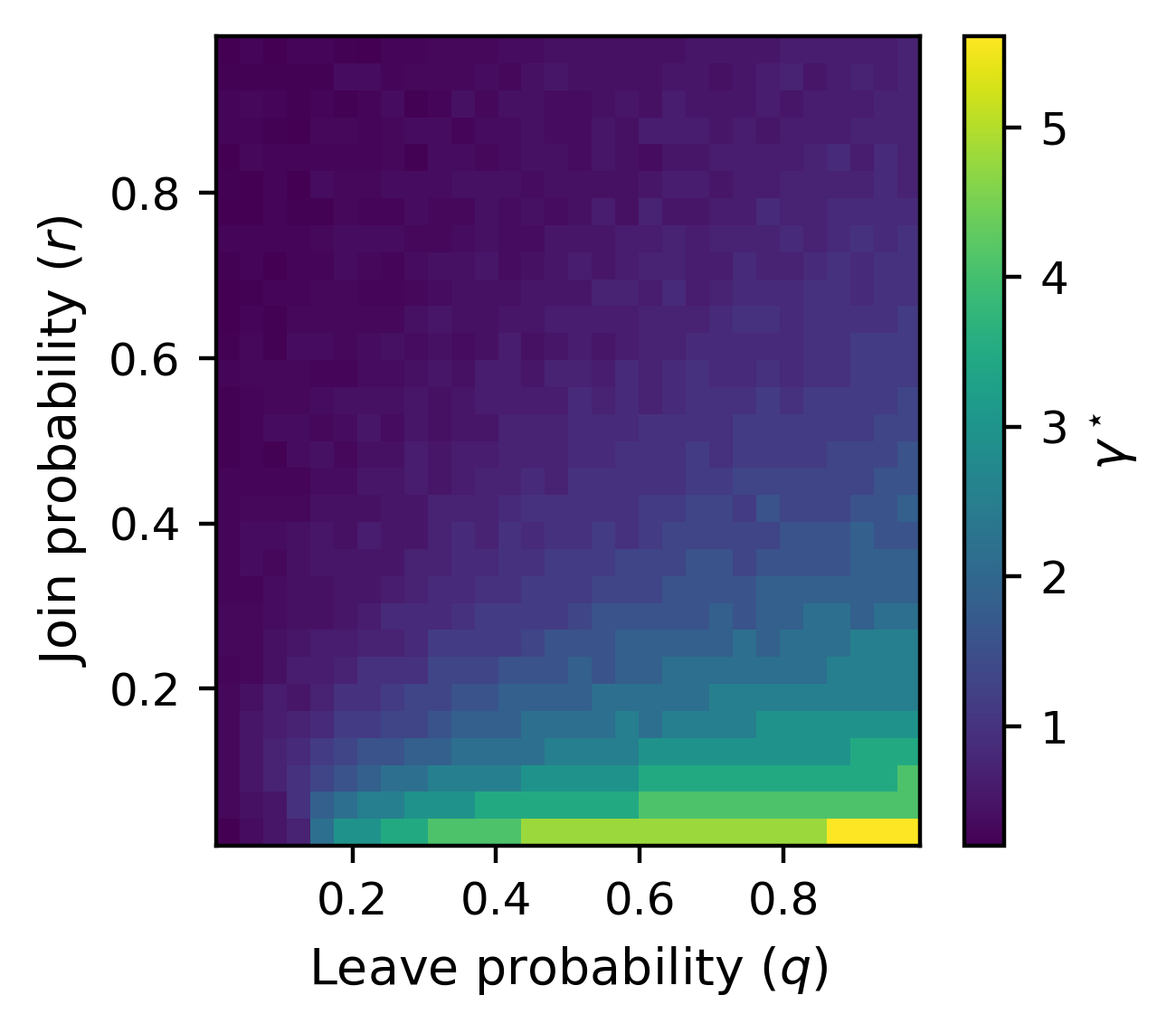}
        \caption{Empirical heatmap}
    \end{subfigure}
    \hfill
    \begin{subfigure}[t]{0.32\textwidth}
        \centering
        \includegraphics[width=\linewidth]{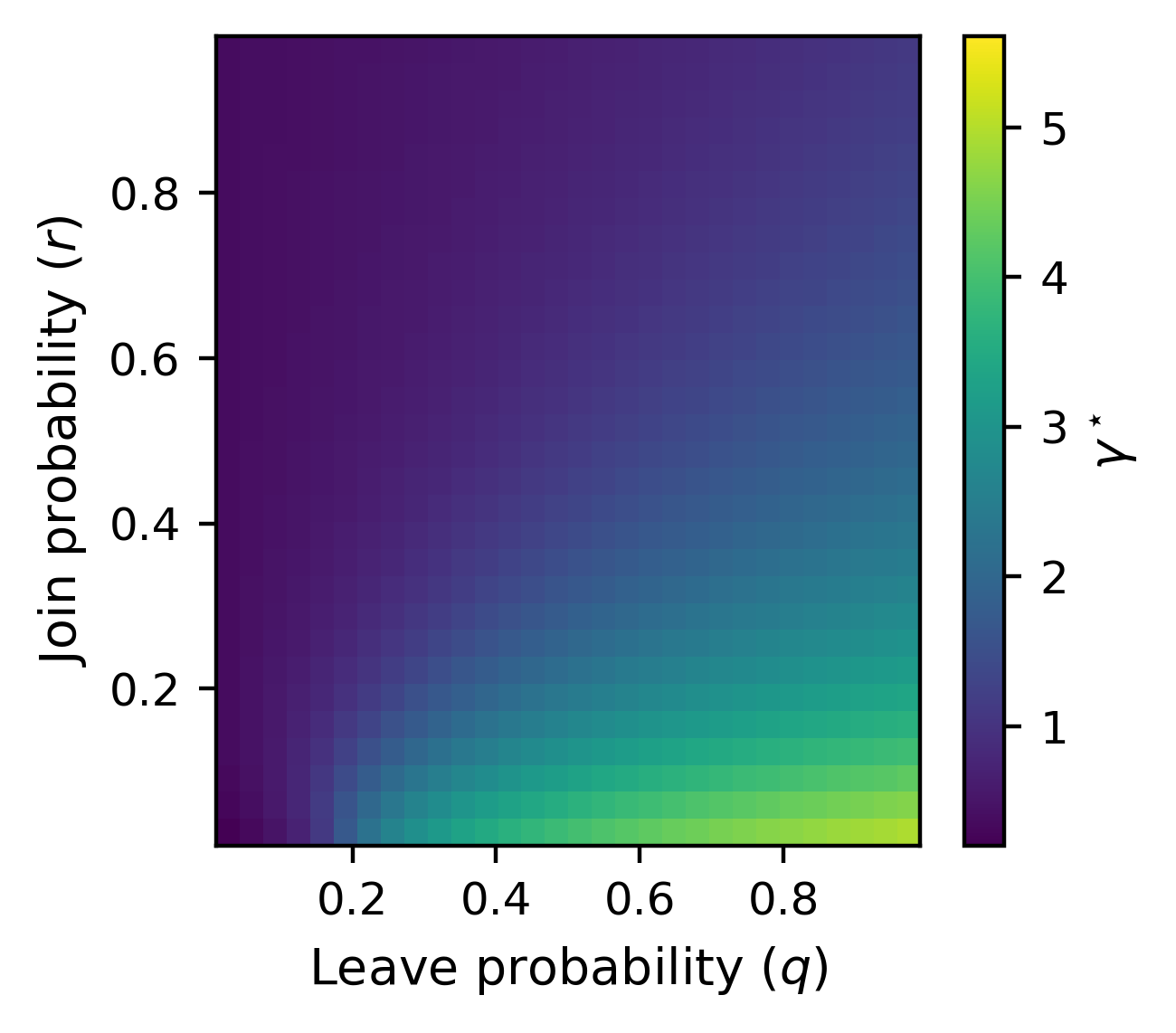}
        \caption{Theoretical heatmap}
    \end{subfigure}
    \hfill
    \begin{subfigure}[t]{0.32\textwidth}
        \centering
        \includegraphics[width=\linewidth]{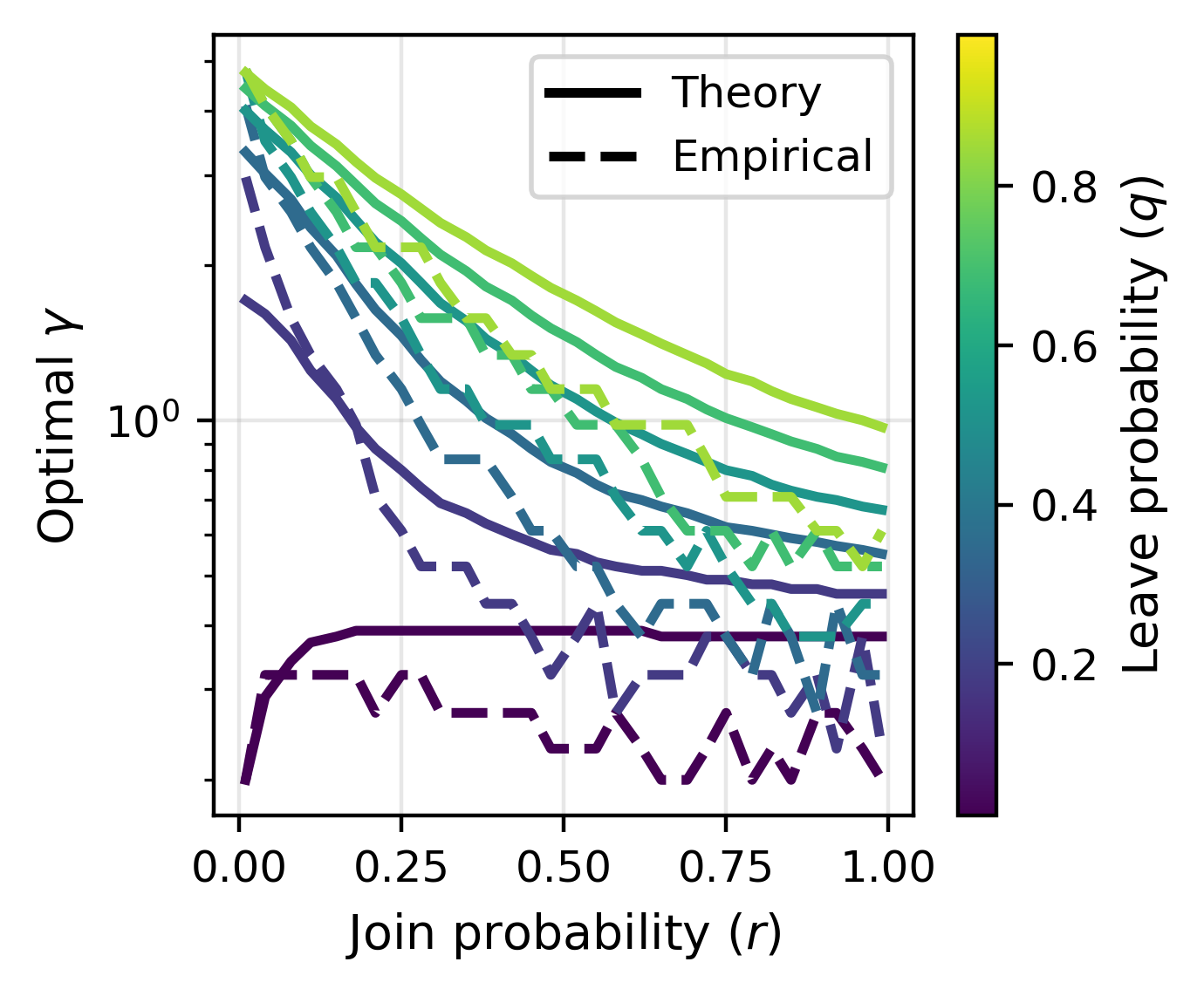}
        \caption{$\gamma$ line plot comparison}
    \end{subfigure}

    \caption{
    Comparison of the optimal noise scale $\gamma^*$ obtained from theory and simulations.
    The first row corresponds to $\tau = 0.02$, while the second row corresponds to $\tau = 0.05$. For each value of $\tau$, the left panel shows the empirical heatmap, the middle panel shows the theoretical heatmap, and the right panel compares the theoretical and empirical optimal $\gamma$ as a function of the join probability.
    }
    \label{fig:gamma_comparison}
\end{figure}

\begin{lemma}[Upper Bound on Cumulative Expected Error over a Finite Horizon]\label{lemm:cumulative_error}
For a finite operational timeline $T$ and an initial population $N_0$, let $C_t(\gamma) > 1$ denote the strictly positive expected per-round demographic growth factor for all $t$. Under mean field-approximation of $N_t$, the cumulative expected Mahalanobis error aggregated across all $T$ rounds, defined as $\mathcal{E}_{\text{total}}(\gamma) \triangleq \sum_{t=1}^T \mathcal{E}_t(\gamma)$ is upper bounded by $\mathcal{E}_{\text{total}}(\gamma) \leq \hat{\mathcal{E}}_{\text{total}}(\gamma)$, where:
\begin{equation}\label{eq:E_hat}
    \hat{\mathcal{E}}_{total}(\gamma) \triangleq
    \frac{\gamma^2 \mathrm{Tr}(\Sigma^{-1}) + d}{N_0 (C_1(\gamma) - 1)} \left( 1 - \frac{1}{[C_1(\gamma)]^T} \right)
\end{equation}
\end{lemma}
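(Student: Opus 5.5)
We will combine three ingredients: the conditional error formula of Lemma~\ref{lemm:Mahalanobis_error}, the mean-field approximation $\mathbb{E}[N_t]\approx N_0\prod_{k=1}^t C_k(\gamma)$ introduced after Theorem~\ref{theo:bounds_on_error}, and a monotonicity argument on the growth factors. Under the mean-field approximation, $N_t$ is replaced by its (deterministic) mean-field value. Lemma~\ref{lemm:Mahalanobis_error} then gives
\[
\mathcal{E}_t(\gamma)\approx\frac{\gamma^2\mathrm{Tr}(\Sigma^{-1})+d}{N_0\prod_{k=1}^t C_k(\gamma)} .
\]
Consistent with the lower bound of Theorem~\ref{theo:bounds_on_error}, no Jensen gap appears here. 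The numerator $K\triangleq\gamma^2\mathrm{Tr}(\Sigma^{-1})+d$ is constant in $t$, so the whole task reduces to bounding $\sum_{t=1}^T\big(\prod_{k=1}^t C_k(\gamma)\big)^{-1}$.

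The key step is to show $C_k(\gamma)\ge C_1(\gamma)$ for all $k\ge1$. This is the ``virtuous circle'' observed after Lemma~\ref{lemm:reyni_bound_gaussian}. Since each $C_k>1$, the mean-field population is nondecreasing, $N_{k-1}\ge N_0$. By Lemma~\ref{lemm:conditional_leakage_prob}, the conditional leakage probability is $1-\Phi\big(\tau(N-1)\gamma/(\sqrt N s)-\sqrt N s/(2(N-1)\gamma)\big)$. Its argument is increasing in $N$ for fixed $s$: the first term grows like $\sqrt N$ and the second decreases like $1/\sqrt N$. So the conditional leakage probability decreases with $N$. The delicate point is that the distribution of $s=\|\bar z_1-\mu_{\bar z}\|$ also depends on $N$. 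I would handle this by noting that $s$ stays bounded by $2R$, and that as $N$ grows, $\mu_{\bar z}$ concentrates while the law of $\|\bar z_1-\mathbb{E}\bar z\|$ does not depend on $N$. If full monotonicity of $p_\tau^{(k-1)}$ in $N$ via the integral \eqref{eq:p_tau_integral} turns out to be hard, the fallback is to state explicitly that $p_\tau$ is nonincreasing in the population size; this is the monotonicity the paper already invokes. This step is the main obstacle.

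Granting $C_k(\gamma)\ge C_1(\gamma)>1$, we get $\prod_{k=1}^t C_k(\gamma)\ge C_1(\gamma)^t$. Hence
\[
\mathcal{E}_{\text{total}}(\gamma)\le\frac{K}{N_0}\sum_{t=1}^T C_1(\gamma)^{-t}.
\]
Summing the geometric series with ratio $x=1/C_1<1$ gives $\sum_{t=1}^T x^t = x(1-x^T)/(1-x) = (1-C_1^{-T})/(C_1-1)$. This yields exactly $\hat{\mathcal{E}}_{\text{total}}(\gamma)$ in \eqref{eq:E_hat}. The hypothesis $C_t>1$ is used twice: to make the population nondecreasing, and to ensure $C_1-1>0$ so the closed form is well defined.
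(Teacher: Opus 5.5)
Your proposal is correct and follows the paper's proof essentially step for step: replace $N_t$ by its mean-field value $N_0\prod_{k=1}^t C_k(\gamma)$, lower-bound this product by $C_1(\gamma)^t$ using monotonicity of the growth factors, and sum the finite geometric series. You were right to flag the monotonicity step as delicate. The paper settles it through Lemma~\ref{lemm:C_gamma_increasing}, which simply asserts that $p^{(t)}_\tau$ decreases as the mean-field population grows (exactly your fallback), so the paper supplies no ingredient that your proposal lacks.
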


\begin{remark}[Optimization in the Growth Regime]
In the growth regime, where $\gamma \ge \gamma_c$ and consequently $C_t(\gamma) > 1$ for all $t$, directly optimizing the upper bound $\hat{\mathcal{E}}_{\text{total}}(\gamma)$ rather than the exact cumulative error $\mathcal{E}_{\text{total}}(\gamma)$ is a principled and practical surrogate objective. Corollary \ref{cor:infinite_horizon} shows that $\mathcal{E}$ and $\hat{\mathcal{E}}$ share the same convergence rate. Computationally, this substitution is highly advantageous as it depends solely on the initial expected growth factor $C_1(\gamma)$, bypassing the need to recursively track the dynamic sequence $C_t(\gamma)$ across all $T$ rounds. We leverage this in Theorem \ref{theo:pac_bound_time} (Appendix \ref{app:pac_mahalanobis}) to establish a PAC bound on the Mahalanobis error $\|o_t-\mu\|_{\Sigma^{-1}}$ for the growth regime $\gamma \ge \gamma_c$ and consequently derive a high probability lower bound for $t$. Furthermore, empirical evaluations confirm that optimizing this upper bound yields a highly effective configuration in practice.
\end{remark}


\paragraph{Optimization of the Global Noise Scale ($\gamma^*$)}
This closed-form objective in \eqref{eq:E_hat} makes explicit the trade-off between providing better utility at a given round and a larger population at the next round. Increasing $\gamma$ penalizes the numerator via the quadratic variance term ($\gamma^2$), but simultaneously reduces the error by increasing the growth factor $C(\gamma)$ in the denominator.
We cannot compute a closed form of $\gamma^* = \arg\min_{\gamma > \gamma_c} \hat{\mathcal{E}}_{\mathrm{total}}(\gamma)$. Indeed, the marginal leakage $p^{(0)}_\tau$ nested within $C_1(\gamma)$ is governed by the standard normal cumulative distribution function $\Phi(\cdot)$, so setting the derivative of this objective to zero does not yield a closed-form algebraic solution. However, $\hat{\mathcal{E}}_{\mathrm{total}}(\gamma)$ is a smooth, one-dimensional objective function and $\gamma^*$ can easily be computed with numerical solvers.

We thus compute the theoretical $\gamma^*$ heatmap over the population's recruitment rate $r$ and leaving rate $q$. As reported in Figure~\ref{fig:gamma_comparison} (and also in Figure \ref{fig:gamma_comparison_more} in Appendix \ref{app:more_experiments_mia}), the theoretical $\gamma^*$ closely aligns with the optimal empirical parameters discovered via grid search.

\section{Discussion and future work}

We showed that private mean estimation can outperform non-private estimation in the long term when participants drop out after their data is leaked. These results demonstrate that utility maximization can align with strong privacy protection when there is a sufficiently strong feedback loop. This paper introduces performative privacy as the study of the long-term interplay between privacy and utility under repeated model deployment, at the intersection of privacy-preserving machine learning and performative learning. We focused on mean estimation, but we plan to extend our results to more complex learning tasks. The feedback loop is also kept as simple as possible, but could be made more realistic using epidemic models and more complex privacy preferences and composition over time. Finally, this could motivate varying the privacy budget over time, in particular to minimize regret over the full trajectory.


\section*{Acknowledgments}
The authors are supported by the National Research Agency under France 2030, reference “ANR-23-IACL-0008”. Preliminary work while Edwige Cyffers was visiting the Simons Institute for the Theory of Computing. Edwige Cyffers thanks Max Cairney-Leeming for the joke which started this line of work. Uddalak Mukherjee would like to acknowledge``Verified Deep Learning: Formal Methods Perspective" project under Ansuman Banerjee of ACMU, Indian Statistical Institute, Kolkata.

\bibliography{iclr2026_conference}
\bibliographystyle{iclr2026_conference}

\newpage
\appendix
\part{Appendix}
\parttoc
\clearpage
\section{More Experiments on Membership Inference Leaving Process} \label{app:more_experiments_mia}

\begin{figure}[H]
    \centering


    \begin{subfigure}[t]{0.32\textwidth}
        \centering
        \includegraphics[width=\linewidth]{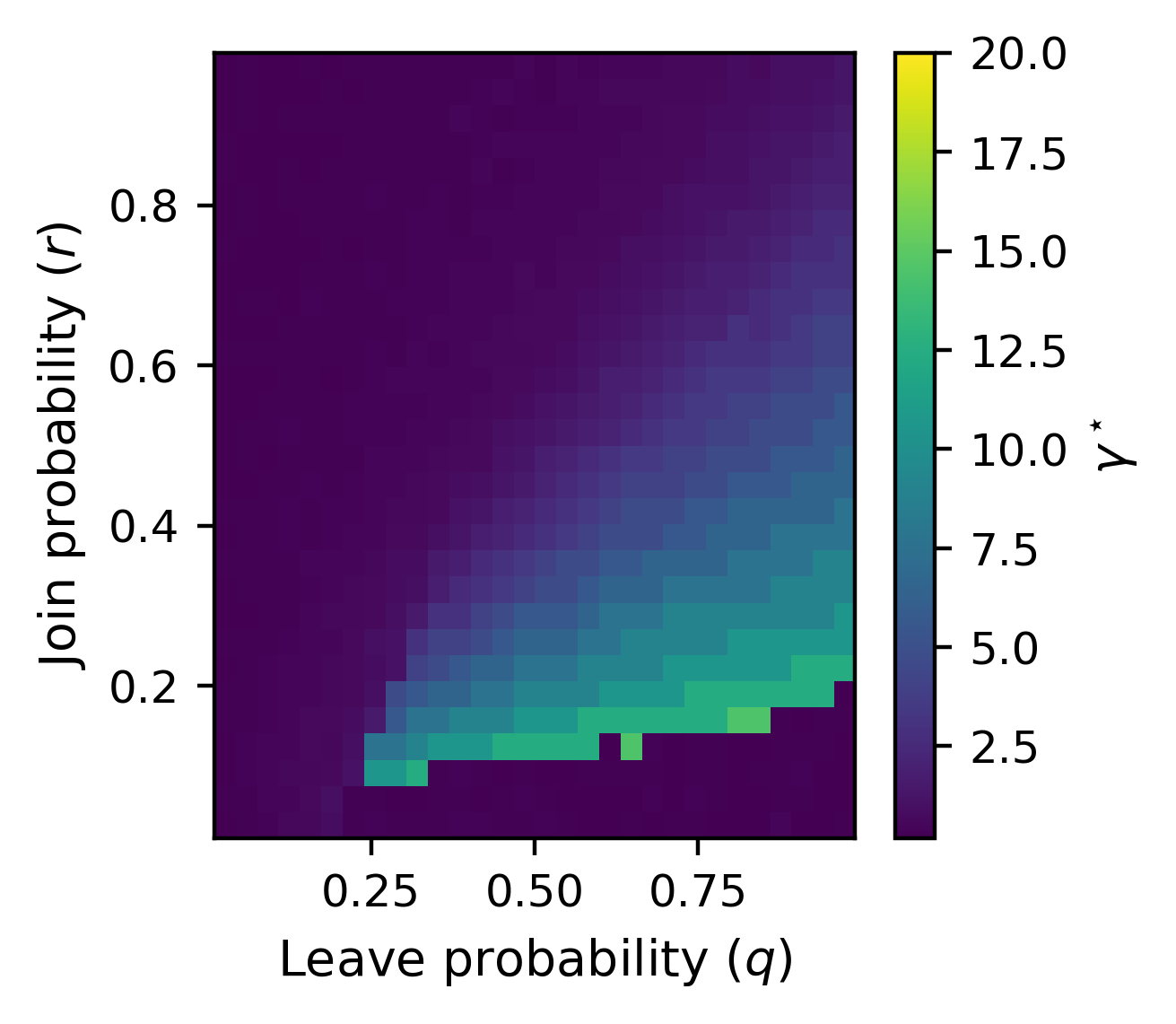}
        \caption{Empirical heatmap}
    \end{subfigure}
    \hfill
    \begin{subfigure}[t]{0.32\textwidth}
        \centering
        \includegraphics[width=\linewidth]{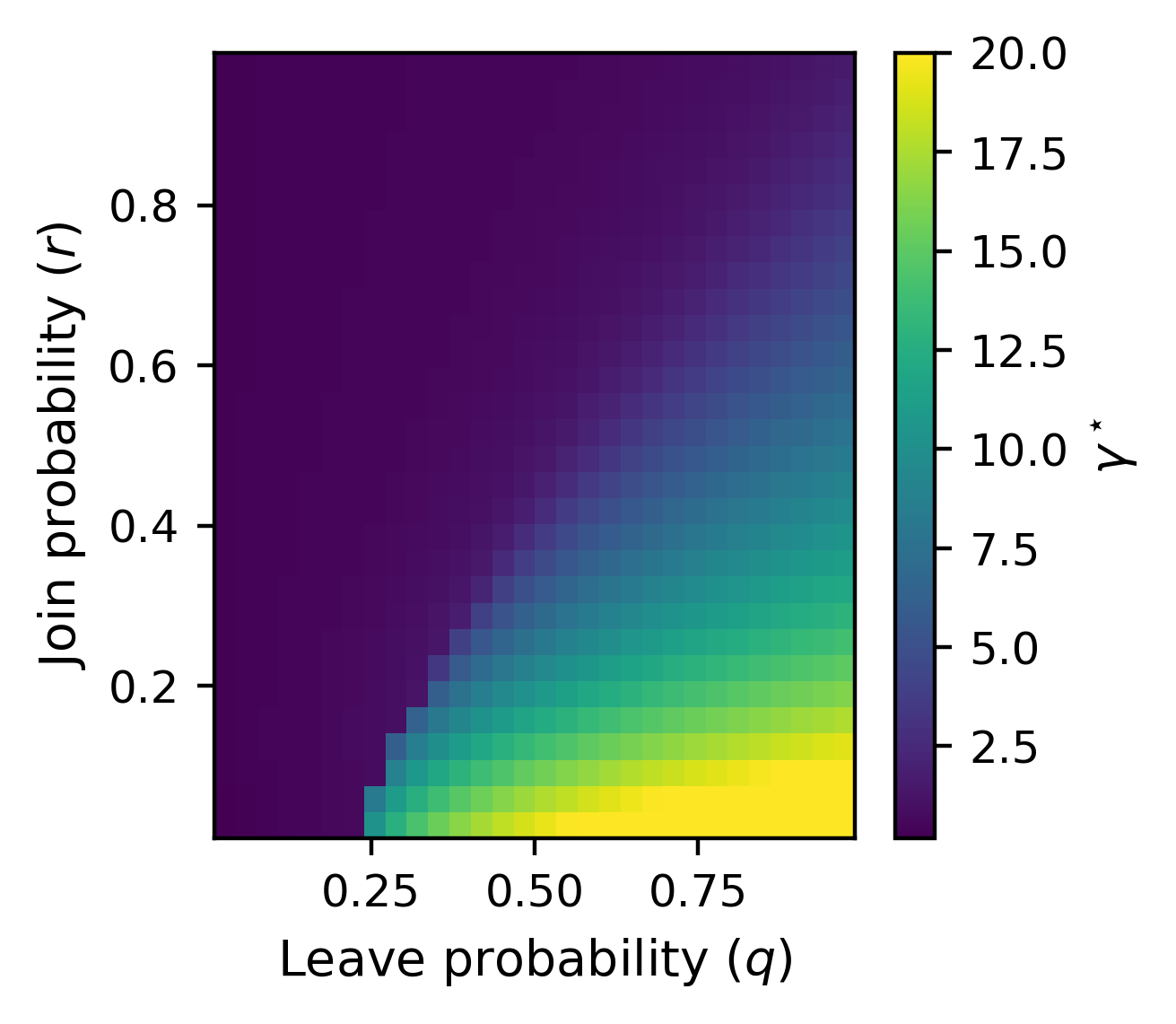}
        \caption{Theoritical heatmap}
    \end{subfigure}
    \hfill
    \begin{subfigure}[t]{0.32\textwidth}
        \centering
        \includegraphics[width=\linewidth]{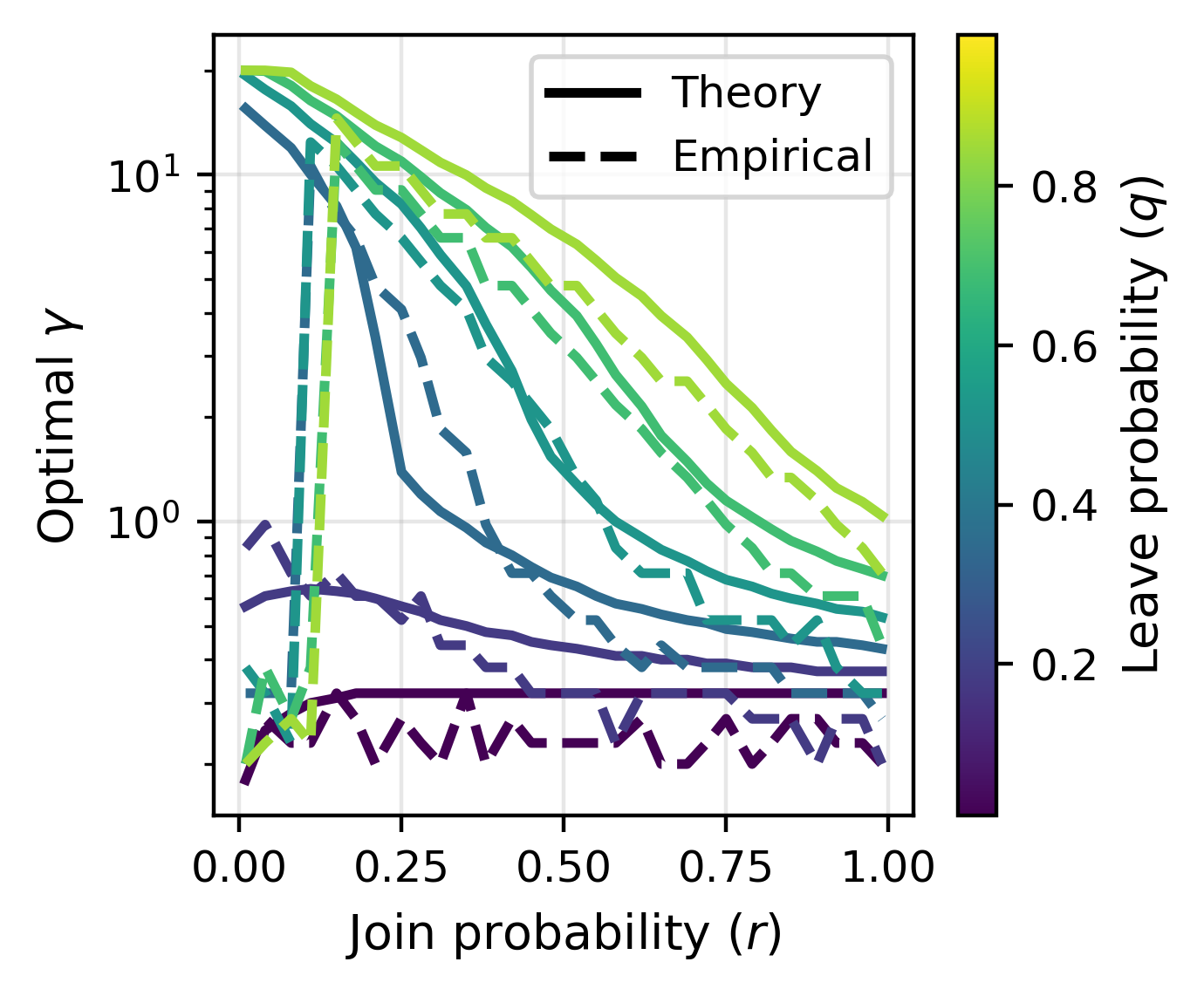}
        \caption{$\gamma$ line plot comparison}
    \end{subfigure}

    \vspace{0.5cm}


    \begin{subfigure}[t]{0.32\textwidth}
        \centering
        \includegraphics[width=\linewidth]{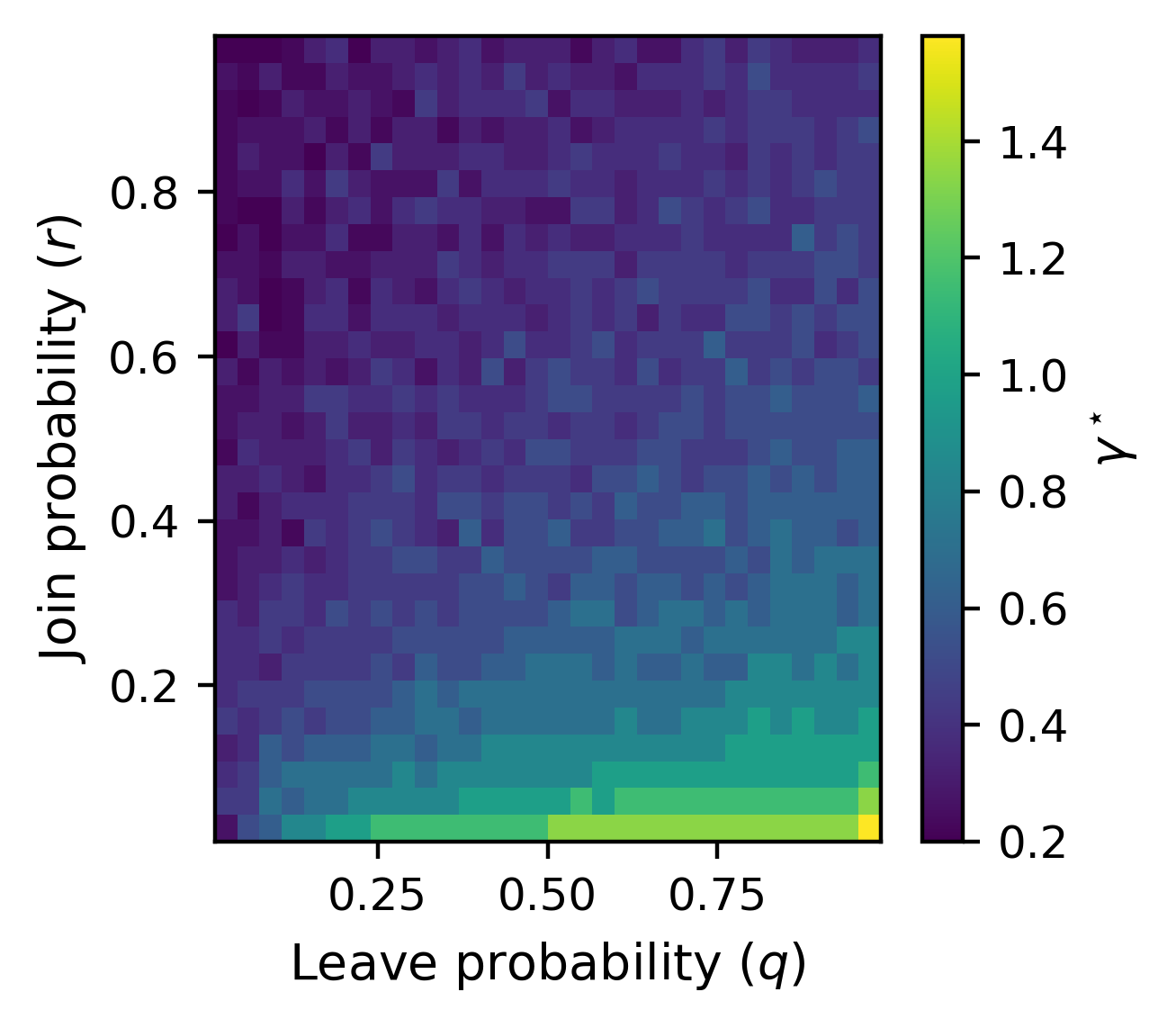}
        \caption{Empirical heatmap}
    \end{subfigure}
    \hfill
    \begin{subfigure}[t]{0.32\textwidth}
        \centering
        \includegraphics[width=\linewidth]{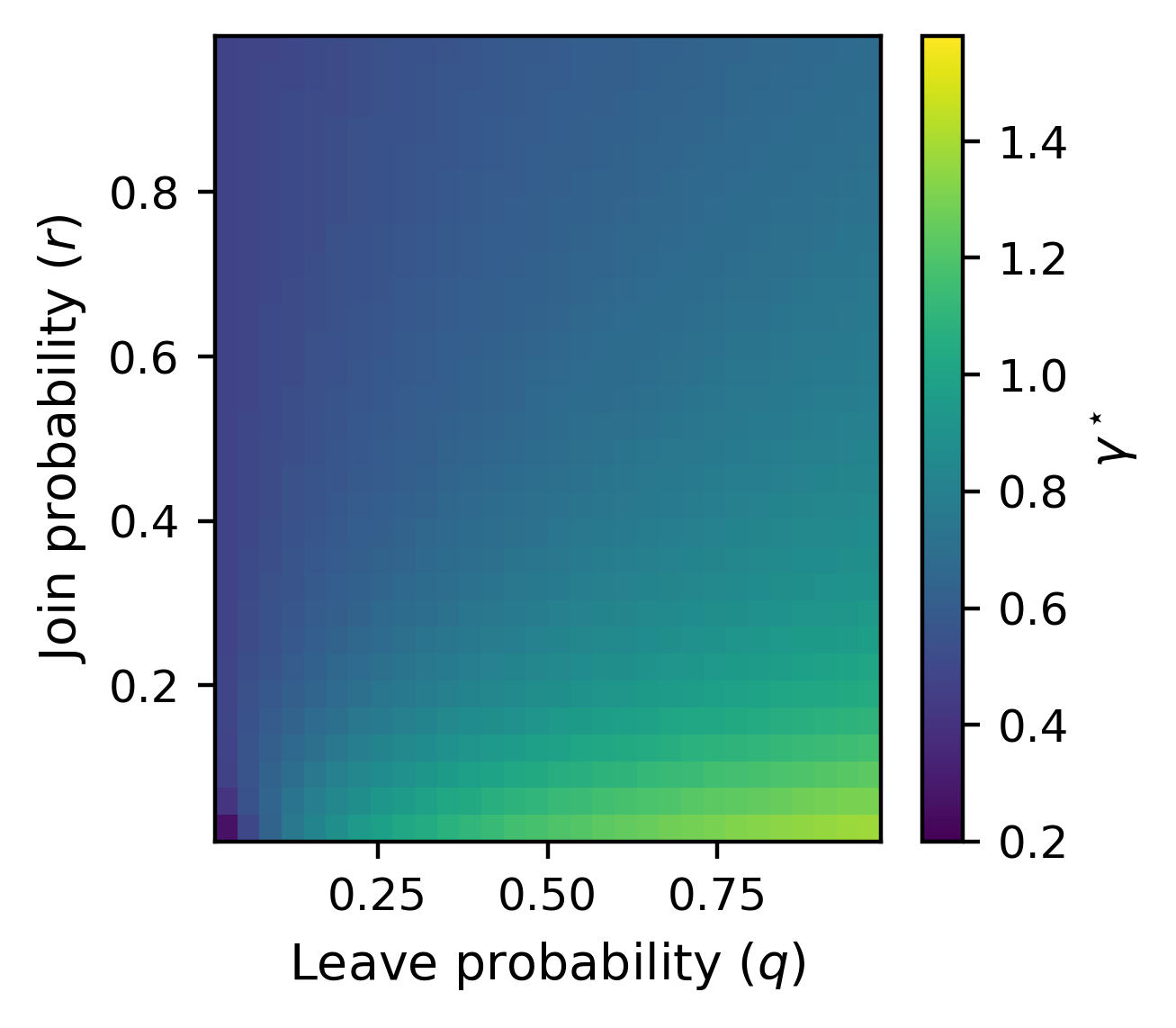}
        \caption{Theory heatmap}
    \end{subfigure}
    \hfill
    \begin{subfigure}[t]{0.32\textwidth}
        \centering
        \includegraphics[width=\linewidth]{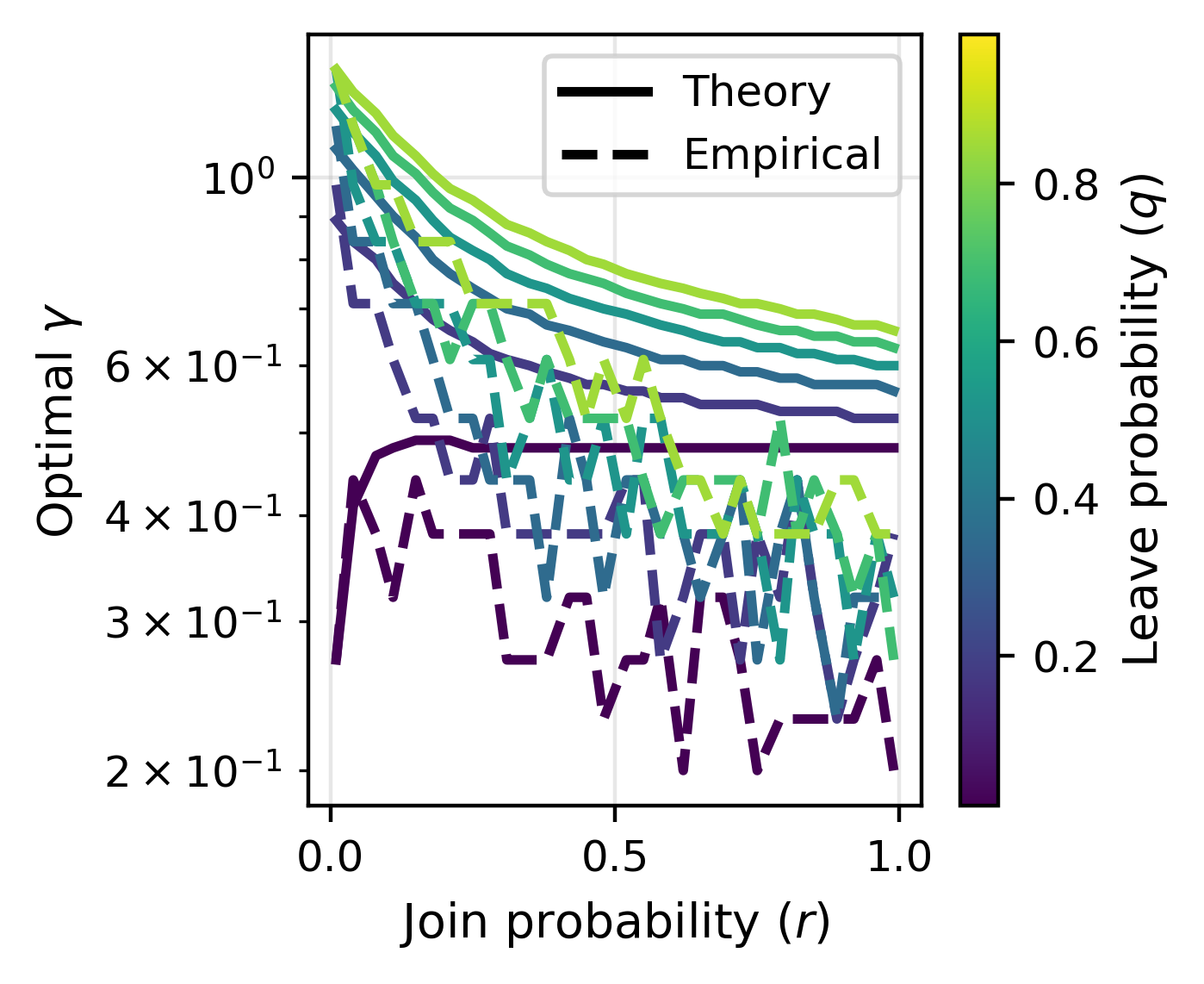}
        \caption{$\gamma$ line plot comparison}
    \end{subfigure}

    \vspace{0.5cm}

    \begin{subfigure}[t]{0.32\textwidth}
        \centering
        \includegraphics[width=\linewidth]{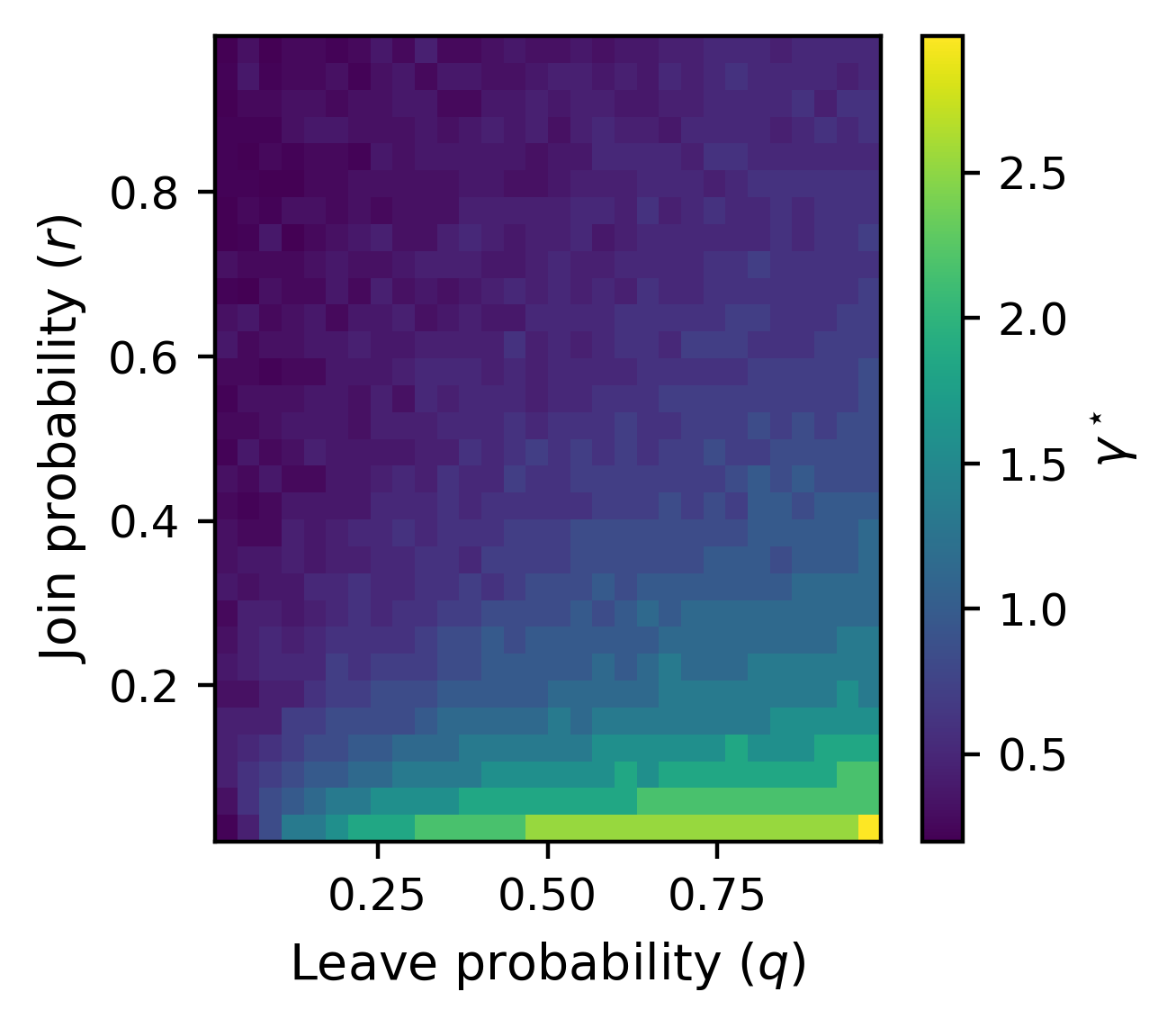}
        \caption{Empirical heatmap}
    \end{subfigure}
    \hfill
    \begin{subfigure}[t]{0.32\textwidth}
        \centering
        \includegraphics[width=\linewidth]{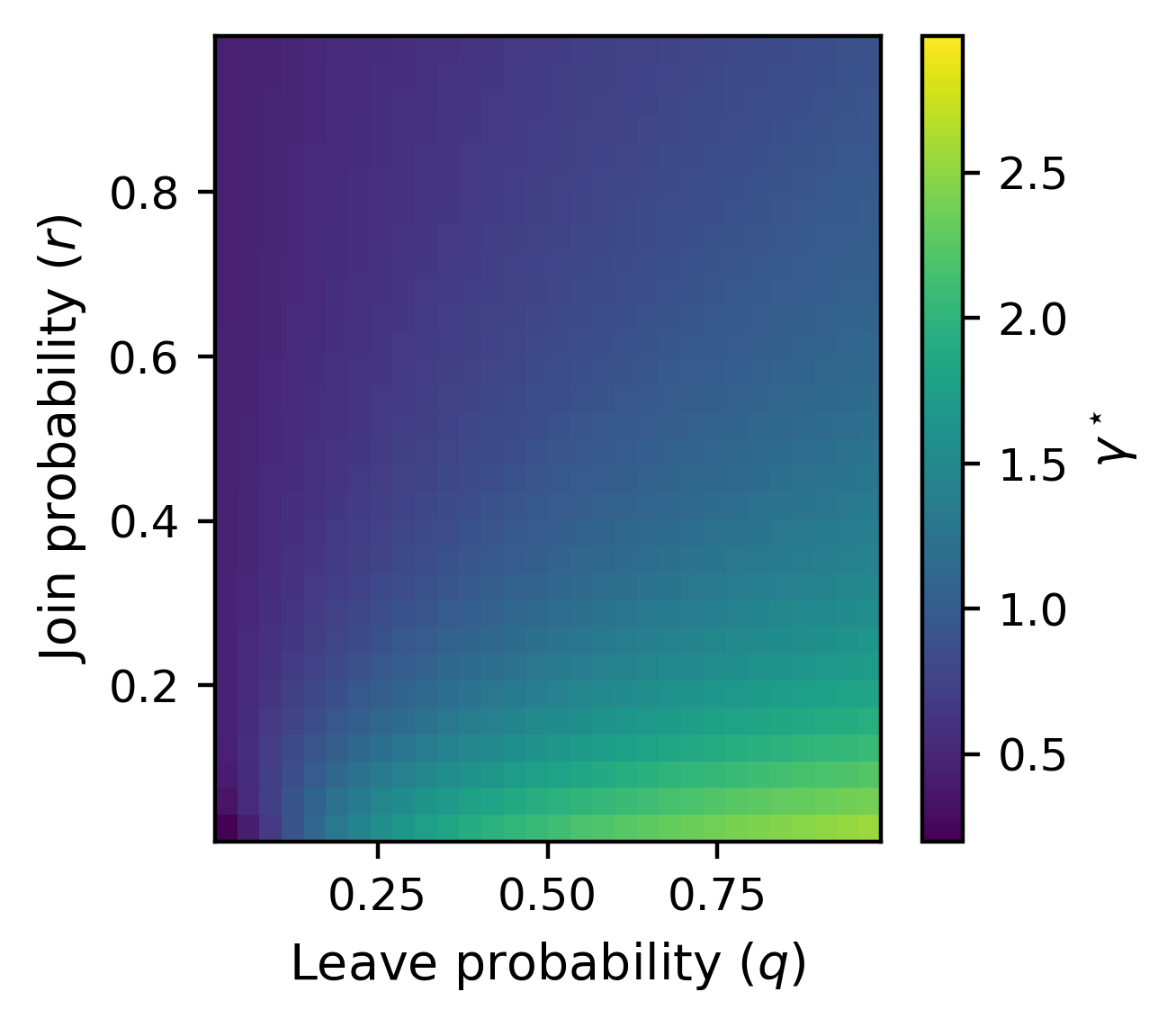}
        \caption{Theoritical heatmap}
    \end{subfigure}
    \hfill
    \begin{subfigure}[t]{0.32\textwidth}
        \centering
        \includegraphics[width=\linewidth]{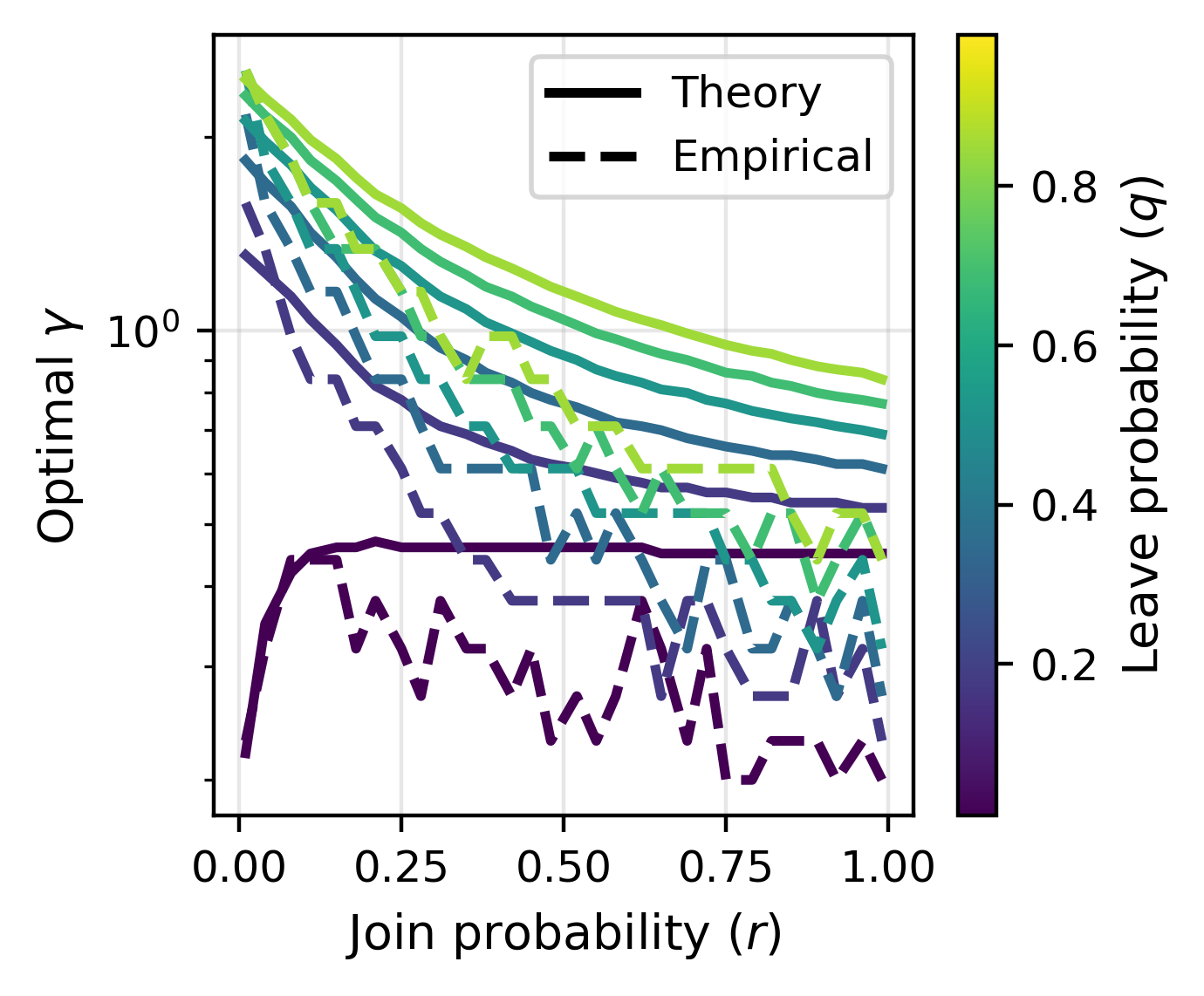}
        \caption{$\gamma$ line plot comparison}
    \end{subfigure}

    \caption{
    Comparison of the optimal threshold $\gamma$ obtained from theory and simulations.
    The first row corresponds to $\tau = 0.01$, the second row corresponds to $\tau = 0.2$ and the third row corresponds to $\tau = 0.1$. For each value of $\tau$, the left panel shows the empirical heatmap, the middle panel shows the theoretical heatmap, and the right panel compares the theoretical and empirical optimal $\gamma$ as a function of the join probability.
    }
    \label{fig:gamma_comparison_more}
\end{figure}

Figures \ref{fig:gamma_comparison} and \ref{fig:gamma_comparison_more} reveal an interesting observation: as ($\tau$) increases, the empirical noise increasingly underestimates the theoretical prediction because leakage becomes more sensitive to rare, extreme outliers. While the theoretical model assumes an unbounded population, the experimental setup imposes a population cap of 20,000, limiting the occurrence of such extreme samples. Consequently, the discrepancy between theory and experiment becomes more pronounced at larger ($\tau$).

\newpage
\section{The Agnostic Model}
\label{app:total_model}
This section explores the agnostic model mentioned in Section 3. Under this framework, we assume population growth is directly proportional to the number of participants in the previous iteration, $t-1$, regardless of wether or not their data were leaked. Applying a derivation analogous to the proof of Lemma \ref{lemm:unleaked_growth}, we characterize the evolution of the participant pool as follows:
\begin{align*} 
    N_t &= N_{t-1} - N_{t-1}q\left(\frac{1}{2}+\beta\right) + r N_{t-1} \nonumber\\ 
    &= N_{t-1} \left(1-q\left(\frac{1}{2}+\beta\right) + r\right)
\end{align*}
Under this model, we once again simulate the behavior of the population and variance under the same two regimes of $(r,q)=(0.3,0.6)$ and $(r,q)=(0.6,0.3)$ respectively as demonstrated in Figure \ref{fig:total_simulations}. In the regime ($r > q$), rapid growth outweighs attrition, driving the population to maximum capacity regardless of the privacy budget. Unlike the previous model --- where low privacy could stifle growth or lead to suboptimal stability, this model provides no incentive for privacy, as the highest $\eps$ consistently yields the lowest variance. Conversely, when $q > r$, the variance mirrors the original model’s behavior, yet population decay is notably more gradual, with certain $\eps$ values inducing high-variance stochastic collapse across different runs.

\begin{figure}[t!]
     \centering
     \begin{subfigure}[b]{0.32\textwidth}
         \centering
         \includegraphics[width=\textwidth]{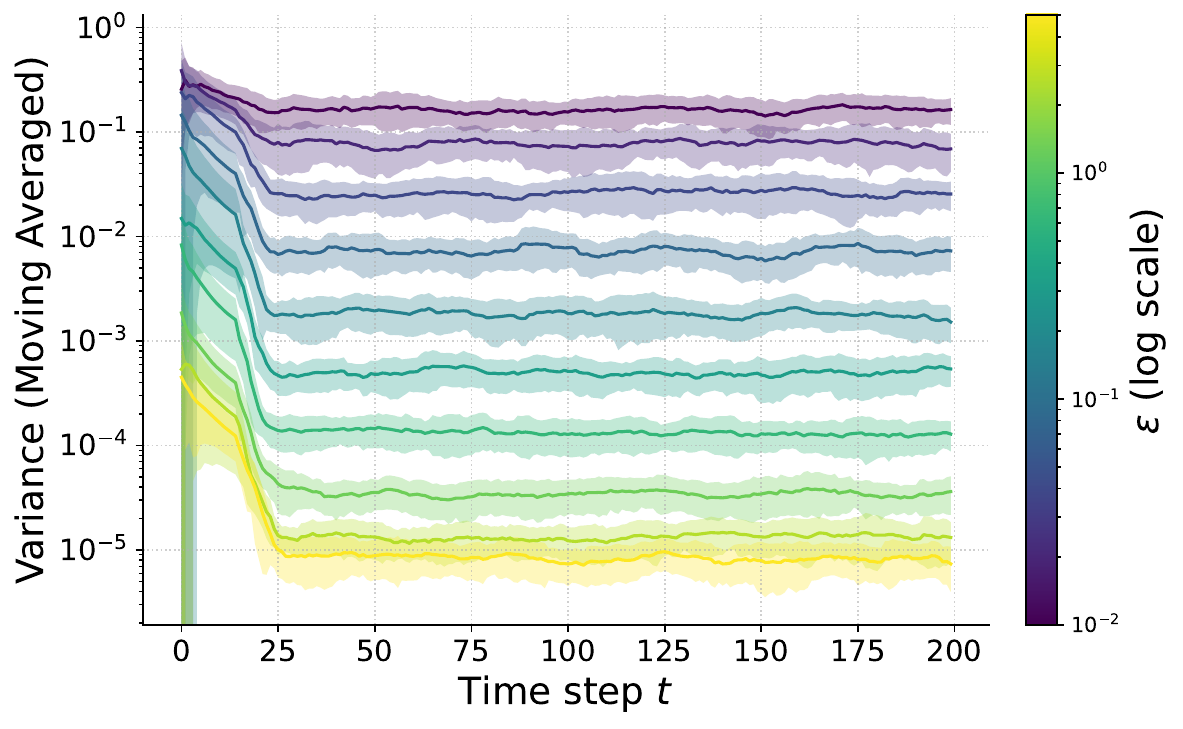}
         \caption{Moving averaged variance for $(r,q)=(0.6,0.3)$}
         \label{fig:left_top}
     \end{subfigure}
     \hspace{0.5cm}
     \begin{subfigure}[b]{0.32\textwidth}
         \centering
         \includegraphics[width=\textwidth]{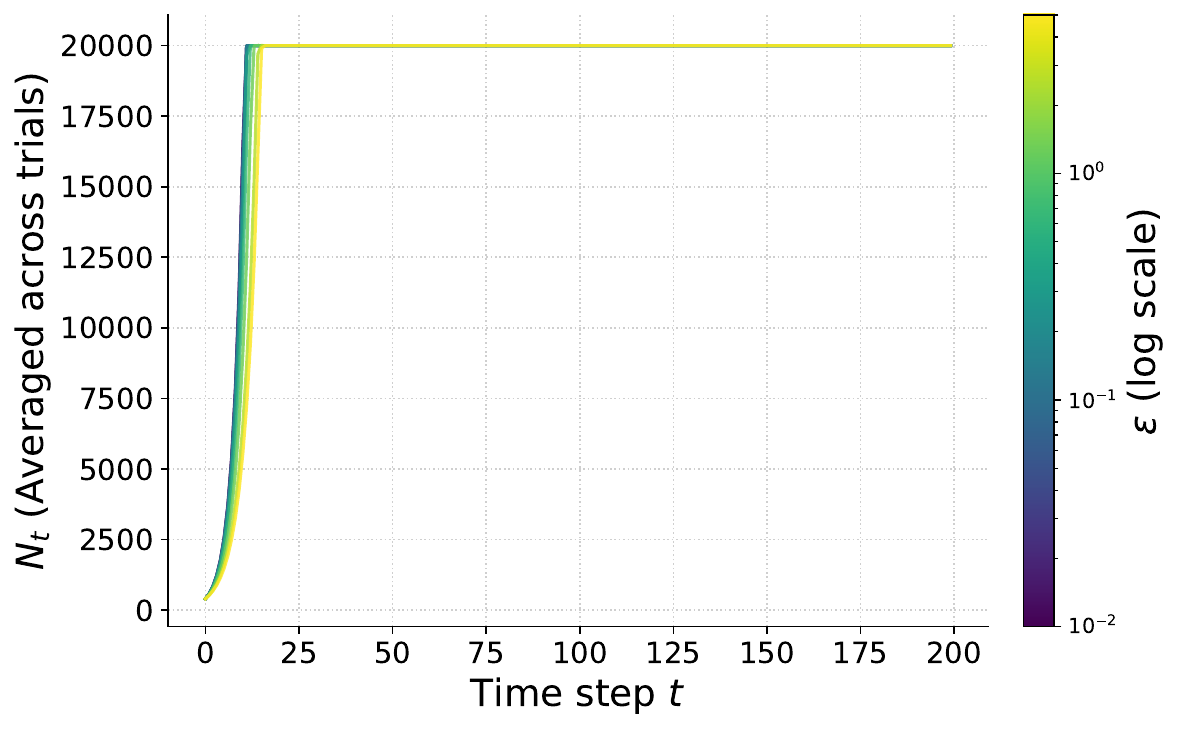}
         \caption{Population evolution for $(r,q)=(0.6,0.3)$}
         \label{fig:right_top}
     \end{subfigure}
     
     \vspace{1cm} 
     
     \begin{subfigure}[b]{0.32\textwidth}
         \centering
         \includegraphics[width=\textwidth]{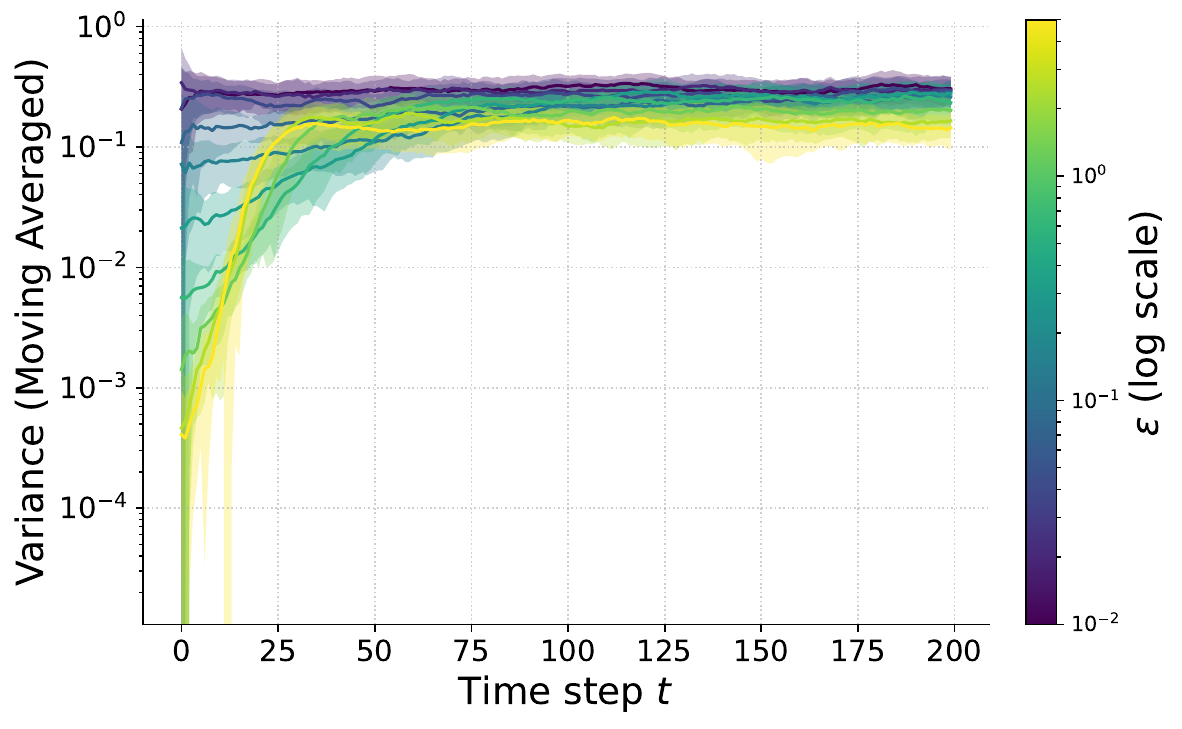}
         \caption{Moving averaged variance for $(r,q)=(0.3,0.6)$}
         \label{fig:left_bottom}
     \end{subfigure}
     \hspace{0.5cm}
     \begin{subfigure}[b]{0.32\textwidth}
         \centering
         \includegraphics[width=\textwidth]{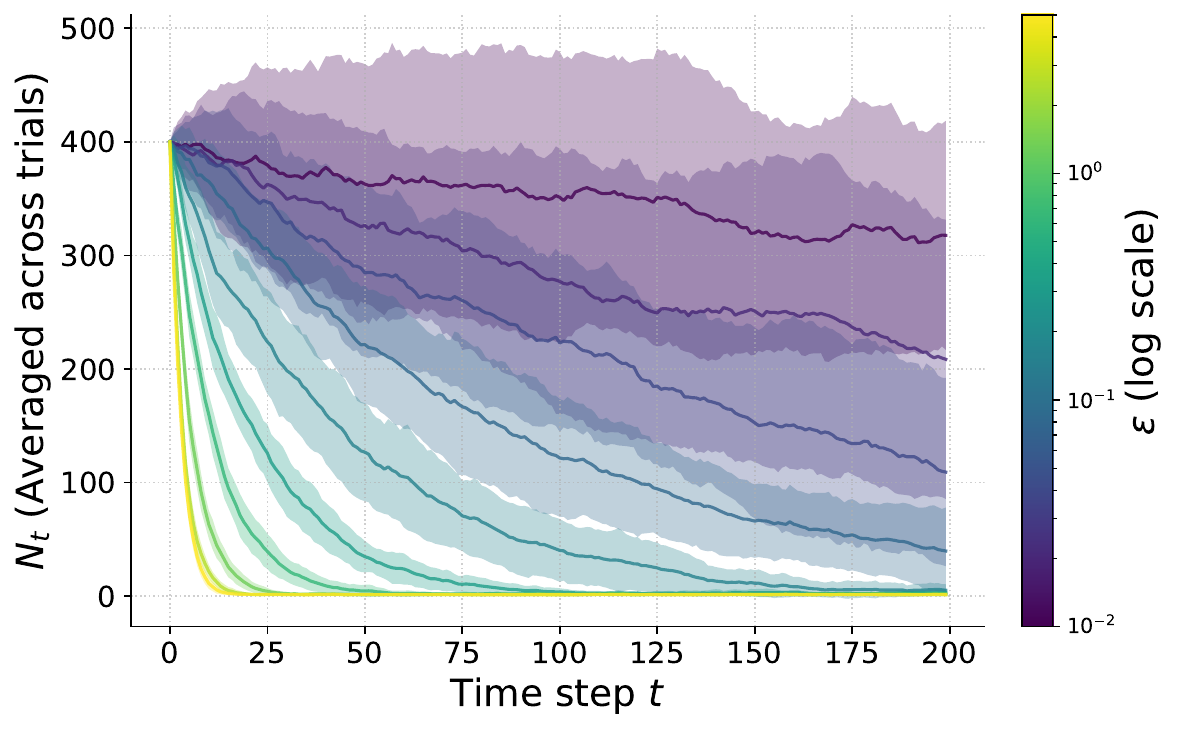}
         \caption{Population evolution for $(r,q)=(0.3,0.6)$}
         \label{fig:right_bottom}
     \end{subfigure}
     
     \caption{In a growing population ($r=0.6,q=0.3$), the system quickly reaches capacity across all privacy budgets, causing variance to stratify based on noise levels and favoring low privacy due to the high growth rate. In contrast, a shrinking population ($r=0.3,q=0.6$) triggers performative collapse; while weak privacy (high $\eps$) initially yields higher utility, it accelerates population decline. This causes variance to spike and eventually converge with stricter privacy regimes as the disappearing user base eliminates any initial advantage.}
     \label{fig:total_simulations}
\end{figure}

We also report the utility-maximizing privacy budget $\eps$ for each parameter pair in Figure~\ref{fig:heatmap_total}. For growing populations ($r > q$), the models diverge sharply: the original model shows a broad transition favoring intermediate privacy, whereas the agnostic model overwhelmingly favors minimal privacy (highest $\eps$), as rapid growth drives the population to capacity regardless of data use. Maximum privacy is beneficial only within a narrow band just below $r=q$; outside this regime, strict privacy provides no utility gain.

\begin{figure}[t!]
    \centering
    \includegraphics[width=0.32\linewidth]{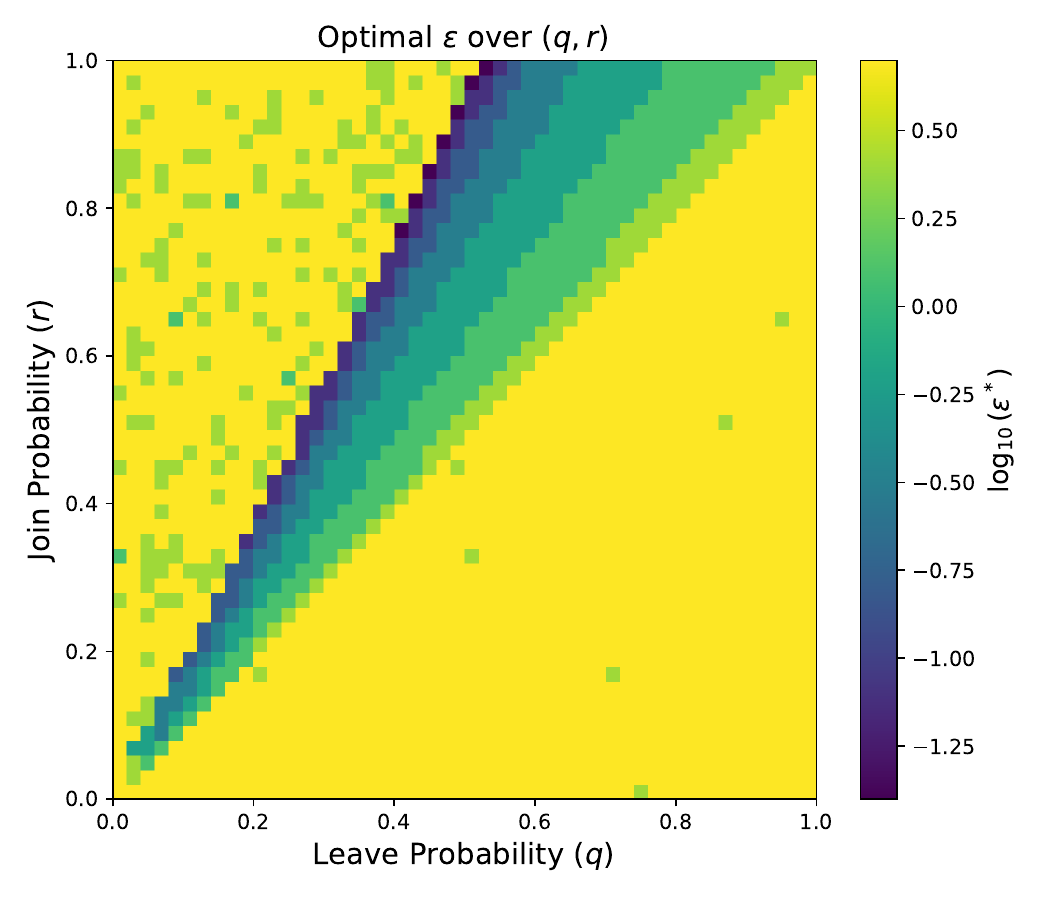}
    \caption{Global mapping of the optimal privacy budget $\eps^*$ across the $(r, q)$ parameter space for the agnostic model}
    \label{fig:heatmap_total}
\end{figure}




\section{Proofs from Section \ref{section:randomized_response}}\label{app:proofs}


\subsection{Properties of the Estimator}

\begin{lemma}
    The estimator $\hat \mu_t$ for population Bernoulli mean $\mu_t$ as described in  \eqref{eq:unbiased_estimator} is unbiased.
\end{lemma}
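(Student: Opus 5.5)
The plan is to condition on the population size $N_t$ and use linearity of expectation. Conditional on $N_t$, the privatized responses $Y_{1,t},\dots,Y_{N_t,t}$ are i.i.d. This holds because each $X_{i,t}\sim\mathcal{B}(\mu_t)$ is drawn independently, and the randomized-response coin is flipped independently of $X_{i,t}$ and of the other users. It therefore suffices to compute $\mathbb{E}[Y_{i,t}]$ for a single user.

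The first step is to compute $\Pr(Y_{i,t}=1)$ by the law of total probability. A response equals $1$ either when $X_{i,t}=1$ and the bit is kept, or when $X_{i,t}=0$ and the bit is flipped. This gives
\[
\mathbb{E}[Y_{i,t}] = \mu_t\left(\tfrac12+\beta\right) + (1-\mu_t)\left(\tfrac12-\beta\right) = \tfrac12-\beta + 2\beta\mu_t .
\]
Averaging over the users then yields $\mathbb{E}[\hat\nu_t \mid N_t] = \tfrac12-\beta+2\beta\mu_t$ for every $N_t\ge 1$.

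The second step plugs this into the affine map defining $\hat\mu_t$ in \eqref{eq:unbiased_estimator}, and uses linearity once more:
\[
\mathbb{E}[\hat\mu_t\mid N_t] = \frac{\mathbb{E}[\hat\nu_t\mid N_t]-\tfrac12+\beta}{2\beta} = \frac{2\beta\mu_t}{2\beta} = \mu_t .
\]
The tower property then gives $\mathbb{E}[\hat\mu_t]=\mu_t$ unconditionally.

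The only points requiring care are hypotheses rather than computation. We need $\beta>0$, i.e.\ $\eps>0$ by Lemma~\ref{lemm:eps_ldp_beta}, so that the estimator is well defined. We need $N_t\ge 1$, the collapsed case being excluded as in the simulations. Finally, the independence of the current-round samples from $N_t$ must be justified: $N_t$ depends only on past leakage events, while $X_{i,t}$ and the round-$t$ coins are freshly drawn, so conditioning on $N_t$ does not bias them. I expect this independence justification to be the only genuine subtlety.
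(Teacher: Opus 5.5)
Your proposal is correct and follows the paper's proof: compute $\mathbb{E}[Y_{i,t}] = 2\beta\mu_t + \tfrac12 - \beta$ by conditioning on $X_{i,t}$, average over users by linearity, and invert the affine map defining $\hat\mu_t$. Conditioning explicitly on $N_t$ and applying the tower property, and noting that $\beta>0$ and $N_t\ge 1$ are needed, are careful additions the paper leaves implicit; they do not change the argument.
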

\begin{proof}
To prove that the estimator $\hat{\mu}_t$ is unbiased, we first find the expected value of a single privatized response $Y_{i,t}$. As $X_{i,t}$ is a Bernoulli variable with mean $\mu_t$, we have:
\begin{align*}
\mathbb{E}[Y_{i,t}] &= \mathbb{P}(X_{i,t} = 1) \mathbb{E}[Y_{i,t} \mid X_{i,t} = 1] + \mathbb{P}(X_{i,t} = 0) \mathbb{E}[Y_{i,t} \mid X_{i,t} = 0] \\
&= \mu_t \left( \frac{1}{2} + \beta \right) + (1 - \mu_t) \left( \frac{1}{2} - \beta \right) \\
&= \mu_t \left( \frac{1}{2} + \beta \right) + \frac{1}{2} - \beta - \mu_t \left( \frac{1}{2} - \beta \right) \\
&= \mu_t \left( \frac{1}{2} + \beta - \frac{1}{2} + \beta \right) + \frac{1}{2} - \beta \\
&= 2\beta\mu_t + \frac{1}{2} - \beta
\end{align*} 
Now we evaluate the expected value of the sample mean $\hat{\nu}_t$. By linearity of expectation:
\begin{align*}
\mathbb{E}[\hat{\nu}_t] &= \mathbb{E} \left[ \frac{1}{N_t}\sum_{i=1}^{N_t} Y_{i,t} \right] 
= \frac{1}{N_t}\sum_{i=1}^{N_t} \mathbb{E}[Y_{i,t}] \\
&= \frac{1}{N_t} \cdot N_t \left( 2\beta\mu_t + \frac{1}{2} - \beta \right) 
= 2\beta\mu_t + \frac{1}{2} - \beta
\end{align*}
Substituting $\mathbb{E}[\hat{\nu}_t]$ into the expected value of our estimator $\hat{\mu}_t$:
\begin{align*}
\mathbb{E}[\hat{\mu}_t] &= \mathbb{E} \left[ \frac{\hat{\nu}_t - \frac{1}{2} + \beta}{2\beta} \right] 
= \frac{\mathbb{E}[\hat{\nu}_t] - \frac{1}{2} + \beta}{2\beta} \\
&= \frac{\left( 2\beta\mu_t + \frac{1}{2} - \beta \right) - \frac{1}{2} + \beta}{2\beta} \\
&= \frac{2\beta\mu_t}{2\beta} 
= \mu_t
\end{align*}
This concludes the proof.
\end{proof}

\begin{replemma}{lemm:eps_ldp_beta}
For any given privacy budget $\eps > 0$, setting the perturbation parameter to
\begin{align*}\label{eq:eps_ldp_beta}
    \beta = \frac{1}{2} \tanh\left(\frac{\eps}{2}\right)
\end{align*}
guarantees that the randomised response mechanism satisfies $\eps$-local differential privacy ($\eps$-LDP).
\end{replemma}

 \begin{proof}
    To satisfy $\eps$-LDP, the maximum ratio of output probabilities (true reporting vs. false reporting) must equal $e^\eps$:
    \begin{align*}
        \frac{\frac{1}{2} + \beta}{\frac{1}{2} - \beta} &= e^\eps
    \end{align*}
    Solving directly for $\beta$ gives:
    \begin{align*}
        \beta &= \frac{1}{2} \left( \frac{e^\eps - 1}{e^\eps + 1} \right)
    \end{align*}
    Using the identity $\tanh(z) = \frac{e^{2z} - 1}{e^{2z} + 1}$ with $z = \frac{\eps}{2}$, this simplifies to:
    \begin{align*}
        \beta &= \frac{1}{2} \tanh\left(\frac{\eps}{2}\right)
    \end{align*}
\end{proof}

\subsection{Population Evolution and Critical Noise Threshold}

\begin{replemma}{lemm:unleaked_growth}
    The evolution of the number of participants is given by:
    \begin{align*} 
    \mathbb{E}[N_t|N_{t-1}] &= N_{t-1} - N_{t-1}q\left(\frac{1}{2}+\beta\right) + r \left(\frac{1}{2}-\beta\right) N_{t-1} \nonumber\\ 
    &= N_{t-1} \left(1-q\left(\frac{1}{2}+\beta\right) + r\left(\frac{1}{2}-\beta\right)\right)
\end{align*}
\end{replemma}

\begin{proof}
    Let $D_t$ and $A_t$ be the number of departures and arrivals at $t$. The population at time $t$ follows the basic balance equation:
    \begin{align*}
        N_t = N_{t-1} - D_t + A_t
    \end{align*}
By construction of the model, $A_t$ and $D_t$ follow $\text{Binomial}(N_{t-1},r(1/2-\beta))$ and $\text{Binomial}(N_{t-1},q(\beta+1/2))$ respectively. For a sufficiently large population, by the Law of Large Numbers, $A_t$ and $D_t$ tightly concentrate around their expected values (first-order approximation). 
    
    Thus,  $\mathbb{E}[A_t|N_{t-1}] = r\left(\frac{1}{2}-\beta\right) N_{t-1}$ and $\mathbb{E}[D_t|N_{t-1}\textbf{}]=N_{t-1}q\left(\frac{1}{2} + \beta\right)$. Substituting these deterministic expectations into the balance equation yields:
    \begin{align*} 
    \mathbb{E}[N_t|N_{t-1}] &= N_{t-1} - N_{t-1}q\left(\frac{1}{2}+\beta\right) + r \left(\frac{1}{2}-\beta\right) N_{t-1} \nonumber\\ 
    &= N_{t-1} \left(1-q\left(\frac{1}{2}+\beta\right) + r\left(\frac{1}{2}-\beta\right)\right)
\end{align*}
\end{proof}

\begin{reptheorem}{theo:min_beta_threshold}
To maintain $C(\beta) \ge 1$, the parameter $\beta$ must satisfy the explicit upper bound $\beta \le \beta_c$, where the critical threshold $\beta_c$ is defined as:
\begin{equation*}
    \beta_c \triangleq \max \left\{ 0,\frac{r-q}{2(q+r)}\right\}
\end{equation*}
\end{reptheorem}

\begin{proof}
Substituting the simplified linear expression for the performative growth factor yields:
\begin{align*}
    &1 + \frac{r-q}{2} - \beta(q+r) \ge 1 \\
    &\implies \frac{r-q}{2} - \beta(q+r) \ge 0
\end{align*}
Rearranging the inequality to isolate the $\beta$ term yields:
\begin{equation*}
    \beta(q+r) \le \frac{r-q}{2}
\end{equation*}
Dividing both sides by $(q+r)$,
\begin{equation*}
    \beta \le \frac{r-q}{2(q+r)} 
\end{equation*}
However, when $r \le q$, the critical point get clipped to zero. Thus,
\begin{align*}
    \beta \le \max \left\{ 0,\frac{r-q}{2(r+q)}\right\}
\end{align*}
This completes the proof.
\end{proof}

\section{Proofs from Section \ref{section:membership_inference}} \label{app:gaussian_mia_proofs}

\subsection{Generic Results}

\begin{replemma}{lemm:Mahalanobis_error}
Let $Z_t$ be a dataset drawn from a distribution with true mean $\mu$ and covariance matrix $\Sigma$,  and neglect the clipping bias (i.e., assume $\bar z_i = z_i$). Assuming the aggregated noise vector $\eta_d$ has covariance $C_{\bm\gamma} = \frac{\gamma^2}{N_t} I$, the expected squared Mahalanobis error is:
\begin{equation}
    \mathbb{E} \left[ \|o_t - \mu\|_{\Sigma^{-1}}^2 \;\middle|\; N_t \right] = \frac{\gamma^2 \mathrm{Tr}(\Sigma^{-1}) + d}{N_t}
\end{equation}
\end{replemma}

\begin{proof}
Since the mechanism's injected noise $\eta_d$ is sampled independently from the underlying data distribution $\mathcal{D}$, the expected squared error decouples into two components: the variance introduced by the privacy mechanism and the intrinsic geometric variance of the data sample itself. 

By applying the linearity of expectation and utilizing the trace trick for quadratic forms ($\mathbb{E}[x^T A x] = \mathrm{Tr}(A \Sigma_x) + \mu_x^T A \mu_x$), we expand the conditional error:
\begin{align*}
    \mathbb{E} \left[ \|o_t - \mu\|_{\Sigma^{-1}}^2 \;\middle|\; N_t \right] 
    &=  \mathbb{E}_{\eta_d} \left[ \|\eta_d\|_{\Sigma^{-1}}^2 \right] + \frac{1}{N_t^2} \mathbb{E}_{\mathcal{D}} \left[ \Big\| \sum_{z_i \in Z_t} (z_i - \mu) \Big\|_{\Sigma^{-1}}^2 \right] \\
    &= \mathrm{Tr}(\Sigma^{-1} C_{\bm\gamma}) + \frac{1}{N_t^2} \sum_{z_i \in Z_t} \mathbb{E}_{\mathcal{D}} \left[ \|z_i - \mu\|_{\Sigma^{-1}}^2 \right] \\
    &= \frac{\gamma^2 \mathrm{Tr}(\Sigma^{-1})}{N_t} + \frac{1}{N_t^2} \sum_{z_i \in Z_t} \mathrm{Tr}(\Sigma^{-1}\Sigma) \\
    &= \frac{\gamma^2 \mathrm{Tr}(\Sigma^{-1}) + d}{N_t}
\end{align*}
where we substitute $C_{\bm\gamma} = \gamma^2 I$ and recognize that $\mathrm{Tr}(\Sigma^{-1}\Sigma) = \mathrm{Tr}(I_d) = d$. This bounds the conditional expected error, completing the proof.
\end{proof}

\begin{replemma}{lemm:Renyi_bound}
Let $Z_{t-1}$ be the dataset at round $t$ and $z \in Z_{t}$ be a target record. Let $o_t \sim \mathcal{M}(Z_{t})$ be the candidate output. Define the factual and counterfactual output distributions as $P^{\mathcal{M}}_{\text{in}}(\cdot) = \Pr[\mathcal{M}(Z_{t}) = \cdot \mid z \in Z_{t}]$ and $P^{\mathcal{M}}_{\text{out}}(\cdot) = \Pr[\mathcal{M}(Z_{t} \setminus \{z\}) = \cdot]$. 

For the log-likelihood ratio $\Lambda(o_t|z) = \ln \left( \frac{P^{\mathcal{M}}_{\text{in}}(o_t)}{P^{\mathcal{M}}_{\text{out}}(o_t)} \right)$ evaluated at output $o_t$, and for any threshold $\tau > 0$ and order $\alpha > 1$, the tail probability of structural compromisation is bounded by:
\begin{align*}
\Pr_{o_t \sim P^{\mathcal{M}}_{\text{in}}}(\Lambda(o_t|z) > \tau) \le \exp \Big( -(\alpha - 1) \big( \tau - D_\alpha(P^{\mathcal{M}}_{\text{in}} \parallel P^{\mathcal{M}}_{\text{out}}) \big) \Big) 
\end{align*}
where $D_\alpha(P^{\mathcal{M}}_{\text{in}} \parallel P^{\mathcal{M}}_{\text{out}})$ is the Rényi divergence of order $\alpha$.
\end{replemma}

\begin{proof}
We use the Cramer-Chernoff technique to bound the leakage probability of $z$. By the monotonicity of the exponential function, for any $\alpha > 1$, the tail event $\Lambda(o_t|z) > \tau$ is strictly equivalent to:
$$ e^{(\alpha - 1)\Lambda(o_t|z)} > e^{(\alpha - 1)\tau} $$

Applying Markov's inequality to the non-negative random variable $e^{(\alpha - 1)\Lambda(o_t|z)}$ over the factual distribution $P^{\mathcal{M}}_{\text{in}}$ yields:
\begin{align}\label{eq:Markov_ineq}
 \Pr_{o_t \sim P^{\mathcal{M}}_{\text{in}}}(\Lambda(o_t|z) > \tau) \le \frac{\expectation_{o_t \sim P^{\mathcal{M}}_{\text{in}}} \left[ e^{(\alpha - 1)\Lambda(o_t|z)} \right]}{e^{(\alpha - 1)\tau}}
\end{align}

Expanding the definition of $\Lambda(o_t|z)$ within the expectation, we have:
$$ \expectation_{o_t \sim P^{\mathcal{M}}_{\text{in}}} \left[ e^{(\alpha - 1) \ln \left( \frac{P^{\mathcal{M}}_{\text{in}}(o_t)}{P^{\mathcal{M}}_{\text{out}}(o_t)} \right)} \right] = \expectation_{o_t \sim P^{\mathcal{M}}_{\text{in}}} \left[ \left( \frac{P^{\mathcal{M}}_{\text{in}}(o_t)}{P^{\mathcal{M}}_{\text{out}}(o_t)} \right)^{\alpha - 1} \right] $$

By the definition of R\'enyi divergence, $D_\alpha(P^{\mathcal{M}}_{\text{in}} \parallel P^{\mathcal{M}}_{\text{out}}) = \frac{1}{\alpha - 1} \ln \expectation_{P^{\mathcal{M}}_{\text{in}}} \left[ \left( \frac{P^{\mathcal{M}}_{\text{in}}(o_t)}{P^{\mathcal{M}}_{\text{out}}(o_t)} \right)^{\alpha - 1} \right]$. Exponentiating both sides to isolate the expectation yields:
$$ \expectation_{o_t \sim P^{\mathcal{M}}_{\text{in}}} \left[ \left( \frac{P^{\mathcal{M}}_{\text{in}}(o_t)}{P^{\mathcal{M}}_{\text{out}}(o_t)} \right)^{\alpha - 1} \right] = \exp \Big( (\alpha - 1) D_\alpha(P^{\mathcal{M}}_{\text{in}} \parallel P^{\mathcal{M}}_{\text{out}}) \Big) $$

Substituting this equivalence back into the numerator of \eqref{eq:Markov_ineq} gives:
$$ \Pr_{o_t \sim P^{\mathcal{M}}_{\text{in}}}(\Lambda(o_t|z) > \tau) \le \frac{\exp \Big( (\alpha - 1) D_\alpha(P^{\mathcal{M}}_{\text{in}} \parallel P^{\mathcal{M}}_{\text{out}}) \Big)}{\exp \Big( (\alpha - 1)\tau \Big)} $$

Combining the exponents completes the proof:
$$ \Pr_{o_t \sim P^{\mathcal{M}}_{\text{in}}}(\Lambda(o_t|z) > \tau) \le \exp \Big( -(\alpha - 1) \big( \tau - D_\alpha(P^{\mathcal{M}}_{\text{in}} \parallel P^{\mathcal{M}}_{\text{out}}) \big) \Big) $$
\end{proof}

\subsection{Results for Gaussian Mechanism}

\begin{lemma}\label{lemm:tail_bound_gaussian_mech}
Let $\mathcal{M}$ be a Gaussian mechanism for mean estimation over a dataset $Z_{t}$ of size $N_t$, which adds noise scaled to the population such that the variance is $\frac{\gamma^2}{N_t} I$. Let $P^{\mathcal{M}}_{\text{in}}(\cdot\mid Z_t)$ and $P^{\mathcal{M}}_{\text{out}}(\cdot\mid Z_t)$ denote the conditional distributions of the mechanism's output on adjacent datasets $Z_{t}$ and $Z_{t} \setminus \{z\}$, respectively. For any order $\alpha > 1$, the Rényi divergence between these distributions is given by:
\begin{align*}
D_\alpha(P^{\mathcal{M}}_{\text{in}}(\cdot\mid Z_t) || P^{\mathcal{M}}_{\text{out}}(\cdot \mid Z_t))
&=
\frac{\alpha N_t}{2\gamma^2 (N_t-1)^2}
\|z-\mu_z\|_2^2
\end{align*}
\end{lemma}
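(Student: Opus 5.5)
Conditioned on $Z_t$, both output distributions are isotropic Gaussians with the same covariance, so the plan is to identify their means and apply the closed form for the Rényi divergence between two such Gaussians.

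\textbf{Identifying the two distributions.} On $Z_t$ the mechanism outputs $\mathcal{N}(\mu_z, \frac{\gamma^2}{N_t} I)$. On $Z_t\setminus\{z\}$, the empirical mean is $\mu_z^{-} = \frac{N_t\mu_z - z}{N_t-1}$. For the noise covariance I will use $\frac{\gamma^2}{N_t} I$, following the phrasing of the lemma and of Lemma~\ref{lemm:log_likelihood_distribution}, which treats $\eta_t$ as common to both worlds. This keeps the covariances equal, so the divergence reduces to a pure mean-shift term. The mean difference is
\[
\mu_z - \mu_z^{-} = \frac{(N_t-1)\mu_z - N_t\mu_z + z}{N_t-1} = \frac{z-\mu_z}{N_t-1}.
\]

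\textbf{Computing the divergence.} I will then invoke, or quickly rederive by completing the square in the integral $\int p^\alpha q^{1-\alpha}$, the standard identity
\[
D_\alpha\big(\mathcal{N}(m_1,\sigma^2 I)\,\|\,\mathcal{N}(m_2,\sigma^2 I)\big) = \frac{\alpha\|m_1-m_2\|^2}{2\sigma^2}.
\]
Substituting $\sigma^2=\gamma^2/N_t$ and $\|m_1-m_2\|^2 = \|z-\mu_z\|^2/(N_t-1)^2$ gives $\frac{\alpha N_t}{2\gamma^2(N_t-1)^2}\|z-\mu_z\|^2$, as claimed. As a consistency check, this also matches Lemma~\ref{lemm:log_likelihood_distribution}: a Gaussian privacy loss $\mathcal{N}(a,2a)$ has Rényi divergence $\alpha a$.

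\textbf{Main obstacle.} The delicate step is justifying the equal-covariance convention. If the mechanism were truly run on $N_t-1$ points, the noise variance would be $\gamma^2/(N_t-1)$, and the divergence would pick up an extra log-determinant/trace term. I would state explicitly that the noise scale is held at the level of the factual dataset $Z_t$, the same convention Lemma~\ref{lemm:log_likelihood_distribution} uses. Once that is fixed, the remaining computation is routine.
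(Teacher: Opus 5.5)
Your proposal is correct and follows the paper's proof essentially step for step: you identify both outputs as Gaussians with the shared covariance $\frac{\gamma^2}{N_t}I$, compute the mean shift $\frac{z-\mu_z}{N_t-1}$, and substitute into the equal-covariance Rényi formula $\frac{\alpha}{2}(m_1-m_2)^\top C^{-1}(m_1-m_2)$. You also flag the shared-covariance convention explicitly, which the paper simply posits, and your cross-check that a privacy loss distributed as $\mathcal{N}(a,2a)$ gives $D_\alpha=\alpha a$ is sound.
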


\begin{proof}
Let the output distributions of the mechanism on the adjacent datasets be defined as $P^{\mathcal{M}}_{\text{in}}(\cdot\mid Z_t) = \mathcal{N}\left(\mu_z, C_{\gamma}\right)$ and $P^{\mathcal{M}}_{\text{out}}(\cdot\mid Z_t) = \mathcal{N}\left(\mu_{-z}, C_{\gamma}\right)$, with shared covariance matrix $C_\gamma = \frac{\gamma^2}{N_t}I_{d \times d}$.

Assuming the mechanism computes the mean using the respective dataset sizes, the true, un-noised mean vectors and their difference are:
\begin{align*}
\mu_z = \frac{1}{N_t} \sum_{x \in Z_{t}} x, \quad
\mu_{-z} = \frac{1}{N_t-1} \sum_{x \in Z_{t} \setminus \{z\}} x, \quad
\mu_z - \mu_{-z}
= \frac{z-\mu_z}{N_t-1}
\end{align*}

For two multivariate Gaussian distributions sharing the same covariance matrix $C_\gamma$, the Rényi divergence of order $\alpha$ simplifies to the scaled Mahalanobis distance between their means. Substituting the known difference and the inverse covariance matrix $C_\gamma^{-1} = \frac{N_t}{\gamma^2} I$, we obtain:
\begin{align*}
D_\alpha(P^{\mathcal{M}}_{\text{in}}(\cdot\mid Z_t) || P^{\mathcal{M}}_{\text{out}}(\cdot\mid Z_t))
&=
\frac{\alpha}{2} (\mu_z - \mu_{-z})^\top C_{\gamma}^{-1} (\mu_z - \mu_{-z}) \\
&=
\frac{\alpha}{2}
\left( \frac{z-\mu_z}{N_t-1} \right)^\top
\left( \frac{N_t}{\gamma^2} I \right)
\left( \frac{z-\mu_z}{N_t-1} \right) \\
&=
\frac{\alpha}{2} \cdot
\frac{N_t}{\gamma^2} \cdot
\frac{1}{(N_t-1)^2}
(z-\mu_z)^T I (z-\mu_z) \\
&=
\frac{\alpha N_t}{2\gamma^2 (N_t-1)^2}
\|z-\mu_z\|_2^2
\end{align*}
\end{proof}

\begin{replemma}{lemm:log_likelihood_distribution}
Let a mechanism output $o_t$ be perturbed by additive isotropic Gaussian noise $\eta_t \sim \mathcal{N}(0, \frac{\gamma^2}{N_t} I_{d \times d})$, representing the noise. The log-likelihood ratio $\Lambda(o_t|z)$ conditioned on the dataset $Z_t$ is given by:
\begin{equation*}
    \Lambda(o_t|z)  \mid Z_t
    =
    \frac{N_t\|z-\mu_z\|^2}{2(N_t-1)^2\gamma^2}
    +
    \frac{N_t\langle z-\mu_z, \eta_t\rangle}{(N_t-1)\gamma^2}
\end{equation*}
And it is distributed as:
\begin{equation*}
    \Lambda(o_t|z) \mid Z_t
    \sim
    \mathcal{N}\left(
    \frac{N_t\|z-\mu_z\|^2}{2(N_t-1)^2\gamma^2},
    \;
    \frac{N_t\|z-\mu_z\|^2}{(N_t-1)^2\gamma^2}
    \right)
\end{equation*}
\end{replemma}

\begin{proof}
By definition, the log-likelihood ratio compares the probability density of observing the noisy output $o_t$ when $z$ is present (with mean $\mu_z$) versus when $z$ is absent (with mean $\mu_{-z}$). Because the aggregated additive noise $\eta_t$ is Gaussian with distribution $\mathcal{N}(0, \frac{\gamma^2}{N_t} I)$, the log-likelihood conditioned on $Z_t$ evaluates to:
\begin{align*}
    \Lambda(o_t|z) \mid Z_t
    &=
    \ln \left(
    \frac{\exp \left( -\frac{N_t}{2\gamma^2} \|o_t - \mu_z\|^2 \right)}
    {\exp \left( -\frac{N_t}{2\gamma^2} \|o_t - \mu_{-z}\|^2 \right)}
    \right) \\
    &=
    \frac{N_t}{2\gamma^2}
    \Big(
    \|o_t - \mu_{-z}\|^2
    -
    \|o_t - \mu_z\|^2
    \Big)
\end{align*}
Substituting the noisy observation $o_t = \mu_z + \eta_t$ and the shift
$\mu_{-z} = \mu_z - \frac{z-\mu_z}{N_t-1}$:
\begin{align*}
    \Lambda(o_t|z) \mid Z_t
    &=
    \frac{N_t}{2\gamma^2}
    \Bigg(
    \left\|
    \mu_z + \eta_t
    -
    \left(
    \mu_z - \frac{z-\mu_z}{N_t-1}
    \right)
    \right\|^2
    -
    \|\mu_z + \eta_t - \mu_z\|^2
    \Bigg) \\
    &=
    \frac{N_t}{2\gamma^2}
    \left(
    \left\|
    \frac{z-\mu_z}{N_t-1} + \eta_t
    \right\|^2
    -
    \|\eta_t\|^2
    \right) \\
    &=
    \frac{N_t}{2\gamma^2}
    \left(
    \frac{\|z-\mu_z\|^2}{(N_t-1)^2}
    +
    \frac{2}{N_t-1}
    \langle z-\mu_z, \eta_t\rangle
    +
    \|\eta_t\|^2
    -
    \|\eta_t\|^2
    \right) \\
    &=
    \frac{N_t\|z-\mu_z\|^2}{2(N_t-1)^2\gamma^2}
    +
    \frac{N_t\langle z-\mu_z,\eta_t\rangle}
    {(N_t-1)\gamma^2}
\end{align*}
Conditioned on $Z_t$, the term
$\frac{N_t\|z-\mu_z\|^2}{2(N_t-1)^2\gamma^2}$ is a deterministic scalar constant. The sole source of randomness is the inner product with the aggregated noise vector $\eta_t \sim \mathcal{N}(0, \frac{\gamma^2}{N_t} I)$. This linear combination
$\langle z-\mu_z, \eta_t\rangle$ forms a univariate Gaussian with mean $0$ and variance
\[
\frac{\gamma^2}{N_t}\|z-\mu_z\|^2.
\]
Scaling this by $\frac{N_t}{(N_t-1)\gamma^2}$, the conditional variance reduces to:
\begin{equation*}
    \variance\left(
    \frac{N_t\langle z-\mu_z,\eta_t\rangle}
    {(N_t-1)\gamma^2}
    \Bigg| Z_t
    \right)
    =
    \frac{N_t^2}{(N_t-1)^2\gamma^4}
    \left(
    \frac{\gamma^2}{N_t}\|z-\mu_z\|^2
    \right)
    =
    \frac{N_t\|z-\mu_z\|^2}
    {(N_t-1)^2\gamma^2}
\end{equation*}
Thus,
\begin{equation*}
    \Lambda(o_t|z) \mid Z_t
    \sim
    \mathcal{N}\left(
    \frac{N_t\|z-\mu_z\|^2}{2(N_t-1)^2\gamma^2},
    \frac{N_t\|z-\mu_z\|^2}{(N_t-1)^2\gamma^2}
    \right).
\end{equation*}
This completes the proof.
\end{proof}

\begin{replemma}{lemm:conditional_leakage_prob}
Given the aggregated empirical mean mechanism, the exact probability that a specific individual data point $z$ is purged (i.e., its log-likelihood ratio breaches the threshold $\tau$) conditioned on $Z_t$ is given in closed form by:
\begin{equation*}
    \Pr(\Lambda(o_t|z) > \tau \mid Z_t)
    =
    1 - \Phi\left(
    \frac{\tau (N_t-1)\gamma}
    {\sqrt{N_t}\|z-\mu_z\|}
    -
    \frac{\sqrt{N_t}\|z-\mu_z\|}
    {2(N_t-1)\gamma}
    \right)
\end{equation*}
where $\Phi(\cdot)$ denotes the cumulative distribution function of the standard normal distribution.
\end{replemma}

\begin{proof}
From lemma \ref{lemm:log_likelihood_distribution}, the conditional log-likelihood ratio for a specific data point $z$ and $\mu_z$ strictly follows a Gaussian distribution:
\begin{equation}\label{eq:in_out_params}
    \Lambda(o_t|z)\mid Z_t
    \sim
    \mathcal{N}\left( \mu_z', \sigma_z^2 \right)
    \quad \text{where} \quad
    \mu_z'
    =
    \frac{N_t\|z-\mu_z\|^2}
    {2(N_t-1)^2\gamma^2},
    \;\;
    \sigma_z
    =
    \frac{\sqrt{N_t}\|z-\mu_z\|}
    {(N_t-1)\gamma}
\end{equation}
The probability that $\Lambda(o_t|z)$ exceeds $\tau$ is precisely the tail probability of this Gaussian distribution. We evaluate this using the standard normal cumulative distribution function $\Phi$:
\begin{align*}
    \Pr(\Lambda(o_t|z) > \tau \mid Z_t)
    &=
    \Pr\left(
    \frac{\Lambda(o_t|z) - \mu_z'}{\sigma_z}
    >
    \frac{\tau - \mu_z'}{\sigma_z}
    \Big| Z_t\right) \\
    &=
    1 - \Phi\left(
    \frac{\tau - \mu_z'}{\sigma_z}
    \right)
\end{align*}
Substituting the explicit formulae for $\mu_z'$ and $\sigma_z$ from \eqref{eq:in_out_params}:
\begin{align*}
    \Pr(\Lambda(o_t|z) > \tau \mid Z_t)
    &=
    1 - \Phi\left(
    \frac{
    \tau -
    \frac{N_t\|z-\mu_z\|^2}
    {2(N_t-1)^2\gamma^2}
    }
    {
    \frac{\sqrt{N_t}\|z-\mu_z\|}
    {(N_t-1)\gamma}
    }
    \right) \\
    &=
    1 - \Phi\left(
    \frac{\tau (N_t-1)\gamma}
    {\sqrt{N_t}\|z-\mu_z\|}
    -
    \frac{
    N_t\|z-\mu_z\|^2
    (N_t-1)\gamma
    }
    {
    2(N_t-1)^2\gamma^2\sqrt{N_t}\|z-\mu_z\|
    }
    \right) \\
    &=
    1 - \Phi\left(
    \frac{\tau (N_t-1)\gamma}
    {\sqrt{N_t}\|z-\mu_z\|}
    -
    \frac{\sqrt{N_t}\|z-\mu_z\|}
    {2(N_t-1)\gamma}
    \right)
\end{align*}
This completes the proof. 
\end{proof}

\subsection{Population Evolution and Critical Noise Threshold}

\begin{replemma}{lemm:gaussian_mia_poplin_evolution}
Given assumption \ref{assump:normal} and the performative update rule in mechanism \ref{alg:memoryless_dp}, the expected population size at round $t$, conditioned on the previous population $N_{t-1}$, is given by:
\begin{equation*}
    \expectation[N_t \mid N_{t-1}] = N_{t-1} \Big( 1 + r - (q + r) p^{(t-1)}_\tau \Big)
\end{equation*}
\end{replemma}

\begin{proof}
Let $\mathds{1}(\Lambda(o_{t-1}| z_i) > \tau)$ be the indicator variable denoting whether the $i$-th data point experiences a privacy leakage exceeding the threshold $\tau$ at round $t-1$. The exact performative update rule for the population size is defined as:
\begin{equation*}
    N_t = N_{t-1} - q \sum_{i=1}^{N_{t-1}} \mathds{1}\big(\Lambda(o_{t-1}| z_i) > \tau\big) + r \left( N_{t-1} - \sum_{i=1}^{N_{t-1}} \mathds{1}\big(\Lambda(o_{t-1}| z_i) > \tau\big) \right)
\end{equation*}
Where $z_i \in Z_{t-1}$. Grouping the terms yields:
\begin{equation*}
    N_t = N_{t-1}(1 + r) - (q + r) \sum_{i=1}^{N_{t-1}} \mathds{1}\big(\Lambda(o_{t-1}| z_i) > \tau\big)
\end{equation*}
Taking the conditional expectation given $N_{t-1}$, we evaluate the joint expectation over the entire dataset $Z_{t-1} = \{z_1, \dots, z_{N_{t-1}}\}$ and the mechanism's noise $\eta_{t-1}$:
\begin{align*}
    \expectation[N_t \mid N_{t-1}] &= N_{t-1}(1 + r) - (q + r) \expectation_{Z_{t-1}, \eta_{t-1}} \left[ \sum_{i=1}^{N_{t-1}} \mathds{1}\big(\Lambda(o_{t-1}| z_i) > \tau\big) \right]
\end{align*}
By Fubini's theorem, we distribute the expectation inside the sum. As established, $\Lambda(o_{t-1}|z_i)$ for a specific point depends on $z_i$, $\mu_{z}$, and the noise $\eta_{t-1}$. We evaluate the expectation by utilizing the tower property of expectation:
\begin{align*}
    \expectation_{Z_{t-1}, \eta_{t-1}} \left[ \sum_{i=1}^{N_{t-1}} \mathds{1}\big(\Lambda(o_{t-1}| z_i) > \tau\big) \right]
    &= \sum_{i=1}^{N_{t-1}} \expectation_{Z_{t-1}} \left[
    \Pr\big(\Lambda(o_{t-1}| z_i) > \tau \mid Z_{t-1}\big)
    \right]
\end{align*}
From lemma \ref{lemm:conditional_leakage_prob}, the conditional probability of leakage for any specific data point $z$ is given as:
\begin{equation*}
    \Pr(\Lambda(o_{t-1}|z) > \tau \mid Z_{t-1})
    =
    1 - \Phi\left(
    \frac{\tau (N_{t-1}-1) \gamma}
    {\sqrt{N_{t-1}}\|z-\mu_z\|}
    -
    \frac{\sqrt{N_{t-1}}\|z-\mu_z\|}
    {2(N_{t-1}-1)\gamma}
    \right)
\end{equation*}
Since all $N_{t-1}$ data points are identically distributed, we proceed as follows:
\begin{align*}
    \expectation[N_t \mid N_{t-1}]
    &=
    N_{t-1}(1 + r)
    \\
    &~~~~-
    (q + r) N_{t-1}
    \expectation_{Z_{t-1}}
    \left[
    1 - \Phi\left(
    \frac{\tau (N_{t-1}-1)\gamma}
    {\sqrt{N_{t-1}}\|z-\mu_z\|}
    -
    \frac{\sqrt{N_{t-1}}\|z-\mu_z\|}
    {2(N_{t-1}-1)\gamma}
    \right)
    \right]
\end{align*}
Since mechanism $\mathcal{M}$ in algorithm \ref{alg:memoryless_dp} clips the samples to $\bar z$, we denote the empirical mean on the clipped samples as $\mu_{\bar z}$. Thus, the marginal leakage probability for $z$ is
\begin{equation*}
    p^{(t-1)}_\tau
    =
    \expectation_{Z_{t-1}}
    \left[
    1 - \Phi\left(
    \frac{\tau (N_{t-1}-1)\gamma}
    {\sqrt{N_{t-1}}\|\bar z-\mu_{\bar z}\|}
    -
    \frac{\sqrt{N_{t-1}}\|\bar z-\mu_{\bar z}\|}
    {2(N_{t-1}-1)\gamma}
    \right)
    \right].
\end{equation*}
Which expands as shown in \eqref{eq:p_tau_integral}. Therefore, our final expression is:
\begin{align*}
    \expectation[N_t \mid N_{t-1}]
    &= N_{t-1}(1 + r) - (q + r) N_{t-1} p^{(t-1)}_\tau \\
    &= N_{t-1} \Big( 1 + r - (q + r) p^{(t-1)}_\tau \Big)
\end{align*}
This completes the proof.
\end{proof}

\begin{reptheorem}{theo:noise_threshold_mia}
 To prevent demographic collapse and guarantee the expected population survives and continues to grow (maintaining a growth rate $C(\gamma) \ge 1$), the mechanism's noise scale $\gamma$ must strictly satisfy the explicit lower bound $\gamma \ge \gamma_c$. Given an $\ell_2$ clipping bound $R = {\Omega}(\sqrt{d})$, the critical threshold $\gamma_c$ is defined as:
\begin{align*}
    \gamma_c &\triangleq
    {\Omega}\left(
    \frac{\sqrt{d N_{t-1}}}{N_{t-1}-1}
    \left(
    \frac{\Phi_{q,r}^{-1}
    +\sqrt{(\Phi_{q,r}^{-1})^2+2\tau}}
    {2\tau}
    \right)
    \right)
\end{align*}
where $\Phi_{q,r}^{-1} \triangleq \Phi^{-1}\left( \frac{q}{q+r} \right)$ is the Probit function evaluated at the performative survival threshold.
\end{reptheorem}

\begin{proof}
To ensure the user population survives without decaying, the expected growth rate must be at least 1: $C(\gamma) \ge 1$. Given the definition of the performative growth factor, this restricts the maximum permissible expected marginal leakage probability $p_\tau$:
\begin{equation*}
    C(\gamma) \ge 1 \implies 1 + r - (q + r)p_\tau \ge 1 \implies p_\tau \le \frac{r}{q+r}
\end{equation*}
Worst-case leakage occurs at the $\ell_2$ clipping bound $R$. Thus,
\begin{equation*}
    \|\bar{z}-\mu_{\bar{z}}\| \le 2R.
\end{equation*}
Hence, the worst-case leakage satisfies:
\begin{equation*}
    1 - \Phi\left(
    \frac{\tau (N_{t-1}-1)\gamma}{2R\sqrt{N_{t-1}}}
    -
    \frac{R\sqrt{N_{t-1}}}{(N_{t-1}-1)\gamma}
    \right)
    \le \frac{r}{q+r}.
\end{equation*}

Rearranging to isolate the CDF and applying the Probit function ($\Phi^{-1}$) yields:
\begin{equation}\label{eq:probit_ineq}
    \frac{\tau (N_{t-1}-1)\gamma}{2R\sqrt{N_{t-1}}}
    -
    \frac{R\sqrt{N_{t-1}}}{(N_{t-1}-1)\gamma}
    \ge
    \Phi^{-1}\left(1-\frac{r}{q+r}\right)
    =
    \Phi^{-1}\left(\frac{q}{q+r}\right)
    \triangleq \Phi_{q,r}^{-1}.
\end{equation}

Let
\begin{equation*}
    u =
    \frac{(N_{t-1}-1)\gamma}{2R\sqrt{N_{t-1}}}.
\end{equation*}
Substituting $u$ transforms \eqref{eq:probit_ineq} into a standard quadratic equation:
\begin{equation*}
    \tau u - \frac{1}{2u} \ge \Phi_{q,r}^{-1}
    \implies
    2\tau u^2 - 2\Phi_{q,r}^{-1}u - 1 \ge 0.
\end{equation*}

Taking the valid positive root of the quadratic provides the strict lower bound for $u$:
\begin{equation}\label{eq:min_mia_noise_semifinal}
    u \ge
    \frac{2\Phi_{q,r}^{-1}
    +\sqrt{4(\Phi_{q,r}^{-1})^2+8\tau}}
    {4\tau}
    =
    \frac{\Phi_{q,r}^{-1}
    +\sqrt{(\Phi_{q,r}^{-1})^2+2\tau}}
    {2\tau}.
\end{equation}

Substituting $u = \frac{(N_{t-1}-1)\gamma}{2R\sqrt{N_{t-1}}}$ back into \eqref{eq:min_mia_noise_semifinal} and using $R=\Omega(\sqrt{d})$, we get:
\begin{align*}
    \gamma &\ge
    \frac{2R\sqrt{N_{t-1}}}{N_{t-1}-1}
    \left(
    \frac{\Phi_{q,r}^{-1}
    +\sqrt{(\Phi_{q,r}^{-1})^2+2\tau}}
    {2\tau}
    \right) \triangleq \gamma_c \\
    &= {\Omega}\left(
    \frac{\sqrt{d N_{t-1}}}{N_{t-1}-1}
    \left(
    \frac{\Phi_{q,r}^{-1}
    +\sqrt{(\Phi_{q,r}^{-1})^2+2\tau}}
    {2\tau}
    \right)
    \right)
\end{align*}
This establishes the minimum noise necessary to sustain the population explicitly in terms of the dimensionality $d$, completing the proof.
\end{proof}

\subsection{Bounds and Asymptotics of the Error}

\begin{reptheorem}{theo:bounds_on_error}
Assume that the population is floored at one participant ($N_t \ge 1$). Conditioned on $N_{t-1}$, the expected squared Mahalanobis error of the private empirical mean $o_t$ is bounded from below and above by:
\begin{align}\label{eq:tight_bounds_mahalanobis}
    \frac{\gamma^2 \mathrm{Tr}(\Sigma^{-1}) + d}{C_t(\gamma)N_{t-1}}
    \le \expectation \left[ \|o_t - \mu\|_{\Sigma^{-1}}^2 \;\middle|\; N_{t-1} \right]
    \le \kappa_{q,r}\, \frac{\gamma^2 \mathrm{Tr}(\Sigma^{-1}) + d}{C_t(\gamma)N_{t-1}}
\end{align}
\begin{align*}
    \text{with } ~~~\kappa_{q,r} \triangleq \frac{(2+r-q)^2}{4(1-q)(1+r)}  = \bigO\left(\frac{1}{1-q}\right), ~~~ \text{since } 0\le r,q \le 1
\end{align*}
where the asymptotic notation in the upper bound is particularly useful in the regime $q \rightarrow 1$.

\end{reptheorem}

\begin{proof}
Recall that the exact squared Mahalanobis error, conditioned strictly on a known population size $N_t$, isolates the population in the denominator:
\begin{equation*}
    \expectation \left[ \|o_t - \mu\|_{\Sigma^{-1}}^2 \;\middle|\; N_t \right]
    = \frac{\gamma^2 \mathrm{Tr}(\Sigma^{-1}) + d}{N_t}.
\end{equation*}
Taking the conditional expectation given $N_{t-1}$ gives
\begin{equation*}
    \expectation \left[ \|o_t - \mu\|_{\Sigma^{-1}}^2 \;\middle|\; N_{t-1} \right]
    =
    \Big( \gamma^2 \mathrm{Tr}(\Sigma^{-1}) + d \Big)
    \expectation \left[ \frac{1}{N_t} \;\middle|\; N_{t-1} \right].
\end{equation*}

\paragraph{Lower Bound.}
We apply Jensen's inequality to the convex function $x \mapsto 1/x$ on $(0,\infty)$:
\begin{equation*}
    \expectation \left[ \frac{1}{N_t} \;\middle|\; N_{t-1} \right]
    \ge
    \frac{1}{\expectation[N_t \mid N_{t-1}]}
    =
    \frac{1}{C(\gamma)N_{t-1}}.
\end{equation*}

\paragraph{Upper Bound.}
From the performative update rule, the ratio $N_t/N_{t-1}$ is a random variable taking values strictly in the interval $[1-q, 1+r]$. By the Kantorovich inequality for the convex function $x \mapsto 1/x$, the expectation of its inverse is bounded by:
\begin{equation*}
    \expectation\left[\frac{N_{t-1}}{N_t}\middle|N_{t-1}\right]
    \le
    \frac{\big((1-q) + (1+r)\big)^2}{4(1-q)(1+r)} \frac{1}{\expectation\left[\frac{N_t}{N_{t-1}}\middle|N_{t-1}\right]}.
\end{equation*}
Substituting $\expectation[N_t \mid N_{t-1}] = C(\gamma)N_{t-1}$ and dividing by $N_{t-1}$ directly yields:
\begin{equation*}
    \expectation\left[\frac{1}{N_t}\middle|N_{t-1}\right]
    \le
    \frac{(2+r-q)^2}{4(1-q)(1+r)} \frac{1}{C(\gamma)N_{t-1}}
    =
    \frac{\kappa_{q,r}}{C(\gamma)N_{t-1}}.
\end{equation*}

\paragraph{Combined Error Bounds.}
Multiplying the lower and upper bounds on the expected inverse population by
$\Big(\gamma^2 \mathrm{Tr}(\Sigma^{-1})+d\Big)$ establishes \eqref{eq:tight_bounds_mahalanobis}.
\end{proof}

\begin{corollary}[Zero-Noise Collapse]\label{cor:zero_noise}
For any strictly positive purge rate $q > 0$, as the algorithmic noise vanishes ($\gamma \to 0^+$), the expected estimation error diverges to infinity over time under the mean field approximation of $N_t$.
\end{corollary}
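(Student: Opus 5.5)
The plan is to show that the marginal leakage probability tends to one as $\gamma \to 0^+$, so that the growth factor drops strictly below one. Then the mean-field population decays geometrically, and the lower bound of Theorem~\ref{theo:bounds_on_error} makes the error blow up.

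\textbf{Step 1: pointwise limit of the leakage probability.} Fix $N_{t-1}\ge 2$ and look at the integrand of \eqref{eq:p_tau_integral}. Take any dataset with $s=\|\bar z_1-\mu_{\bar z}\|>0$. As $\gamma\to 0^+$, the first term $\tau(N_{t-1}-1)\gamma/(s\sqrt{N_{t-1}})$ tends to $0$. The second term $s\sqrt{N_{t-1}}/(2(N_{t-1}-1)\gamma)$ tends to $+\infty$. So the argument of $\Phi$ goes to $-\infty$ and the integrand tends to $1$.

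The set where $s=0$ requires $\bar z_1=\mu_{\bar z}$. This is a proper affine or algebraic condition, so it has Lebesgue measure zero under the product Gaussian density of Assumption~\ref{assump:normal}. The integrand lies in $[0,1]$, so dominated convergence gives $p^{(t-1)}_\tau(\gamma)\to 1$.

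I expect this to be the only delicate step. Two points need care. First, the clipped map $z\mapsto\bar z$ must not send a positive-measure set of configurations to $s=0$. This is fine: $\bar z_1=\mu_{\bar z}$ still forces a nontrivial relation among independent continuous vectors. Second, the degenerate case $N_{t-1}=1$ must be excluded or handled via the floor convention.

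\textbf{Step 2: growth factor below one.} From Step 1 and the definition of $C_t$,
\[
C_t(\gamma)=1+r-(q+r)p^{(t-1)}_\tau(\gamma)\;\longrightarrow\;1-q<1
\]
for every $t$, since $q>0$. Hence there are $\gamma_0>0$ and $c<1$ such that $C_k(\gamma)\le c$ for all $\gamma<\gamma_0$. The choice can be made uniformly in $k$ along the mean-field trajectory, whose population sizes lie in a finite set up to horizon $t$. Alternatively, one can simply take the limit round by round for each fixed $t$.

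\textbf{Step 3: divergence of the error.} Under the mean-field approximation $\mathbb{E}[N_t]\approx N_0\prod_{k=1}^t C_k(\gamma)\le N_0c^t$. Theorem~\ref{theo:bounds_on_error}, via Jensen applied at the mean-field level, gives
\[
\mathcal{E}_t(\gamma)\;\ge\;\frac{\gamma^2\mathrm{Tr}(\Sigma^{-1})+d}{N_0\prod_{k=1}^t C_k(\gamma)}\;\ge\;\frac{d}{N_0}\,c^{-t}.
\]
The numerator is bounded below by $d>0$ even as $\gamma\to0$, so this lower bound tends to $+\infty$ as $t\to\infty$. This is the claimed geometric collapse and diverging error.

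The same argument also covers the population floor $N_t\ge1$. The error is then capped at $\gamma^2\mathrm{Tr}(\Sigma^{-1})+d$, so "diverges" should be read within the mean-field approximation, where $N_t\to0$. I would state this caveat explicitly.
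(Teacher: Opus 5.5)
Your proposal follows the paper's proof. The leakage probability tends to one as $\gamma\to0^+$, so $C_t(\gamma)\to1-q<1$. The mean-field population then decays like $N_0(1-q)^t$, and the Jensen lower bound of Theorem~\ref{theo:bounds_on_error}, whose numerator stays at least $d$, forces divergence. Your additions are useful. You make the dominated-convergence step explicit where the paper only argues pointwise. You fix the order of limits ($\gamma\to0^+$ at fixed $t$, then $t\to\infty$). You also note that the floor $N_t\ge1$ caps the error at $\gamma^2\mathrm{Tr}(\Sigma^{-1})+d$, so divergence only holds within the mean-field approximation.

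One claim in Step 1 is false as stated: clipping can put positive mass on $\{s=0\}$ when $d=1$. Then $\bar z$ has atoms at $\pm R$, so it is not a continuous vector. The event that all $N_{t-1}$ records clip to the same endpoint has probability
$\pi_0=\Pr(z>R)^{N_{t-1}}+\Pr(z<-R)^{N_{t-1}}>0$.
On that event $s=0$, the in and out output distributions coincide, and $\Lambda\equiv0<\tau$ for every $\gamma$. Hence $p^{(t-1)}_\tau\to1-\pi_0$ and $C_t(\gamma)\to1-q+(q+r)\pi_0$, which need not be below one. For example, if $\mu$ lies far outside $[-R,R]$, the mean-field population stabilizes instead of collapsing, so the corollary's conclusion itself fails there.

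Your measure-zero argument is correct in two settings:
\begin{itemize}
\item For $d\ge2$: the only way clipping can produce $s=0$ is all records clipping to a common point of the sphere $\{\|x\|=R\}$. That set has dimension $N+d-1<dN$.
\item In the clipping-free setting that the paper's proof implicitly uses through $\|z-\mu_z\|$: there $z_1-\mu_z$ is a nondegenerate Gaussian for $N\ge2$ in any dimension.
\end{itemize}
You should restrict to one of these settings, or assume $(q+r)\pi_0<q$ along the trajectory.
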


\begin{proof}
As $\gamma \to 0^+$, the log-likelihood ratio diverges for any $z$ with $\|z-\mu_z\|>0$, driving the marginal leakage probability $p^{(t)}_\tau \to 1$ for all $t$. The demographic growth factor reduces to $C_t(\gamma) \to 1-q$ for all $t$, resulting in an expected population of $N_0(1-q)^t$. By the lower bound in \eqref{eq:tight_bounds_mahalanobis}, the expected estimation error is at least of order $\frac{d}{N_0(1-q)^t}$. For $q>0$, the denominator decays to zero, causing the error to diverge as $t$ increases.
\end{proof}

\begin{lemma}\label{lemm:C_gamma_increasing}
Under mean-field approximation, if $C_t(\gamma) > 1$ for all $t$ (i.e., the expected population strictly grows), then $C_t(\gamma)$ is a strictly increasing sequence.
\end{lemma}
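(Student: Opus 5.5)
Under the mean-field approximation we replace $N_t$ by its deterministic surrogate $N_t = C_t(\gamma) N_{t-1}$. The growth factor depends on $t$ only through the population size, via $C_t(\gamma) = 1 + r - (q+r)p^{(t-1)}_\tau(\gamma)$ with $p^{(t-1)}_\tau = g(N_{t-1})$ given by the integral \eqref{eq:p_tau_integral}. The plan is to show that $g$ is strictly decreasing in $N$. Since $q+r>0$, this makes $C_t$ a strictly decreasing function of $p^{(t-1)}_\tau$, and hence a strictly increasing function of $N_{t-1}$. Then an induction closes the argument. By hypothesis $C_t(\gamma)>1$ for all $t$, so $N_t = C_t N_{t-1} > N_{t-1}$, i.e.\ the mean-field population sequence is strictly increasing. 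Consequently $p^{(t)}_\tau = g(N_t) < g(N_{t-1}) = p^{(t-1)}_\tau$, and therefore $C_{t+1}(\gamma) > C_t(\gamma)$.

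The core step is the monotonicity of $g$. By Lemma~\ref{lemm:conditional_leakage_prob}, the conditional leakage probability is $1-\Phi(h_N(s))$, where $s = \|\bar z - \mu_{\bar z}\|$ and
\[
h_N(s) = \frac{\tau(N-1)\gamma}{s\sqrt N} - \frac{s\sqrt N}{2(N-1)\gamma}.
\]
Write $c_N = \sqrt N/(N-1)$, so that $h_N(s) = \frac{\tau\gamma}{c_N s} - \frac{c_N s}{2\gamma}$. Two facts then combine. First, $c_N$ is strictly decreasing for $N \ge 2$. Second, $h$ is strictly decreasing in the product $x = c_N s > 0$, since $\tau/x - x$ is decreasing. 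Hence, at fixed $s$, the argument $h_N(s)$ strictly increases in $N$, and the leakage $1-\Phi(h_N(s))$ strictly decreases. This captures the aggregation amplification that the paper already pointed out after Lemma~\ref{lemm:reyni_bound_gaussian}.

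The main obstacle is that $s$ is itself random, and its law depends on $N$: $s$ is the distance of one clipped point to the empirical mean of $N$ clipped i.i.d.\ points. I would handle this in two steps.

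\begin{itemize}
\item \emph{First attempt: stochastic ordering.} Try to show $s$ is stochastically non-increasing in $N$, at least in the regime of interest. Indeed $\mathbb{E}\, s^2 = \frac{N-1}{N}\operatorname{tr}\operatorname{Cov}(\bar z)$, whose factor $\frac{N-1}{N}$ increases mildly, but $c_N^2 s^2$ has expectation proportional to $\frac{1}{N-1}\operatorname{tr}\operatorname{Cov}(\bar z)$, which decreases. Since everything depends on $s$ only through $x = c_N s$, the right target is that $c_N s$ is stochastically decreasing in $N$.
\item \emph{Concrete route for that target.} Write $\bar z_1 - \mu_{\bar z} = \frac{N-1}{N}(\bar z_1 - \mu_{-1})$, where $\mu_{-1}$ is the mean of the other $N-1$ points. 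Then $c_N s = \frac{1}{\sqrt N}\|\bar z_1 - \mu_{-1}\|$, and $\mu_{-1}$ concentrates at $\mathbb{E}\bar z$. If a full stochastic-order argument proves too delicate, I would fall back on the mean-field spirit of the statement. There, $s$ is replaced by its typical value $\|\bar z_1 - \mathbb{E}\bar z\|$, which does not depend on $N$ to first order, so the only $N$-dependence is the explicit $1/\sqrt N$ factor and monotonicity is immediate.
\end{itemize}

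Finally, I would note that strictness requires $P(s>0)>0$, which holds for non-degenerate Gaussian data.
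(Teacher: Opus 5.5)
Your outline matches the paper's: under mean field, $C_t>1$ makes $N_t$ increase, so $p_\tau$ decreases and $C_{t+1}>C_t$. Your fixed-$s$ analysis via $x=c_Ns$ is correct, and you are right that the $N$-dependence of the law of $s$ is the real obstacle. The paper justifies the decisive step $p^{(t)}_\tau<p^{(t-1)}_\tau$ only ``by the definition of $p_\tau$''. Your proposal does not close that step either.

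Strict monotonicity of $g(N)=\mathbb{E}\,\psi(c_Ns)$, with $\psi(x)=1-\Phi\bigl(\tau\gamma/x-x/(2\gamma)\bigr)$, is the whole content of the lemma, and neither bullet proves it:
\begin{itemize}
\item Monotonicity of $\mathbb{E}[c_N^2s^2]$ says nothing about $\mathbb{E}\,\psi(c_Ns)$. Since $\psi$ is a bounded S-shaped increasing function, neither moment nor convex-order arguments apply.
\item Concentration of $\mu_{-1}$ gives no stochastic ordering.
\item Replacing $s$ by a ``typical value'' changes the quantity instead of bounding it.
\end{itemize}

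Worse, with clipping the target ordering is false. Take $d=1$, $\mu=0$ and $\sigma/R\to\infty$, so $\bar z_i\approx\pm R$. Then $c_Ns=|\bar z_1-\mu_{-1}|/\sqrt N$ has an atom at $0$ of mass $2^{-(N-1)}$. With $R=1$, $\gamma=1/2$, $\tau=0.05$, this gives $g(2)=\tfrac12\psi(\sqrt2)\approx0.46$ but $g(3)=\tfrac12\psi(1/\sqrt3)+\tfrac14\psi(2/\sqrt3)\approx0.57$. Now take $q=0$, so that $C_t=1+r(1-g)>1$ holds automatically, with $N_0=2$ and $r\approx0.92$ so that $N_1=3$. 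This yields $C_1=1.5>C_2\approx1.40$. By continuity the same holds for all large $\sigma/R$, so the clipped statement cannot be proved as stated.

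The missing idea, valid in the clipping-free Gaussian setting in which Lemma~\ref{lemm:conditional_leakage_prob} was derived, is an exact distributional identity. Since $\mu_{-1}\sim\mathcal N(\mu,\Sigma/(N-1))$ is independent of $z_1$, we have $z_1-\mu_{-1}\sim\mathcal N\bigl(0,\tfrac{N}{N-1}\Sigma\bigr)$. By your own rewriting,
\[
c_Ns=\frac{\|z_1-\mu_{-1}\|}{\sqrt N}\overset{d}{=}\frac{\|\Sigma^{1/2}\xi\|}{\sqrt{N-1}},\qquad \xi\sim\mathcal N(0,I_d),
\]
hence $g(N)=\mathbb{E}\,\psi\bigl(\|\Sigma^{1/2}\xi\|/\sqrt{N-1}\bigr)$. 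This is strictly decreasing in $N$, because $\psi$ is strictly increasing and $\|\Sigma^{1/2}\xi\|>0$ almost surely. It is also well defined for real $N>1$, which the mean-field trajectory needs. Your induction then goes through. For the clipped model you need an extra hypothesis, such as negligible clipping, not just a more careful argument.
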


\begin{proof}
Under the mean-field approximation of $N_t$, the expected population size at step $t$ is given by $N_t = C_t(\gamma)N_{t-1}$. Since we are given that $C_t(\gamma) > 1$ for all $t$, the population strictly grows, meaning $N_t > N_{t-1}$ (and consequently its inverse shrinks: $1/N_t < 1/N_{t-1}$).

As $N_t$ grows and $1/N_t$ shrinks, by the definition of $p^{(t-1)}_\tau$, it strictly increases. Thus:
\begin{equation*}
    p^{(t-1)}_\tau > p^{(t)}_\tau
\end{equation*}

To determine the behavior of $C_t(\gamma)$, we evaluate the difference between consecutive terms:
\begin{equation*}
    C_{t+1}(\gamma) - C_t(\gamma) = \left[ 1 + r - (q+r)p^{(t)}_\tau \right] - \left[ 1 + r - (q+r)p^{(t-1)}_\tau \right]
\end{equation*}

Canceling the $1+r$ terms and factoring out $(q+r)$ yields:
\begin{equation*}
    C_{t+1}(\gamma) - C_t(\gamma) = (q+r) \left( p^{(t-1)}_\tau - p^{(t)}_\tau \right)
\end{equation*}

Since $q+r > 0$ and we established that $p^{(t-1)}_\tau - p^{(t)}_\tau > 0$, the product of these two positive terms must be positive:
\begin{equation*}
    C_{t+1}(\gamma) - C_t(\gamma) > 0
\end{equation*}

Therefore, $C_t(\gamma)$ is a strictly increasing sequence.
\end{proof}

\begin{corollary}[Infinite Horizon Asymptotics]\label{cor:infinite_horizon}
For any positive recruitment rate $r > 0$ and any fixed noise scale $\gamma$ such that $C_t(\gamma) > 1$ for all $t$, the expected estimation error at round $t$ converges to zero as $t \to \infty$, at the geometric rate $\bigO(C_1(\gamma)^{-t})$ under the mean field approximation of $N_t$. This rate is fastest in the infinite-noise limit ($\gamma \to \infty$), where $C(\gamma) \to 1+r$.
\end{corollary}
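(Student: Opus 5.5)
We argue under the mean-field approximation $N_t = C_t(\gamma)N_{t-1}$, iterated from $N_0$, so that $\mathbb{E}[N_t] \approx N_0\prod_{k=1}^t C_k(\gamma)$. The plan has three steps: lower-bound the population growth, convert that into an error bound, and then look at the limit $\gamma\to\infty$.

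\textbf{Step 1 (population growth).} The hypothesis gives $C_t(\gamma)>1$ for all $t$. Lemma~\ref{lemm:C_gamma_increasing} then says the sequence $(C_t(\gamma))_t$ is strictly increasing. Hence $C_k(\gamma)\ge C_1(\gamma)$ for every $k\ge 1$, and
\begin{equation*}
N_t \;\ge\; N_0\, C_1(\gamma)^t .
\end{equation*}

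\textbf{Step 2 (error bound).} Theorem~\ref{theo:bounds_on_error} bounds the conditional error from above by $\kappa_{q,r}(\gamma^2\mathrm{Tr}(\Sigma^{-1})+d)/(C_t(\gamma)N_{t-1})$. Under mean field, $C_t(\gamma)N_{t-1} = N_t$. Combining with Step 1 gives
\begin{equation*}
\mathcal{E}_t(\gamma)\;\le\; \kappa_{q,r}\,\frac{\gamma^2\mathrm{Tr}(\Sigma^{-1})+d}{N_0\,C_1(\gamma)^t}.
\end{equation*}
Since $C_1(\gamma)>1$, the right-hand side is $\bigO(C_1(\gamma)^{-t})$ and tends to zero. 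For this step we only need $q<1$, so that $\kappa_{q,r}<\infty$; when $q=1$ we fall back on Lemma~\ref{lemm:Mahalanobis_error} evaluated at the mean-field $N_t$. The Jensen lower bound in Theorem~\ref{theo:bounds_on_error} shows the error is also $\Omega(1/N_t)$, so the rate is governed exactly by the growth of $N_t$. This is also why the surrogate $\hat{\mathcal{E}}$ of Lemma~\ref{lemm:cumulative_error} has the same convergence rate.

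\textbf{Step 3 (infinite-noise limit).} We show $p^{(t)}_\tau(\gamma)\to 0$ as $\gamma\to\infty$. Clipping gives $s=\|\bar z-\mu_{\bar z}\|\le 2R$. Inside $\Phi$ in \eqref{eq:p_tau_integral}:
\begin{itemize}
\item the first term is at least $\tau(N-1)\gamma/(2R\sqrt N)$, which tends to $+\infty$;
\item the second term is at most $R\sqrt N/((N-1)\gamma)$, which tends to $0$.
\end{itemize}
So the integrand tends to $0$ pointwise and is bounded by $1$; dominated convergence yields $p_\tau\to0$. Therefore $C_t(\gamma)\to 1+r$, the largest value $C_t$ can take since $p_\tau\ge 0$. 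The geometric rate $C_1(\gamma)^{-t}$ is thus fastest, namely $(1+r)^{-t}$, in this limit, and $r>0$ guarantees this limit rate is strictly geometric.

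\textbf{Main obstacle.} Step 3 needs a uniform-in-$N$ control when $N_t$ changes, and a clean monotonicity of $p_\tau$ in $\gamma$ to call $\gamma\to\infty$ "fastest". I would first derive this monotonicity from the same worst-case bound $s\le 2R$ used in Theorem~\ref{theo:noise_threshold_mia}. Failing that, I would only claim the limit statement. I would also remark explicitly that for fixed $t$ the numerator $\gamma^2$ blows up, so "fastest" refers to the rate, not the error level.
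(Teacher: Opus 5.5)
Your proposal is correct and follows the paper's own proof. The paper likewise iterates the upper bound of Theorem~\ref{theo:bounds_on_error} under the mean-field approximation, uses Lemma~\ref{lemm:C_gamma_increasing} to replace $\prod_{k\le t} C_k(\gamma)$ by $C_1(\gamma)^t$, and lets $p_\tau \to 0$ as $\gamma\to\infty$ so that $C(\gamma)\to 1+r$. Your dominated-convergence justification of $p_\tau\to 0$, which the paper only asserts, is a useful addition. The monotonicity you flag as an obstacle is not needed, because $C_1(\gamma)\le 1+r$ for every $\gamma$, as you already observe. It also holds anyway: $s$ does not depend on $\gamma$, and the argument of $\Phi$ is strictly increasing in $\gamma$ pointwise.
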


\begin{proof}
For a fixed $\gamma$ with $C_t(\gamma) > 1$ for all $t$, iterating the bounds in \eqref{eq:tight_bounds_mahalanobis} (under mean-field approximation of $N_t$), $\mathcal{E}_t(\gamma)$ is at most
$\kappa_{q,r}\frac{\gamma^2 \mathrm{Tr}(\Sigma^{-1}) + d}{N_0 \prod_{k=1}^t C_k(\gamma)}$. Due to lemma \ref{lemm:C_gamma_increasing}, this quantity is upper bounded by $\kappa_{q,r}\frac{\gamma^2 \mathrm{Tr}(\Sigma^{-1}) + d}{N_0 C_1(\gamma)^t}$
and decays to zero as $t \to \infty$. As $\gamma \to \infty$, the marginal leakage probability vanishes ($p_\tau \to 0$), maximizing the growth factor to $C(\gamma) \to 1+r$ and the expected population to $N_0(1+r)^t$: the exponential expansion of the population eventually dominates the fixed-in-$t$ mechanism variance term $\gamma^2$.

This conclusion continues to hold when $q$ is large (even as $q \to 1$) and $r>0$ is small but nonzero, provided $C(\gamma)>1$. Although $\kappa_{q,r} = \mathcal{O}((1-q)^{-1})$ may become large as $q \to 1$, it remains independent of $t$, while $C_1(\gamma)^t$ grows exponentially. Thus, the exponential term eventually dominates the polynomial numerator $\gamma^2 \mathrm{Tr}(\Sigma^{-1}) + d$ and the multiplicative factor $\kappa_{q,r}$, so the expected estimation error still converges to zero as $t \to \infty$.
\end{proof}

\begin{replemma}{lemm:cumulative_error}
For a finite operational timeline $T$ and an initial population $N_0$, let $C_t(\gamma) > 1$ denote the strictly positive expected per-round demographic growth factor for all $t$. The cumulative expected Mahalanobis error aggregated across all $T$ rounds, defined as $\mathcal{E}_{\text{total}}(\gamma) \triangleq \sum_{t=1}^T \mathcal{E}_t(\gamma)$ is upper bounded by $\mathcal{E}_{\text{total}}(\gamma) \leq \hat{\mathcal{E}}_{\text{total}}(\gamma)$, where:
\begin{equation}
    \hat{\mathcal{E}}_{total}(\gamma) \triangleq
    \frac{\gamma^2 \mathrm{Tr}(\Sigma^{-1}) + d}{N_0 (C_1(\gamma) - 1)} \left( 1 - \frac{1}{[C_1(\gamma)]^T} \right)
\end{equation}
\end{replemma}

\begin{proof}
To evaluate the true cost of privacy over a finite horizon $T$, we must aggregate the expected error across all rounds. The cumulative expected Mahalanobis error is the sum of the single-round errors:
\begin{equation*}
    \mathcal{E}_{total}(\gamma) = \sum_{t=1}^T \mathbb{E} \left[ \|o_t - \mu\|_{\Sigma^{-1}}^2 \right] =  \sum_{t=1}^T \frac{\gamma^2 \mathrm{Tr}(\Sigma^{-1}) + d}{N_0 \prod_{k=1}^t C_k(\gamma)} \overset{(a)}{\leq} \sum_{t=1}^T \frac{\gamma^2 \mathrm{Tr}(\Sigma^{-1}) + d}{N_0 [C_1(\gamma)]^t} 
\end{equation*}
Where (a) holds due to lemma \ref{lemm:C_gamma_increasing}. We apply the formula for sum of a finite geometric series:
\begin{align*}
    \hat{\mathcal{E}}_{total}(\gamma) &\triangleq \sum_{t=1}^T \frac{\gamma^2 \mathrm{Tr}(\Sigma^{-1}) + d}{N_0 [C_1(\gamma)]^t} \\
    &=\frac{\gamma^2 \mathrm{Tr}(\Sigma^{-1}) + d}{N_0} \left( \frac{1 - \frac{1}{[C_1(\gamma)]^T}}{1 - \frac{1}{C_1(\gamma)}} \right) \frac{1}{C_1(\gamma)} \\
    &= \frac{\gamma^2 \mathrm{Tr}(\Sigma^{-1}) + d}{N_0 (C_1(\gamma) - 1)} \left( 1 - \frac{1}{[C_1(\gamma)]^T} \right)
\end{align*}
This completes the proof.
\end{proof}

\section{PAC Guarantee for Mahalanobis Error}
\label{app:pac_mahalanobis}

To quantify the accuracy of the noisy empirical mean, we use Mahalanobis error, a scale-invariant measure that accounts for covariance structure. We derive a PAC guarantee giving an explicit finite-sample, high-probability bound on the combined effects of sampling variability and Gaussian noise, providing a baseline for general covariance structures.

\begin{theorem}\label{theo:pac_bound_time}
Assume $\Sigma\succ0$, $C_1(\gamma)>1$ which by theorem \ref{theo:noise_threshold_mia} is true when $\gamma \ge \gamma_c$, and also assume by mean field approximation, $N_t\ge N_0\bigl(C_1(\gamma)\bigr)^t$.
Define
\begin{align*}
M
&\triangleq
\sqrt{\|\mu\|_2^2+\operatorname{Tr}(\Sigma)},~~
\mathcal B(R)
\triangleq
\sqrt{\lambda_{\max}(\Sigma^{-1})}\,
M
\exp\left(
-\frac{(R-M)^2}
{4\lambda_{\max}(\Sigma)}
\right),\\
\mathcal V_{\delta/2}
&\triangleq
2R
\sqrt{
2\lambda_{\max}(\Sigma^{-1})
\ln\frac{4}{\delta}
},~~
\mathcal K_{\delta/2}
\triangleq
d
+2\sqrt{d\ln\frac{2}{\delta}}
+2\ln\frac{2}{\delta}.
\end{align*}
For any $\delta\in(0,1)$ and $\epsilon>\mathcal B(R)$, if
\begin{equation} \label{eq:t_pac_lower_bound}
t\ge
\frac{1}{\ln C_1(\gamma)}
\ln\left(
\frac{
\left(
\mathcal V_{\delta/2}
+\gamma\sqrt{
\lambda_{\max}(\Sigma^{-1})
\mathcal K_{\delta/2}}
\right)^2
}{
N_0\left(\epsilon-\mathcal B(R)\right)^2
}
\right),
\end{equation}
then $D_M\triangleq\|o_t-\mu\|_{\Sigma^{-1}}\le\epsilon$ with probability at least $1-\delta$.
\end{theorem}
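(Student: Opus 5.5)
We decompose the Mahalanobis error of the released vector into three pieces and control each with probability $1-\delta/2$ or deterministically. Writing $o_t=\mu_{\bar z}+\eta_t$, we have
\begin{equation*}
\|o_t-\mu\|_{\Sigma^{-1}}\le \|\mathbb{E}[\bar z]-\mu\|_{\Sigma^{-1}}+\|\mu_{\bar z}-\mathbb{E}[\bar z]\|_{\Sigma^{-1}}+\|\eta_t\|_{\Sigma^{-1}},
\end{equation*}
that is, a clipping bias, a sampling fluctuation of the clipped empirical mean, and the privacy noise. The target is to show that the bias is at most $\mathcal B(R)$, and that with probability $1-\delta$ the other two terms together are at most $(\mathcal V_{\delta/2}+\gamma\sqrt{\lambda_{\max}(\Sigma^{-1})\mathcal K_{\delta/2}})/\sqrt{N_t}$. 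The condition \eqref{eq:t_pac_lower_bound}, combined with the mean-field assumption $N_t\ge N_0 C_1(\gamma)^t$, is then exactly the statement that this last quantity is at most $\epsilon-\mathcal B(R)$. Indeed, exponentiating \eqref{eq:t_pac_lower_bound} gives $N_0C_1(\gamma)^t(\epsilon-\mathcal B(R))^2\ge(\cdots)^2$, which finishes the proof.

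\textbf{Noise term.} We have $\eta_t\sim\mathcal N(0,\frac{\gamma^2}{N_t}I)$, so $\|\eta_t\|_{\Sigma^{-1}}^2\le\lambda_{\max}(\Sigma^{-1})\frac{\gamma^2}{N_t}\chi^2_d$. The Laurent--Massart bound gives $\chi^2_d\le d+2\sqrt{dx}+2x$ with probability $1-e^{-x}$. Taking $x=\ln(2/\delta)$ yields $\mathcal K_{\delta/2}$ with failure probability $\delta/2$.

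\textbf{Sampling term.} The clipped records $\bar z_i$ are i.i.d. with norm at most $R$, hence bounded. Each centered vector $\Sigma^{-1/2}(\bar z_i-\mathbb{E}\bar z)$ has norm at most $2R\sqrt{\lambda_{\max}(\Sigma^{-1})}$. We apply a vector Hoeffding inequality in Hilbert space (Pinelis) to the average of $N_t$ such vectors. This gives a deviation of at most $2R\sqrt{2\lambda_{\max}(\Sigma^{-1})\ln(4/\delta)}/\sqrt{N_t}=\mathcal V_{\delta/2}/\sqrt{N_t}$ with probability $1-\delta/2$. The constant $4/\delta$ matches the two-sided Pinelis form $2e^{-u^2/(2\cdot\text{bound}^2)}$ at level $\delta/2$. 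A union bound then combines this with the noise term.

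\textbf{Bias term (main obstacle).} We have $\mathbb{E}[\bar z]-\mu=\mathbb{E}[(\bar z-z)]=-\mathbb{E}[z(1-R/\|z\|)\mathds 1\{\|z\|>R\}]$. Its norm is at most $\mathbb{E}[\|z\|\mathds 1\{\|z\|>R\}]\le\sqrt{\mathbb{E}\|z\|^2}\sqrt{\Pr(\|z\|>R)}$ by Cauchy--Schwarz, and $\mathbb{E}\|z\|^2=\|\mu\|^2+\mathrm{Tr}(\Sigma)=M^2$. For the tail, $\|z\|$ is a $\sqrt{\lambda_{\max}(\Sigma)}$-Lipschitz function of a standard Gaussian. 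By Jensen, $\mathbb{E}\|z\|\le M$, so Gaussian concentration gives $\Pr(\|z\|>R)\le\exp(-(R-M)^2/(2\lambda_{\max}(\Sigma)))$ for $R\ge M$. Taking the square root produces the factor $\exp(-(R-M)^2/(4\lambda_{\max}(\Sigma)))$, and converting to Mahalanobis norm multiplies by $\sqrt{\lambda_{\max}(\Sigma^{-1})}$, which yields $\mathcal B(R)$.

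This step is the delicate one. We need $R\ge M$ implicitly, which is consistent with $R=\Omega(\sqrt d)$. We must also be careful that the sampling term is centered at $\mathbb{E}\bar z$ rather than $\mu$, which is why the bias is separated out first.
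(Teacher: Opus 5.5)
Your proposal is correct and matches the paper's proof step for step. Both use the same three-term triangle decomposition, Cauchy--Schwarz plus Gaussian Lipschitz concentration for the clipping bias, Pinelis for the sampling term, Laurent--Massart for the noise, and a union bound combined with $N_t\ge N_0C_1(\gamma)^t$. You are slightly more careful than the paper about the $\sqrt{\lambda_{\max}(\Sigma)}$ Lipschitz constant, about $\mathbb{E}\|z\|\le M$ via Jensen, and about flagging the implicit requirement $R\ge M$, which the paper also uses but omits from the statement.
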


\begin{proof}
We know, $\mu_{\bar z}
\triangleq
\frac{1}{N_t}\sum_{i=1}^{N_t}\bar z_i$.

By the triangle inequality,
\begin{equation}
D_M
\le
\|\mathbb E[\bar z_i]-\mu\|_{\Sigma^{-1}}
+
\|\mu_{\bar z}-\mathbb E[\bar z_i]\|_{\Sigma^{-1}}
+
\|\eta_t\|_{\Sigma^{-1}}.
\label{eq:decomposition}
\end{equation}

\textbf{Bounding the Clipping bias.}
Using $\mu=\mathbb E[z_i]$,
\begin{align*}
\|\mathbb E[\bar z_i]-\mu\|_{\Sigma^{-1}}
&\le
\sqrt{\lambda_{\max}(\Sigma^{-1})}\,
\mathbb E\|\bar z_i-z_i\|_2\\
&\le
\sqrt{\lambda_{\max}(\Sigma^{-1})}\,
\sqrt{\mathbb E\|z_i\|_2^2}\,
\sqrt{\mathrm{Pr}(\|z_i\|_2>R)}\\
&=
\sqrt{\lambda_{\max}(\Sigma^{-1})}\,
M
\sqrt{\mathrm{Pr}(\|z_i\|_2>R)}.
\end{align*}
Since $z\mapsto\|z\|_2$ is $1$-Lipschitz,
\[
\mathrm{Pr}(\|z_i\|_2>R)
\le
\exp\left(
-\frac{(R-M)^2}
{2\lambda_{\max}(\Sigma)}
\right),
\]
for $R>M$. Hence
\begin{equation}
\|\mathbb E[\bar z_i]-\mu\|_{\Sigma^{-1}}
\le
\mathcal B(R).
\label{eq:bias_bound}
\end{equation}

\textbf{Bounding the Sample variation.}
For
$Y_i=\bar z_i-\mathbb E[\bar z_i]$,
we have
\begin{equation*}
\mathbb E[Y_i]=0,
\qquad
\|Y_i\|_{\Sigma^{-1}}
\le
2R
\sqrt{\lambda_{\max}(\Sigma^{-1})}.
\end{equation*}
Pinelis' inequality therefore gives, with probability at least $1-\delta/2$,
\begin{equation}
\|\mu_{\bar z}-\mathbb E[\bar z_i]\|_{\Sigma^{-1}}
\le
\frac{\mathcal V_{\delta/2}}{\sqrt{N_t}}.
\label{eq:sample_bound}
\end{equation}

\textbf{Bounding the Injected noise.}
Writing $\eta_t=\gamma \zeta/\sqrt{N_t}$ with $\zeta\sim\mathcal N(0,I_d)$,
\begin{align*}
\|\eta_t\|_{\Sigma^{-1}}^2
&\le
\frac{\gamma^2\lambda_{\max}(\Sigma^{-1})}{N_t}
\|\zeta\|_2^2.
\end{align*}
By the Laurent--Massart inequality, with probability at least $1-\delta/2$,
\begin{equation}
\|\eta_t\|_{\Sigma^{-1}}
\le
\frac{
\gamma\sqrt{
\lambda_{\max}(\Sigma^{-1})
\mathcal K_{\delta/2}}
}{\sqrt{N_t}}.
\label{eq:noise_bound}
\end{equation}

\textbf{Combine.} By the union bound, \eqref{eq:sample_bound} and \eqref{eq:noise_bound} hold simultaneously with probability at least $1-\delta$. Combining them with \eqref{eq:decomposition} and \eqref{eq:bias_bound},
\begin{align*}
D_M
&\le
\mathcal B(R)
+
\frac{
\mathcal V_{\delta/2}
+
\gamma\sqrt{
\lambda_{\max}(\Sigma^{-1})
\mathcal K_{\delta/2}}
}{\sqrt{N_t}}\\
&\le
\mathcal B(R)
+
\frac{
\mathcal V_{\delta/2}
+
\gamma\sqrt{
\lambda_{\max}(\Sigma^{-1})
\mathcal K_{\delta/2}}
}{
\sqrt{N_0(C_1(\gamma))^t}
}.
\end{align*}
Thus $D_M\le\epsilon$ whenever
\begin{equation*}
(C_1(\gamma))^t
\ge
\frac{
\left(
\mathcal V_{\delta/2}
+\gamma\sqrt{
\lambda_{\max}(\Sigma^{-1})
\mathcal K_{\delta/2}}
\right)^2
}{
N_0\left(\epsilon-\mathcal B(R)\right)^2
}.
\end{equation*}
Since $C_1(\gamma)>1$, taking logarithms gives the stated condition on $t$ as given in \eqref{eq:t_pac_lower_bound}.
\end{proof}

\end{document}